\documentclass{article}

\usepackage{preprint}
\usepackage[numbers, square]{natbib}

\usepackage[T1]{fontenc}
\usepackage[utf8]{inputenc}
\usepackage[english]{babel}
\usepackage[bottom]{footmisc}
\usepackage{etoolbox}

\usepackage{mathtools, amssymb, amsthm, bm, eucal, dsfont, amsmath}

\usepackage{etoc}

\usepackage{wrapfig}
\usepackage{graphicx}
\usepackage{subcaption}
\usepackage{tikz}
\usepackage{pgfplots}
\pgfplotsset{compat=1.18}
\usetikzlibrary{arrows.meta}
\usepgfplotslibrary{groupplots}

\usepackage[table]{xcolor}
\usepackage{tabularx}

\usepackage{float}
\usepackage{verbatim}
\usepackage{gensymb}
\usepackage{pifont}
\usepackage{changepage}
\usepackage{empheq}

\definecolor{myblack}{HTML}{000000}
\definecolor{myorange}{HTML}{E69F00}
\definecolor{myblue}{HTML}{56B4E9}
\definecolor{mygreen}{HTML}{009E73}
\definecolor{myyellow}{HTML}{F0E442}
\definecolor{mydarkblue}{HTML}{0072B2}
\definecolor{myred}{HTML}{D55E00} 
\definecolor{mypurple}{HTML}{CC79A7} 
\definecolor{mygrey}{HTML}{999999}

\newcommand{\Tr}[1]{\operatorname{Tr}\left\{#1\right\}}
\newcommand{\R}{\mathbb{R}}
\makeatletter
\newcommand{\norm}{\@ifnextchar[{\norm@i}{\norm@i[]}}
\newcommand{\norm@i}[2][]{%
  \ensuremath{%
    \left\lVert #2 \right\rVert
    \if\relax\detokenize{#1}\relax
    \else_{#1}\fi
  }%
}
\makeatother
\newcommand{\normal}[1]{\mathcal{N}\left( #1 \right)}

\newcommand{\abs}[1]{\left\lvert #1 \right\rvert}
\newcommand{\prob}[1]{\mathbb{P}\left( #1 \right)}

\DeclareMathOperator*{\argmin}{arg\,min}

\newcommand{\empadvpcarisk}[1]{\widehat{\mathcal{R}}_\delta(#1)}
\newcommand{\evat}[2]{{#1}\smash{\raisebox{-0.5ex}{$\Big|_{#2}$}}}
\newcommand{\snorm}[1]{\left\| \smash{#1}\vphantom{\text{X}} \right\|}
\newcommand{\eqstack}[1]{\stackrel{\text{#1}}{=}}
\newcommand{\leqstack}[1]{\stackrel{\text{#1}}{\leq}}

\newcommand{\scaleeq}[2]{\scalebox{#2}{$#1$}}

\makeatletter
\providecommand{\macc@style}{}
\providecommand{\macc@kerna}{}
\DeclareRobustCommand*\widebar[1]{%
  \begingroup
  \def\mathaccent##1##2{%
    \setbox\z@\hbox{$\macc@style{\macc@nucleus}_{}$}%
    \setbox\tw@\hbox{$\macc@style{\macc@nucleus}{}_{}$}%
    \dimen@\wd\tw@ \advance\dimen@-\wd\z@
    \divide\dimen@ 3
    \@tempdima\wd\tw@ \advance\@tempdima-\scriptspace \divide\@tempdima 10
    \advance\dimen@-\@tempdima
    \ifdim\dimen@>\z@ \dimen@0pt\fi
    \rel@kern{0.6}\kern-\dimen@
    \overline{\rel@kern{-0.6}\kern\dimen@\macc@nucleus\rel@kern{0.4}\kern\dimen@}%
    \advance\dimen@0.4\dimexpr\macc@kerna
    \kern-\dimen@
    \let\macc@nucleus\relax 
  }%
  \macc@depth\@ne
  \let\math@bgroup\@empty \let\math@egroup\macc@set@skewchar
  \mathsurround\z@ \frozen@everymath{\mathgroup\macc@group\relax}%
  \macc@set@skewchar\relax
  \let\mathaccentV\macc@nested@a
  \macc@nested@a\relax111{#1}%
  \endgroup
}
\newcommand*\rel@kern[1]{\kern#1\dimexpr\macc@kerna}
\makeatother

\usepackage[shortlabels]{enumitem}
\setlist{itemsep=0pt, leftmargin=15pt, topsep=0pt,}
\setlist[1]{labelindent=\parindent}

\usepackage[framemethod=TikZ]{mdframed}
\usepackage{thmtools}
\mdfsetup{
    skipabove=8pt, 
    skipbelow=0pt,
    hidealllines=false,
    innerleftmargin=10pt,
    innerrightmargin=10pt,
    innertopmargin=5pt,
    innerbottommargin=5pt,
    linewidth=1pt,
    roundcorner=3pt 
}
\declaretheoremstyle[
    mdframed={
        backgroundcolor=white, 
        linecolor=black
    }]{thmstyle}

\declaretheorem[style=thmstyle]{proposition}

\declaretheorem[style=thmstyle]{lemma}

\usepackage[ruled,vlined]{algorithm2e} 
\SetKwInput{KwInput}{input}
\SetKwInput{KwOutput}{output:}
\SetKw{KwInitialize}{init:}
\SetKw{KwRet}{return}
\DontPrintSemicolon
\SetKw{KwTo}{to}
\SetCommentSty{textit}
\SetKwComment{Comment}{\(\triangleright\)\ }{}

\usepackage[colorlinks=true,linkcolor=myblue,citecolor=myblue,urlcolor=myblue]{hyperref}

\title{A Robust Optimization Approach to Sparse Principal Component Analysis}
\author{\vspace{-0.8cm}\\David Vävinggren$^{1}$, Francis Bach$^{2}$,
  André M. H. Teixeira$^{1}$,\\ Dave Zachariah$^{1}$, and Antônio H. Ribeiro$^{1, 3}$\\[0.8em]
  \normalsize $^{1}$Uppsala University, Sweden\\
  \normalsize $^{2}$PSL Research University / INRIA, France\\
  \normalsize $^{3}$Science for Life Laboratory, Sweden
}

\begin{document}
\etocdepthtag.toc{main}
\maketitle

\vspace{-0.7cm}
\begin{abstract}
\noindent While principal component analysis (PCA) is a fundamental tool for dimensionality reduction, its dense representations make it ill-suited for high-dimensional data. Existing methods address this by promoting sparsity through explicit $\ell_1$-penalties, but these are not obvious to tune due to the unsupervised nature of the task. In contrast, we propose Adversarial PCA (AdvPCA), which leverages robust optimization to achieve sparsity by optimizing the reconstruction objective against bounded, worst-case latent space perturbations. We show that this formulation admits a closed-form reduction, leading to a practical iterative algorithm that alternates between adversarial linear regression-style updates for the sparse encoder and orthogonal updates for the decoder. By theoretically characterizing the solution, we derive a data-adaptive parameterization that allows the algorithm to perform effectively out of the box. We validate these claims through numerical experiments on synthetic and real-world genomics data.
\end{abstract}
\vspace{-0.2cm}
\section{Introduction}
Principal component analysis (PCA) is a standard technique for reducing the dimensionality of data \cite{pearson_liii_1901, shlens_tutorial_2014}, and is one of the most common techniques for data compression and data visualization \cite[ch.~10]{deisenroth_mathematics_2020}. For a target dimension $k \leq d$, PCA can be formulated as the reconstruction problem
\begin{equation}\label{eq:pca}
\min_{A, B} \sum_{i=1}^n \norm[2]{x_i - AB^\top x_i}^2 \quad \text{subject to} \quad A^\top A = I_k,
\end{equation}
where zero-mean datapoints $\mathcal{D} = \{x_i\}_{i=1}^n$ in $\R^d$ are compressed and then reconstructed by linear transformations $B \in \R^{d \times k}$ and $A \in \R^{d \times k}$ respectively \cite[ch.~23]{shalev-shwartz_understanding_2014}. Intuitively, the columns of $A$ span a linear subspace through the origin and decide the space in which the reconstructions must ultimately reside. This subspace is typically of much lower dimension than the original input space, thereby enabling dimensionality reduction. On the other hand, $B$ plays a different role in that it determines how the original input variables are linearly combined to form the latent representation. This selection is well-known to be dense in the case of PCA, meaning that \emph{all} input dimensions contribute to the reconstruction, albeit at different magnitudes \cite[ch.~11]{jolliffe_principal_2002}.

Because PCA produces dense combinations of the input variables, it is generally poorly adapted to the high-dimensional regime ($n<d$) \cite[ch.~8]{hastie_statistical_2015} \cite{johnstone_pca_2018}. To combat this issue, it is common to promote \emph{sparse combinations} of the input variables \cite[ch.~8]{wainwright_high-dimensional_2019}, i.e., sparsity in the matrix~$B$, motivated by the fact that the intrinsic structure of high-dimensional data often can be captured by a small subset of features \cite[ch.~10]{deisenroth_mathematics_2020}. Prior works, e.g., the Sparse PCA algorithm \cite{zou_sparse_2006}, have formulated this by augmenting~\eqref{eq:pca} with $\ell_1$-norm penalties over the columns of $B$. A problem with this approach, however, is that it is not obvious how the hyperparameters introduced in this Lasso-type \cite{tibshirani_regression_1996} setting should be tuned. In particular, the unsupervised nature of the task makes standard techniques like cross-validation difficult to apply directly. Recent works have shown adversarially trained linear regression to be a viable alternative to Lasso, exhibiting similar regularization behavior while enjoying favorable properties \cite{ribeiro_regularization_2023, ribeiro_kernel_2025}. Most notably, contrary to the Lasso, the hyperparameter tuning does not require knowledge of the noise level present in the data. By building on the linear adversarial training framework, we aim to achieve a PCA method that can leverage this adaptive property for compression in the high-dimensional data regime.

To this end, we propose a robust optimization approach to sparse PCA which (instead of the Lasso) leverages linear adversarial training to naturally promote sparse input variable combinations. By optimizing against worst-case perturbations, we derive a min-max formulation of PCA~\eqref{eq:pca} given by
\begin{equation}\label{eq:advpca1}
\min_{A,B} \sum_{i=1}^n \underset{r_i \in \Omega_{\delta}}{\text{max}}  \norm[2]{x_i - A(B^\top x_i + r_i)}^2 \quad \text{subject to} \quad A^\top A = I_k,
\end{equation}
where we introduce an adversary $r$ that is allowed to perturb each $B^\top x$ within a fixed budget $\Omega_\delta$. As will be shown, our formulation allows for the inner maximum to be solved in closed-form, decomposing the problem into $k$ independent adversarial linear regression problems that induce sparsity in the matrix $B$. This makes the method well-equipped to handle high-dimensional problems out of the box. We name our method \mbox{\textbf{Adversarial PCA} (\textbf{AdvPCA})} motivated by the strong connection to linear adversarial training. An illustration of the AdvPCA procedure is presented in Figure~\ref{fig:3d_advpca}.

Our primary contribution is the \emph{proposal of Adversarial PCA}~\eqref{eq:advpca1}, casting sparse dimensionality reduction as a robust optimization problem over the latent space. Specifically:
\begin{enumerate}
\item In Section~\ref{sec:proposed_formulation}, we \textit{derive a closed-form solution to the inner maximization} of~\eqref{eq:advpca1} and \emph{provide an efficient solver} tailored to the resulting objective.
\item In Section~\ref{sec:perturb_set}, we \emph{establish theoretical connections between our latent space formulation and the input-perturbed counterpart}, proving that the two are perfectly equivalent in the $k=1$ case.
\item In Section~\ref{sec:properties_of_sol}, we \emph{study the theoretical properties of the solution}. We \emph{characterize the regularization path of the solution} by quantifying regimes of high and low regularization, and use the intuition gained to \emph{propose a practical choice of adversarial radius $\delta$}.
\item In Section~\ref{sec:experiments}, we \emph{validate our method with numerical experiments} on synthetic and real-world data, demonstrating the efficacy of Adversarial PCA when compared to sparse PCA baselines.
\end{enumerate}

\begin{figure}[t]
  \centering
  \resizebox{0.8\textwidth}{!}{%
  \begin{tikzpicture}[
      image node/.style={inner sep=0pt},
      operator label/.style={font=\bfseries\small}
  ]
    \matrix [column sep=0.5cm, ampersand replacement=\&] (my_images) {

      \node[image node, yshift=0.05cm] (input) {
          \includegraphics[width=3.5cm]{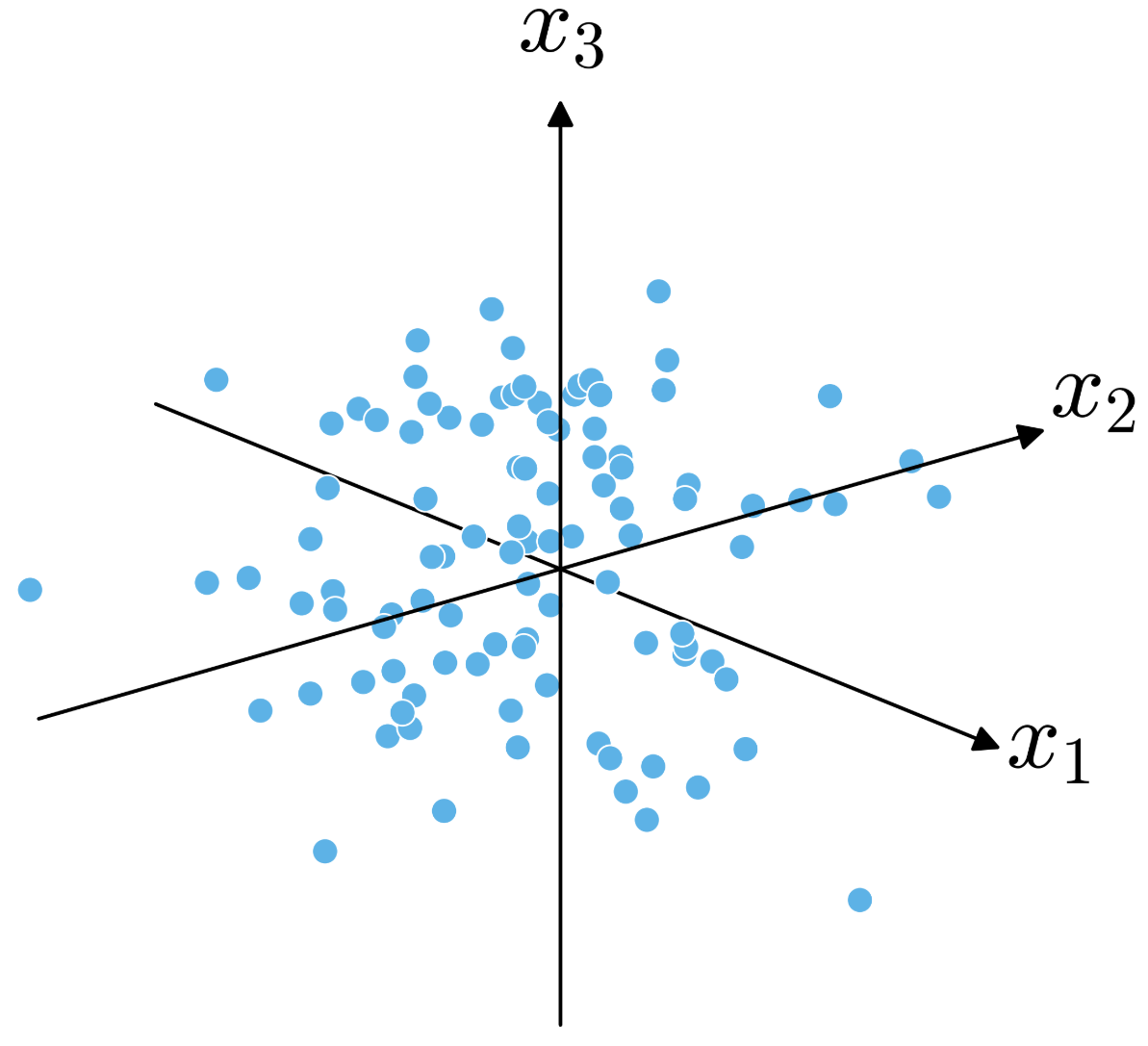} 
      }; 
      \& 
      
      \node[image node] (latent) {
          \includegraphics[width=3.5cm]{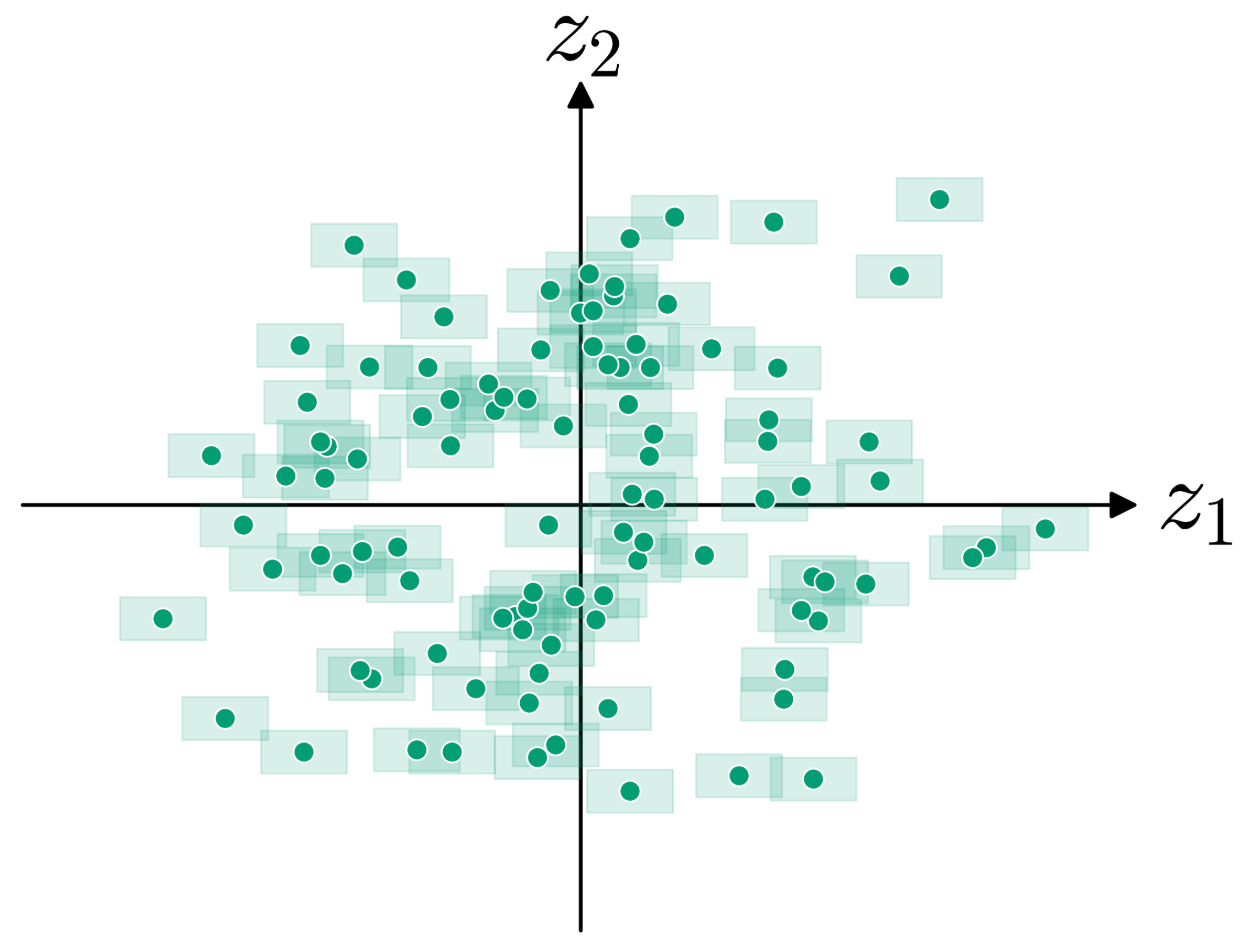} 
      }; 
      \& 
    
      \node[image node, yshift=-0.04] (recon) {
          \includegraphics[width=3.5cm]{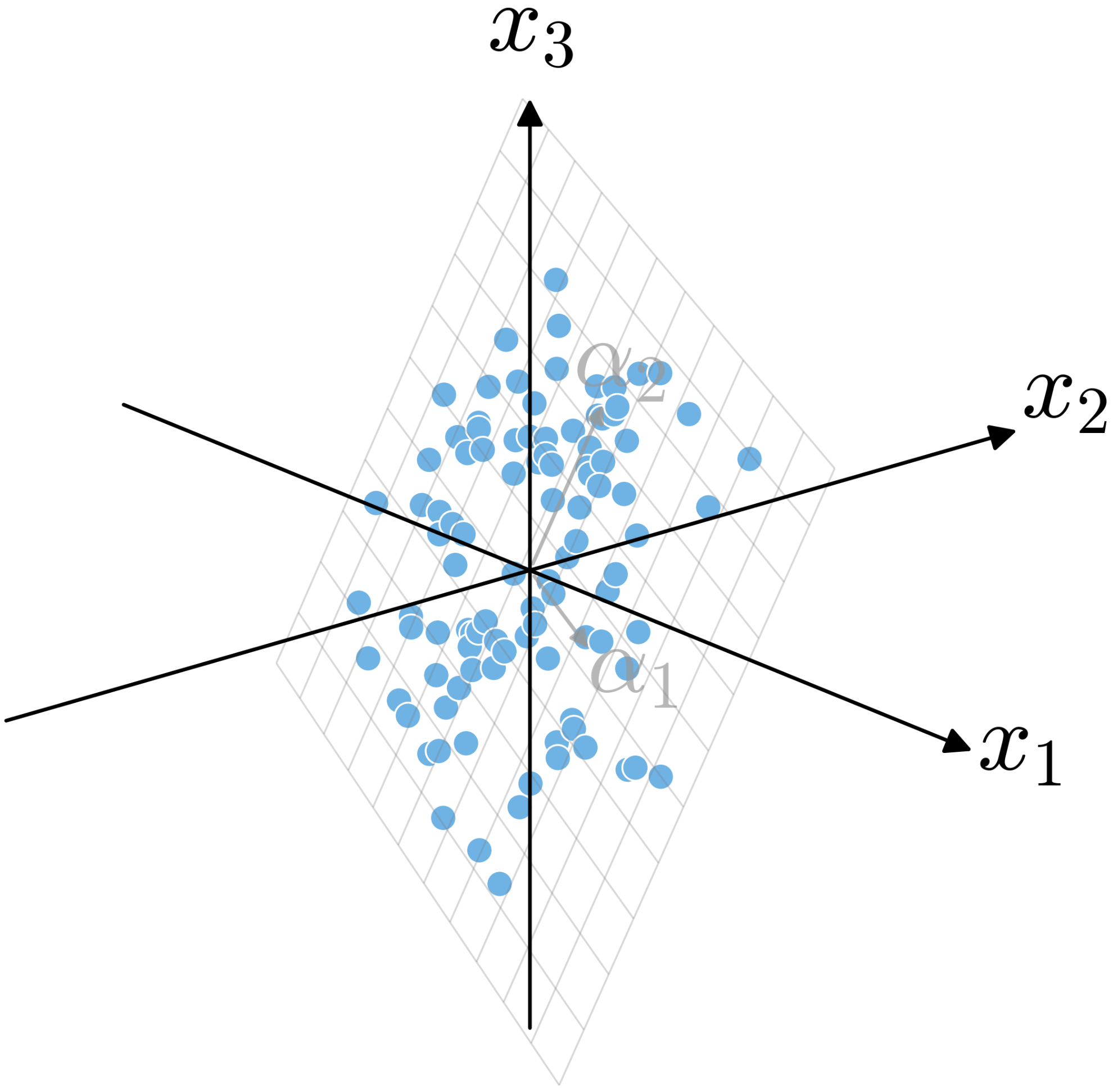} 
      }; \\
    };

    \node[anchor=north, yshift=-0.18cm] at (input.south) {(a) Input space};
    \node[anchor=north, yshift=-0.43cm] at (latent.south) {(b) Latent space};
    \node[anchor=north, yshift=0.01cm] at (recon.south) {(c) Reconstruction};

    \draw[->, thin, >=stealth] 
        ([xshift=1.35cm, yshift=-0.7cm]input.north)  
        to [out=45, in=135, looseness=1]          
        node[midway, above=0.1cm] {$z = B^\top x + r$}   
        ([xshift=3cm, yshift=-0.73cm]input.north);

    \draw[->, thin, >=stealth] 
        ([xshift=1.6cm, yshift=-0.36cm]latent.north)  
        to [out=45, in=135, looseness=1]          
        node[midway, above=0.1cm] {$\tilde{x} = Az$}   
        ([xshift=3.25cm, yshift=-0.39cm]latent.north);
  \end{tikzpicture}%
  }
  \caption{\textbf{Visualization of Adversarial PCA.} (a) Input data $x_i \in \R^{3}$. (b) The encoder $B^\top \in \R^{2 \times 3}$ maps data $x$ to the latent space $\mathcal{Z} = \R^{2}$, where an adversary $r$ is allowed to perturb within a limited budget (shaded rectangle). (c) The perturbed latent representation $\tilde{x}$ is mapped back to the original space via the orthonormal decoder $A \in \R^{3 \times 2}$. The resulting reconstructions lie on a $2$-dimensional plane in $\R^3$ defined by the columns of $A$, denoted $\alpha_1$ and $\alpha_2$.}
  \label{fig:3d_advpca}
\end{figure}

\section{Related Work}
\textbf{Sparse principal component analysis.} Thresholding (see Section~\ref{sec:background}) is an ad hoc way of achieving sparse principal components. \cite{cadimaLoadingCorrelationsInterpretation1995} find this unreliable, while \cite{johnstone_consistency_2009} find empirically that incorporating a threshold can be helpful in high dimensions. Equivalent to posing PCA as a reconstruction problem is the ``maximize-variance'' perspective. Imposing an $\ell_0$-restriction on the criterion is then a natural way of introducing sparsity, however it makes the problem doubly nonconvex \cite[ch.~8.2]{hastie_statistical_2015}. \cite{jolliffe_modified_2003}~relaxes this formulation when introducing the SCoTLASS procedure by replacing the $\ell_0$-constraint with $\ell_1$ and making use of the Lasso \cite{tibshirani_regression_1996}. \cite{df537581-8346-3a06-a12b-234347a76196} in turn relaxes SCoTLASS and propose the convex semidefinite program DSPCA. Shifting perspective, \cite{zou_sparse_2006} recast PCA as a regression-type reconstruction problem and extend it into an elastic net formulation \cite{zou_regularization_2005}, thereby contributing the Sparse PCA algorithm. More recent literature shifts from algorithm development towards establishing limits and rates for different sparse PCA methods, as seen in works from \cite{deshpande_information-theoretically_2014} and \cite{cai_sparse_2013}.

\textbf{Robust principal component analysis.} Robustness in the context of PCA typically refers to robustness against grossly corrupted observations, or outliers. Specifically, since PCA maximizes variance using squared errors, outliers heavily affect the empirical covariance by inflating the variance in certain directions \cite[ch.~10]{jolliffe_principal_2002}. Early methods replace the empirical covariance with robust alternatives that downweight extreme observations \cite{lerman_overview_2018}. \cite{maronna_robust_1976} proposes foundational M-estimators for multivariate scatter, with \cite{tyler_distribution-free_1987} alleviating previously necessary assumptions. Another common approach to robust PCA is based on projection pursuit (PP) \cite{friedman_projection_1974, huber_projection_1985}. \cite{hubert_robpca_2005} combine robust covariance estimation and PP when proposing the ROBPCA method, yielding more accurate estimates with improved computational properties. Other robust PCA methods aim to decompose the data matrix into a low-rank and a sparse component. \cite{candes_robust_2011} show that it is possible to recover the two components by solving a convex program called principal component pursuit (PCP). Although convex, the PCP program suffers from high computational cost \cite{bouwmansRobustPCAPrincipal2014}, which more recent methods focus on improving \cite{lin_linearized_2011, yuan_sparse_2009}. To conclude the robust PCA section, we want to emphasize that the goal of Adversarial PCA is not to be robust against outliers like above mentioned methods. \textit{Adversarial PCA simply uses tools from robust optimization to achieve a sparse PCA formulation}. This is an important distinction.

\textbf{Robust regression and adversarial training.} The robust regression framework has a long standing connection with sparse methods. Early papers connect robust linear regression with square-root Lasso~\cite{NIPS2008_24681928}, a version where the unsquared residual norm naturally scales out the unknown data noise level \cite{4561409b-2457-3af7-afb4-8a4e5ac01eec}. Similar tools are used to study adversarial training in linear models. \cite{ilyas_adversarial_2019, tsipras_robustness_2019} use linear models to explain robustness phenomena in neural networks; \cite{ribeiro_overparameterized_2023} study the impact of overparameterization on adversarial robustness; \cite{pmlr-v161-min21a} investigate how dataset size affects adversarial performance. Other works provide insight into the asymptotics of linear adversarial training \cite{javanmard_precise_2020, taheri_asymptotic_2022}. More recent contributions focus on the implicit regularization of adversarial linear regression \cite{ribeiro_regularization_2023, ribeiro_kernel_2025, xie_highdimensional_2024}.

\section{Background}\label{sec:background}
\textbf{Principal component analysis.} PCA plays a central role in machine learning as a standard dimensionality reduction technique. PCA \eqref{eq:pca} aims to find linear transformations $A,B \in \R^{d \times k}$ such that the reconstruction error over the data is minimized in the least squared sense. Specifically, given $k \leq d$, $B^\top \in \R^{k \times d}$ compresses the input $x \in \R^d$ into a lower-dimensional representation $z = B^\top x \in \R^k$. Then, $A \in \R^{d \times k}$ (approximately) recovers the original input from its compressed counterpart by reconstructing $x$ as $\tilde{x} = Az = AB^\top x \in \R^d$. Because $\tilde{x} = Az$, $\tilde{x}$ is constrained to the $k$-dimensional subspace defined by the columns of $A$ (see Figure~\ref{fig:3d_advpca}). An important characteristic of PCA is that any optimal solution to \eqref{eq:pca} must have matrices $A,B$ that coincide \cite[ch.~23.1]{shalev-shwartz_understanding_2014}, which implies that the solution is an orthogonal projection. This makes the subsequent solution simple and very elegant. To be precise, let $\mathcal{D} =\{ x_i \}_{i=1}^n$ be a set of $n$ zero-mean datapoints in $\R^d$, and let $X \in \R^{n \times d}$ be the corresponding data matrix. Then, PCA is solved simply by letting the columns of $A$ \emph{and} $B$ be the $k$ leading eigenvectors of $X^\top X$, or equivalently, the $k$ leading right-singular vectors of the singular value decomposition (SVD) of $X$. 

\textbf{Thresholding.} The issue with PCA is that the solution is notoriously dense \cite{jolliffe_principal_2002}, which only makes it suitable in data-rich settings ($n>d$). To circumvent this, a common approach is to force the lower-dimensional representations to be sparse combinations of the input variables. Perhaps the simplest way of achieving this (which typically serves as a baseline) is through thresholding: solve PCA and decouple $A$ and $B$ by thresholding $B$ according to $B_{ij} := B_{ij} \mathds{1} \{ \abs{B_{ij}} \geq \varepsilon \} \; \forall (i,j) \in [d] \times [k]$.

\textbf{Sparse PCA.} A more sophisticated approach to enforce sparsity in $B$ is to extend~\eqref{eq:pca} with $\ell_1$-penalties and make use of the Lasso. This is the core idea behind the Sparse PCA algorithm \cite{zou_sparse_2006}:
\begin{equation}\label{eq:spca} 
    \min_{A, B} \sum_{i=1}^n \norm[2]{x_i - AB^\top x_i}^2 + \sum_{j=1}^k \lambda_{j} \norm[1]{\beta_j} \quad \text{subject to} \quad A^\top A = I_k,
\end{equation}
 which in essence is a \textit{regularized} reconstruction problem where $\lambda_j \in \R^+$ $\forall j \in [k]$ are hyperparameters controlling the sparsity, and $A = [\alpha_1, \dots, \alpha_k]$ and  $B = [\beta_1, \dots, \beta_k]$ are both matrices in $\R^{d \times k}$. By solving~\eqref{eq:spca}, an approximate solution to PCA can be achieved with the added benefit of sparse input combinations. The full objective of Sparse PCA also includes an additional sum of ridge penalties that ensure uniqueness, but they are excluded in \eqref{eq:spca} for simplicity.

\textbf{Adversarial linear regression.} Adversarially trained linear regression has been shown to be a viable alternative to traditional Lasso and ridge regression, exhibiting similar regularization behavior while enjoying favorable properties \cite{ribeiro_regularization_2023, ribeiro_kernel_2025}. A defining feature of the formulation is that the inner maximization can be solved in closed-form, leaving a convex problem with an explicit regularization term. To be precise, adversarial linear regression satisfies
\begin{equation}\label{eq:adv_linear_reg}
    \min_\beta \frac{1}{n} \sum_{i=1}^n \max_{\norm{\Delta x_i} \leq \delta} \left(y_i - \beta^\top (x_i + \Delta x_i)\right)^2 = \min_\beta \frac{1}{n} \sum_{i=1}^n \left(\abs{y_i - \beta^\top x_i} + \delta \norm{\beta}_*\right)^2,
\end{equation}
where $\smash{\norm{\cdot}_*}$ denotes the dual norm of $\smash{\norm{\cdot}}$ and $\smash{\delta \geq 0}$. Indeed, comparing~\eqref{eq:adv_linear_reg} to a traditional method such as the Lasso, with objective 
\begin{equation*}
\frac{1}{n} \sum_{i=1}^n \left(y_i - \beta^\top x_i\right)^2 + \lambda \norm{\beta}_1
\end{equation*} 
for $\smash{\lambda \in \R^+}$, the main connection is clear: the norm-penalty is now \emph{inside} the square. Another feature is that the type of penalty directly depends on the norm defining the perturbation set. Lastly, and perhaps most importantly, a key advantage of the adversarial formulation is that it has been shown to attain near-oracle performance for a choice of $\delta$ independent of the data noise level. This is analogous to the square-root Lasso~\cite{4561409b-2457-3af7-afb4-8a4e5ac01eec}.

\section{Proposed Formulation}\label{sec:proposed_formulation}
Building on the adversarial linear regression framework, we propose a min-max formulation that naturally induces sparsity by optimizing the reconstruction objective against worst-case perturbations. This is formulated as an extension of the original PCA formulation~\eqref{eq:pca}, where the objective is augmented with adversarial perturbations acting directly in the latent space:
\begin{equation}\label{eq:advpca2}
\min_{A,B} \sum_{i=1}^n \underset{r_i \in \Omega_{\delta}}{\text{max}}  \norm[2]{x_i - A(B^\top x_i + r_i)}^2 \quad \text{subject to} \quad A^\top A = I_k,
\end{equation}
where $A = [\alpha_1, \dots, \alpha_k]$ and  $B = [\beta_1, \dots, \beta_k]$ are both matrices in $\R^{d \times k}$. We define $\Omega_\delta$ to be an axis-aligned hyperrectangle (or weighted $\ell_\infty$-norm) in the latent space, given by
\begin{equation}\label{eq:hyperrect}
\Omega_{\delta} = \prod_{j=1}^k \left[-\delta_j \norm{\beta_j}_1, \delta_j \norm{\beta_j}_1\right] \subset \R^k,
\end{equation}
where $\{\delta_1, \dots, \delta_k\}$ is a set of nonnegative adversarial radii. While this choice of set may initially appear unmotivated, subsequent sections will demonstrate its necessity for the algorithm's efficacy. For now, we summarize the motivation with (i) perturbations in the latent space in combination with the axis-aligned geometry of the set (product of intervals) decomposes \eqref{eq:advpca2} into \emph{independent adversarial linear regression problems}, making the inner maximum solvable in closed-form; and (ii) the $\ell_1$-norm is what promotes sparsity in $\beta$, and comes from the relationship between input and latent space perturbations. We refer to Section~\ref{sec:perturb_set} for a detailed discussion on this manner.
\subsection{Closed-Form Reformulation}\label{sec:closed-formula}
By design, the geometry of $\Omega_\delta$ ensures a closed-form solution for the inner maximum in~\eqref{eq:advpca2}:
\begin{proposition}\label{prop:1}
    Let $x \in \mathbb{R}^d$ and $\Omega_\delta \subset \R^k$ be given by~\eqref{eq:hyperrect}. Furthermore, let $A, B \in \mathbb{R}^{d \times k}$ such that $A^\top A = I_k$. Then, the inner maximization of~\eqref{eq:advpca2} can be solved in closed-form:
    \begin{equation*}
        \underset{r \in \Omega_{\delta}}{\text{max}} \; \norm[2]{x - A(B^\top x + r)}^2 = \norm{x - A A^\top x}_2^2 + \sum_{j=1}^k \left( |\beta_j^\top x - \alpha_j^\top x| + \delta_j \norm[1]{\beta_j} \right)^2,
    \end{equation*}
    with maximizer $\hat{r}$ given entry-wise by $\hat{r}_j = \text{sign}(\beta_j^\top x - \alpha_j^\top x) \, \delta_j \norm{\beta_j}_1$ for all $j =   1,\dots,k$.
\end{proposition}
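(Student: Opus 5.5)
The plan is to use the orthonormality of the columns of $A$ to split the reconstruction error into a part orthogonal to $\operatorname{range}(A)$ and a part inside it. The inner part is coordinate-separable, so the box $\Omega_\delta$ decouples the maximization into $k$ scalar problems.

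First, decompose $x = (I - AA^\top)x + AA^\top x$. Since $A^\top A = I_k$, the matrix $P = AA^\top$ is the orthogonal projector onto $\operatorname{range}(A)$, and $A(B^\top x + r)$ lies in that range. Pythagoras then gives
\begin{equation*}
\norm[2]{x - A(B^\top x + r)}^2 = \norm[2]{(I - AA^\top)x}^2 + \norm[2]{A\left(A^\top x - B^\top x - r\right)}^2 .
\end{equation*}
Using $A^\top A = I_k$ once more, the second term equals $\norm[2]{A^\top x - B^\top x - r}^2 = \sum_{j=1}^k (\alpha_j^\top x - \beta_j^\top x - r_j)^2$. The first term does not depend on $r$.

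Second, because $\Omega_\delta$ is a Cartesian product of intervals $[-c_j, c_j]$ with $c_j = \delta_j\norm[1]{\beta_j}$, the maximum of a sum of functions each depending only on $r_j$ equals the sum of the coordinatewise maxima. For each $j$, write $a_j = \beta_j^\top x - \alpha_j^\top x$. The scalar problem is $\max_{|r_j|\le c_j} (a_j + r_j)^2$. This is a convex function of $r_j$, so the maximum is attained at an endpoint, and it equals $(|a_j| + c_j)^2$, attained at $r_j = \operatorname{sign}(a_j)\,c_j$; this follows from $|a_j + r_j| \le |a_j| + |r_j| \le |a_j| + c_j$, with equality at that choice. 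Summing over $j$ and adding the orthogonal term gives the claimed formula and the maximizer $\hat r$.

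There is no real obstacle. The only points needing care are verifying the cross-term cancellation in the Pythagorean step, which uses $A^\top(I - AA^\top) = 0$, and noting the sign convention when $a_j = 0$. In that case either endpoint is a maximizer, so any convention for $\operatorname{sign}(0) \in \{\pm 1\}$ works.
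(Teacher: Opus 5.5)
Your proof is correct and takes essentially the paper's route. The paper expands the square to obtain $\norm[2]{x}^2 - \norm[2]{A^\top x}^2 + e^\top e + 2r^\top e + r^\top r$ with $e = (B-A)^\top x$, which is your Pythagorean identity written out, and it then maximizes coordinatewise over the box by a sign case analysis equivalent to your triangle-inequality argument; your remark on $\operatorname{sign}(0)$ matches the paper's implicit choice of $r_j = +\delta_j\norm[1]{\beta_j}$ when $e_j \geq 0$.
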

Proposition \ref{prop:1} shows that bounded latent perturbations have a regularizing effect on $B$, controlled by the adversarial radii $\{ \delta_1,\dots, \delta_k \}$. Notably, the closed form decomposes into the standard PCA reconstruction error $\norm{x - AA^\top x}_2^2$ plus an adversarial penalty regularizing $B$, and recovers ordinary PCA when \mbox{$\delta_1 = \dots = \delta_k = 0$}. Using Proposition~\ref{prop:1}, we equivalently reformulate~\eqref{eq:advpca2}:
\begin{equation}\label{eq:final_advpca}
    \min_{A,B} \sum_{i=1}^n \biggl( \norm{x_i - A A ^\top x_i}_2^2 + \sum_{j=1}^k \left( \lvert \beta_j^\top x_i - \alpha_j^\top x_i \rvert + \delta_j \norm[1]{\beta_j} \right)^2 \biggr) \quad \text{subject to} \quad A^\top A = I_k.
\end{equation}
A key property of \eqref{eq:final_advpca} is that for a fixed $A$, the minimization over $B$ decomposes into $k$ \textit{independent} optimization tasks, one for each column. That is, for the $j$th component, \eqref{eq:final_advpca} can be written as 
\begin{equation*}
\argmin_{\beta_j} \sum_{i=1}^n \left( \lvert \alpha_j^\top x_i - \beta_j^\top x_i \rvert + \delta_j \norm[1]{\beta_j} \right)^2
\end{equation*} 
which exactly recovers the robust objective of adversarial linear regression~\eqref{eq:adv_linear_reg} with the exception that $\beta$ is fitted to $\alpha^\top x$ instead of a supervised target~$y$. Moreover, the $\ell_1$-penalty on $\beta$ naturally promotes the desired sparsity. In the following section, we show how to leverage this decoupled structure to solve~\eqref{eq:final_advpca} through a block coordinate descent optimization scheme.

\subsection{Proposed Solver}\label{sec:solver}
We propose an iterative solver, alternating between solving for $A$ and $B$. If $A$ is considered fixed,~\eqref{eq:final_advpca} reduces to $k$ one-dimensional adversarial linear regression problems that are solved independently. On the other hand, if $B$ and the adversarial perturbations are fixed, $A$ admits a closed-form solution.

\textbf{Fix $\boldsymbol{A}$, solve for $\boldsymbol{B}$.} First, $A$ is initialized to the solution of ordinary PCA, i.e., $A := V_{1:k}$ where $V$ is given by the SVD of $X$ and the subscript denotes the first $k$ columns. With $A$ fixed,~\eqref{eq:final_advpca} simplifies~to
\begin{equation}\label{eq:fixA}
    \min_{B = [\beta_1, \dots \beta_k]} \sum_{j=1}^k \underbrace{\sum_{i=1}^n \left( \lvert \beta_j^\top x_i - \alpha_j^\top x_i \rvert + \delta_j \norm[1]{\beta_j} \right)^2}_{f_j(\beta_j)},
\end{equation}
where the sums have been swapped (without issue since all elements are non-negative). By denoting the inner sum $f_j(\beta_j)$, it is easy to see that the minimization separates over $j$. Therefore, \eqref{eq:fixA} equals $\smash{\sum_{j=1}^k \min_{\beta_j} f_j(\beta_j)}$ where each $\min_{\beta_j} f_j(\beta_j)$ is an adversarial linear regression problem trying to fit $\alpha^\top x$ with an $\ell_1$-norm penalty on $\beta$. This subproblem is convex, and is therefore readily solved using, e.g., \texttt{cvxpy} \cite{agrawal2018rewriting, diamond2016cvxpy}. However, to speed up computation we use the solver proposed by~\cite{ribeiro_efficient_2025} (referred to as \texttt{eta\_trick()} in Algorithm~\ref{alg:advpca}), which is tailored to the linear adversarial training problem. This allows for more efficient optimization; see Appendix~\ref{app:eta-trick} for details.

\noindent
\begin{minipage}[b]{0.5\textwidth}
\textbf{Freeze the adversary $\boldsymbol{R}$.} When solving for $A$, we first return to the primal problem formulation~\eqref{eq:advpca2} and write it in matrix form as 
\begin{equation}\label{eq:matrixform1}
\begin{aligned}
&\min_{A,B} \underset{R \in \Omega_{\delta}^n}{\text{max}}  \norm{X - (XB + R) A^\top}_F^2 \quad \\
&\; \text{subject to} \quad A^\top A = I_k,
\end{aligned}
\end{equation}
where $\Omega_{\delta}^n = \prod_{i=1}^n \Omega_\delta$ and $\smash{\norm{\cdot}_F}$ denotes the Frobenius norm. Further, we also extend the maximizing argument  $\hat{r}$ in Proposition~\ref{prop:1} into matrix form according to \mbox{$\widehat{R} = \text{sign}(X(B-A)) D \in \R^{n \times k}$}, where the sign is operating element-wise over the matrix $X(B-A) \in \R^{n \times k}$ and right-multiplying with $D = \mathrm{diag}(\delta_1 \norm{\beta_1}_1, \dots, \delta_k \norm{\beta_k}_1)$ simply scales each column with its diagonal. Substituting the maximizing argument $\smash{\widehat{R}}$ back into~\eqref{eq:matrixform1} and only optimizing over $A$ then yields 
\begin{equation}\label{eq:matrixform2}
\begin{aligned}
&\min_{A} \snorm{X - (XB + \widehat{R}) A^\top}_F^2 \quad \\
&\; \text{subject to} \quad A^\top A = I_k.
\end{aligned}
\end{equation}    
    \vspace{0pt}
\end{minipage}
\hfill
\raisebox{0.1cm}{
\begin{minipage}[b]{0.45\textwidth}
    \begin{algorithm}[H] 
        \caption{AdvPCA}
        \label{alg:advpca}
        \KwInput{$X$; $\delta_1, \dots, \delta_k$; $\epsilon=0.5$}
        \KwInitialize{$U, \Sigma, V^\top \gets \mathtt{SVD}(X);$ $A \gets V_{1:k}$}
        
        \Repeat{\texttt{StopCondition}}{
          \Comment{Fix A, solve for B}
          \For{$j \gets 1$ \KwTo $k$}{
          $B_{j} \gets \mathtt{eta\_trick}(X, A_j, \delta_j )$\;
          }
          \Comment{Solve for  R}  
          $E \gets X(B - A)$\;
          \For{$j \gets 1$ \KwTo $k$}{
          $R_{j} \gets \mathtt{sign}( E_{j} ) \delta_j \lVert B_j \rVert_1$\;
          }
          \Comment{Fix B, solve for A}
          $U,\Sigma,V^\top \gets \mathtt{SVD}(X^\top (XB + R))$\;
          $A_{\mathrm{upd}} \gets U_{1:k}V^\top$\;
          $A_{\mathrm{smooth}} \gets \epsilon A + (1-\epsilon) A_{\mathrm{upd}}$\;
          $ U,\Sigma,V^\top \gets \mathtt{SVD}(A_{\mathrm{smooth}})$\;
          $A \gets U_{1:k} V^\top$\;
         } 
        \KwRet{$A,B$}
    \end{algorithm}
    \vspace{0pt}
\end{minipage}%
}

Now, Equation~\eqref{eq:matrixform2} has a closed-form solution given $XB+\widehat{R}$ is \textit{fixed} with respect to $A$, which evidently is not the case. To circumvent this dependency, we freeze the adversaries when updating~$A$. More precisely, letting $\widebar{A}$ denote $A$ from the previous optimization step, we decouple the adversary from the current optimization variable $A$ by computing $\smash{\widehat{R}}$ using $\widebar{A}$ and the newly updated $B$. This alternating min-max strategy is a well-established practice in robust optimization, mirroring standard adversarial training routines such as projected gradient descent (PGD)  \cite{madry_deep_2018, pmlr-v97-wang19i}.

\textbf{Fix $\boldsymbol{B}$, solve for $\boldsymbol{A}$.} With $B$ fixed and $\widehat{R}$ constant w.r.t. $A$,~\eqref{eq:matrixform2} becomes an orthogonal Procrustes problem \cite[ch.~6.4.1]{golub_matrix_2013} which we solve in the next proposition:
\begin{proposition}\label{prop:2}
    Let $\widehat{R} = \text{sign}(X(B-\widebar{A})) D \in \R^{n \times k}$ and let $X^\top (XB + \widehat{R}) = U \Sigma V^\top$ be the singular value decomposition. Then, $\widehat{A} = U_{1:k}V^\top$ is the global minimizer of~\eqref{eq:matrixform2}, where $U_{1:k}$ denotes the first $k$ columns of $U$.
\end{proposition}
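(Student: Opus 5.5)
The plan is to reduce \eqref{eq:matrixform2} to a linear trace maximization over matrices with orthonormal columns, and then solve that with the von Neumann trace inequality. Set $M := XB + \widehat{R} \in \R^{n \times k}$; by construction $\widehat{R}$ is computed from $\widebar{A}$ and $B$, so $M$ is constant in $A$. Expanding the Frobenius norm gives
\begin{equation*}
\snorm{X - MA^\top}_F^2 = \norm{X}_F^2 - 2\,\Tr{X^\top M A^\top} + \Tr{A M^\top M A^\top}.
\end{equation*}
The constraint $A^\top A = I_k$ and cyclicity of the trace give $\Tr{AM^\top M A^\top} = \Tr{M^\top M A^\top A} = \norm{M}_F^2$, which does not depend on $A$. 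Hence minimizing \eqref{eq:matrixform2} is equivalent to maximizing $\Tr{A^\top X^\top M}$ over $\{A \in \R^{d\times k} : A^\top A = I_k\}$, which is an orthogonal Procrustes problem.

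Next, write the SVD $X^\top M = U\Sigma V^\top$. I will use the thin form, with $U_{1:k}\in\R^{d\times k}$, $\Sigma_k = \mathrm{diag}(\sigma_1,\dots,\sigma_k)$ and $V\in\R^{k\times k}$ orthogonal, so that $X^\top M = U_{1:k}\Sigma_k V^\top$ (this holds because the rank is at most $k$). Then
\begin{equation*}
\Tr{A^\top X^\top M} = \Tr{\Sigma_k\, V^\top A^\top U_{1:k}} = \sum_{j=1}^k \sigma_j\, W_{jj}, \qquad W := V^\top A^\top U_{1:k}.
\end{equation*}
Since $A$, $U_{1:k}$ and $V$ all have orthonormal columns, $\norm{W}_2 \le 1$, and therefore $|W_{jj}| \le 1$ for every $j$. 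It follows that $\Tr{A^\top X^\top M} \le \sum_j \sigma_j$. Alternatively, one can invoke von Neumann's trace inequality directly and use the fact that the singular values of $A$ all equal one.

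To finish, I plug in $\widehat{A} = U_{1:k}V^\top$. First, $\widehat{A}^\top \widehat{A} = V U_{1:k}^\top U_{1:k} V^\top = I_k$, so $\widehat{A}$ is feasible. Second, $W = V^\top V U_{1:k}^\top U_{1:k} = I_k$, so $\widehat{A}$ attains the upper bound $\sum_j \sigma_j$ and is a global maximizer of the trace, hence a global minimizer of \eqref{eq:matrixform2}.

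The main obstacle I expect is the bookkeeping between the full and thin SVD and the $d\times k$ shape of $A$. Concretely, I need to justify the bound $|W_{jj}|\le 1$ through the operator-norm argument, because $A^\top U_{1:k}$ is square but not necessarily orthogonal. I also need to note that when $X^\top M$ is rank deficient the minimizer need not be unique, even though the stated $\widehat{A}$ is still a global minimizer.
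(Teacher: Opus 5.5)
Your proof is correct and follows essentially the same route as the paper. Both expand the Frobenius norm, use $A^\top A = I_k$ to discard the $A$-independent quadratic term, and bound the resulting Procrustes trace $\sum_j \sigma_j W_{jj}$ by $\sum_j \sigma_j$. The remaining differences are cosmetic: you use the thin SVD and bound $|W_{jj}|$ via $\|V^\top A^\top U_{1:k}\|_2 \le 1$, whereas the paper uses the full SVD and the unit-norm rows of $Q = V^\top A^\top U \in \R^{k\times d}$. Your direct check that $\widehat{A}$ is feasible and attains the bound, together with your caveat about non-uniqueness when $X^\top(XB+\widehat{R})$ is rank deficient, is slightly cleaner than the paper's claim that $Q$ must equal $[I_k \mid 0]$.
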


We now have all the pieces to present the full Adversarial PCA algorithm; see Algorithm~\ref{alg:advpca}. We assume $X$ to be centered (zero-mean columns) and let matrices with subscripts indicate the corresponding column. To ensure stable convergence for larger $\delta$, we dampen the global $A$-update by taking a weighted average of the current and updated $A$ matrix followed by an orthogonal projection back onto the manifold of orthogonal matrices. See Appendix \ref{app:smoothing} for details.

\section{Relationship with Input Space Perturbations}\label{sec:perturb_set}
We establish the relationship between input and latent space perturbations to motivate our specific choice of $\Omega_\delta$. We provide intuition behind perturbing the latent representations, demonstrating that unlike input space perturbations, this approach yields a formulation consistent with ordinary PCA.

\textbf{The equivalence in one dimension.} Typically, adversarial perturbations are introduced directly in the input space, constrained by an upper bound on the norm. For the case of linear regression, as shown in~\eqref{eq:adv_linear_reg}, this amounts to augmenting the standard objective with a local maximization:
\begin{equation}\label{eq:linregtoadv}
    \min_{\beta \in \R^d} \frac{1}{n} \sum_{i=1}^n \left(y_i - \beta^\top x_i\right)^2 \xrightarrow{\text{robust}} \min_{\beta \in \R^d} \frac{1}{n} \sum_{i=1}^n { \color{myblue} \max_{\norm{\Delta x_i} \leq \delta} } \left(y_i - \beta^\top (x_i + { \color{myblue} \Delta x_i })\right)^2.
\end{equation}
To simplify the analogy to PCA, we consider a special case of the PCA objective~\eqref{eq:pca} where $k=1$, meaning the matrices $A, B \in \R^{d \times k}$ reduce to vectors $\alpha, \beta \in \R^d$. The corresponding, perhaps most standard way of introducing adversarial perturbations in PCA would then be given by
\begin{equation}\label{eq:pcatoadv}
    \min_{\alpha, \beta \in \R^d} \sum_{i=1}^n \norm[2]{x_i - \alpha \beta^\top x_i }^2 \xrightarrow{\text{robust}} \min_{\alpha, \beta \in \R^d} \sum_{i=1}^n {\color{myblue} \max_{\norm{\Delta x_i} \leq \delta}} \norm[2]{x_i - \alpha \beta^\top (x_i + { \color{myblue}\Delta x_i }) }^2.
\end{equation}

Mimicking the linear regression case, the target input is left undisturbed and the adversary is only allowed to act on the input being reconstructed. To connect this standard input space formulation in~\eqref{eq:pcatoadv} to the general latent AdvPCA objective in~\eqref{eq:advpca2}, let $\mathcal{L}(\alpha, \beta)$ denote the inner maximum of~\eqref{eq:pcatoadv} for a single sample~$x$. Through Proposition~\ref{prop:3}, we establish that this natural way of perturbing in the input space is exactly equivalent to perturbing in the latent space.
\begin{proposition}\label{prop:3}
    Let $x \in \R^d$ and  $\alpha, \beta \in \mathbb{R}^d$ such that $\norm[2]{\alpha} = 1$. Further, let $\mathcal{L}(\alpha, \beta)$ denote the worst-case adversarial reconstruction loss. Then, for an adversarial radius $\delta \geq 0$ and conjugate exponents $p, q \geq 1$ satisfying $1/p + 1/q = 1$, the following expressions for $\mathcal{L}(\alpha, \beta)$ are equal:

    (a) $\mathcal{L}(\alpha, \beta) = \max_{\norm[p]{\Delta x} \leq \delta} \norm[2]{x - \alpha \beta^\top(x + \Delta x)}^2$. \quad (Input space perturbation)

    (b) $\mathcal{L}(\alpha, \beta) = \max_{|r| \leq \delta \norm[q]{\beta}} \norm[2]{x - \alpha (\beta^\top x + r)}^2$. \quad (Latent space perturbation)

    (c) $\mathcal{L}(\alpha, \beta) = \norm[2]{x - \alpha \alpha^\top x}^2 + \left(|\beta^\top x - \alpha^\top x| + \delta \norm[q]{\beta}\right)^2$. \quad (Closed-form maximum) \\
 Additionally, if $\alpha = \beta$ is imposed, then $\mathcal{L}(\alpha, \beta) = \norm[2]{x - \beta \beta^\top x}^2 + \delta^2 \norm[q]{\beta}^2$.
\end{proposition}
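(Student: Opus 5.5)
The plan is to prove (a)$=$(b) through a change of variables, then (b)$=$(c) through a one-dimensional maximization, and finally to specialize to $\alpha=\beta$.

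\textbf{Step 1: (a)$=$(b).} The perturbation $\Delta x$ enters the objective in (a) only through the scalar $r := \beta^\top \Delta x$, since $x - \alpha\beta^\top(x+\Delta x) = x - \alpha(\beta^\top x + r)$. It therefore suffices to identify the image of the ball $\{\Delta x : \norm[p]{\Delta x}\leq \delta\}$ under the linear functional $\Delta x \mapsto \beta^\top \Delta x$.

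By H\"older's inequality, $|\beta^\top \Delta x| \leq \norm[p]{\Delta x}\norm[q]{\beta} \leq \delta\norm[q]{\beta}$, so the image lies in the interval $[-\delta\norm[q]{\beta}, \delta\norm[q]{\beta}]$. Conversely, the definition of the dual norm gives $\sup_{\norm[p]{u}\leq 1}\beta^\top u = \norm[q]{\beta}$, and this supremum is attained because the ball is compact. The image of the ball is convex and symmetric, since the ball is. Together these facts show the image is exactly the interval $[-\delta\norm[q]{\beta}, \delta\norm[q]{\beta}]$, and hence the maxima in (a) and (b) coincide. The endpoint cases $p=1$ and $p=\infty$ need no separate treatment, because attainment follows from compactness alone.

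\textbf{Step 2: (b)$=$(c).} Here I will use $\norm[2]{\alpha}=1$ to split $x$ orthogonally as $x = (x-\alpha\alpha^\top x) + \alpha\alpha^\top x$. Writing $x - \alpha(\beta^\top x + r) = (x - \alpha\alpha^\top x) + \alpha(\alpha^\top x - \beta^\top x - r)$, the two summands are orthogonal, so Pythagoras gives
\begin{equation*}
\norm[2]{x - \alpha(\beta^\top x + r)}^2 = \norm[2]{x-\alpha\alpha^\top x}^2 + (\alpha^\top x - \beta^\top x - r)^2 .
\end{equation*}
The first term does not depend on $r$. The second is a convex function of $r$, so its maximum over $|r|\leq c$, with $c = \delta\norm[q]{\beta}$, is attained at an endpoint. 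Concretely, $\max_{|r|\leq c}(s-r)^2 = (|s|+c)^2$, attained at $r = -\operatorname{sign}(s)\,c$. With $s = \alpha^\top x - \beta^\top x$ this gives (c), and the maximizer matches the one in Proposition~\ref{prop:1}. This is essentially the $k=1$ case of Proposition~\ref{prop:1}, except that a general $q$-norm replaces the $\ell_1$-norm; the same computation goes through verbatim.

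\textbf{Step 3: the case $\alpha=\beta$.} Substituting $\alpha=\beta$ into (c) makes the absolute-value term vanish, leaving $\norm[2]{x-\beta\beta^\top x}^2 + \delta^2\norm[q]{\beta}^2$.

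The only step with any subtlety is the exact identification of the image set in Step 1. It needs both the dual-norm equality, which gives the endpoints, and convexity, which fills in the interval. Everything else is routine algebra.
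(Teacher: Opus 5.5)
Your proposal is correct and follows the paper's proof. The paper also proves (a)$=$(b) via the substitution $r=\beta^\top\Delta x$, Hölder's inequality and attainment of the dual norm, and obtains (c) by maximizing a one-dimensional expression in $r$ at an endpoint. The only differences are cosmetic and slightly cleaner: you use the orthogonal split $x=(x-\alpha\alpha^\top x)+\alpha\alpha^\top x$ with Pythagoras instead of expanding the square, you use convexity of the image to fill in the interval where the paper appeals to symmetry, and you read off the $\alpha=\beta$ case from (c) rather than recomputing it from (a).
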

Parts (a) and (b) establish the core equivalence; the substitution $r = \beta^\top \Delta x$ projects the adversary into a single dimension and Hölder's inequality~\cite[ch.~9]{steele_cauchy-schwarz_2010} then translates the norm constraint. This bounds the latent perturbation as $|r| = |\beta^\top \Delta x| \leq \|\Delta x\|_p \|\beta\|_q \leq \delta \|\beta\|_q$ and shows the intuition behind letting $\Omega_\delta$ be a product of independent intervals $[-\delta \norm{\beta}_q, +\delta \norm{\beta}_q]$ (a hyperrectangle). Part~(c) subsequently evaluates this maximum in closed form, yielding an objective convex in $\beta$ where the adversarial radius $\delta$ explicitly dictates the amount of regularization. This behavior is most apparent in the special case where $\alpha = \beta$. Finally, to promote sparsity in $\beta$, we set $q=1$ in the proposed formulation \eqref{eq:advpca2}, inducing an $\ell_1$-norm penalty. By conjugate exponents, this mathematically corresponds to defending against an $\ell_\infty$-norm adversary in the original input space.

\textbf{Inconsistency when scaling to multiple dimensions.} A defining feature of PCA is that the solution stays consistent across multiple values of $k$. To be precise, solving PCA for $k$ components and then another time for $k+1$ components (using the same data), the first $k$ components of the two solutions should be consistent with each other. Now, consider generalizing part~(a) of Proposition~\ref{prop:3} to multiple dimensions. For $k>1$, the vector $\beta \in \R^d$ is extended to the matrix $B \in \R^{d \times k}$ meaning the adversary becomes a vector $r = B^\top \Delta x \in \R^k$. Thus, because $\Delta x$ maps to all $k$ dimensions of the latent representation, the optimal adversarial direction for $\Delta x$ will generally depend on \textit{all} $k$ columns of $B$ \textit{simultaneously}. Hence, the solution of Adversarial PCA would change with $k$, which is not in line with ordinary PCA. Instead, by perturbing in the latent space, independence between components is kept intact meaning solving for different $k$ has no impact on the solution. 

\section{Properties of the Solution}\label{sec:properties_of_sol}
Many structural properties of adversarial linear regression naturally extend to Adversarial PCA. To isolate this behavior, we examine the step in the solver when $A$ is fixed, which reduces the problem to $k$ independent adversarial linear regression problems as was shown in Equation~\eqref{eq:fixA}. For the analysis, we define the (empirical) Adversarial PCA risk as
\begin{equation}\label{eq:advpca_risk}
    \widehat{\mathcal{R}}_\delta(\beta) = \sum_{i=1}^n \left( \lvert \alpha^\top x_i - \beta^\top x_i \rvert + \delta \norm[1]{\beta} \right)^2,
\end{equation}
where we drop the subscript $j$ for notational convenience since all $k$ subproblems have identical properties. Let $\widehat{\beta} \in \argmin_\beta \widehat{\mathcal{R}}_\delta(\beta)$ denote the resulting estimator. The behavior of the estimator is governed by the adversarial radius $\delta \geq 0$. Specifically, Proposition~\ref{prop:4} and Proposition~\ref{prop:5} characterize its behavior by quantifying two extremes of the regularization path. Proposition~\ref{prop:4} defines the regime where the adversary is sufficiently weak. This allows the minimum $\ell_1$-norm interpolator to remain optimal. Conversely, Proposition~\ref{prop:5} establishes the threshold at which the adversary becomes so strong that the trivial zero vector is the only optimal solution. These results are adapted from \cite[thm.~1]{ribeiro_regularization_2023} and \cite[prop. 3]{ribeiro_regularization_2023}, respectively.
\begin{proposition}\label{prop:4}
    Assume $n < d$ and that the data matrix $X \in \R^{n \times d}$ has full row rank. Further, let $\hat{\nu}$ denote the solution of $\max_{\norm{X^\top \nu}_\infty \leq 1} \nu^\top X\alpha$ and let $\bar{\delta}$ be given by
    \begin{equation*}
        \bar{\delta} = \frac{1}{n}\norm{\hat{\nu}}_\infty^{-1}.
    \end{equation*}
    Then, the minimum $\ell_1$-norm interpolator $\widehat{\beta} = \argmin_{X\alpha = X\beta} \norm{\beta}_1$ minimizes the Adversarial PCA risk~\eqref{eq:advpca_risk} if and only if $\delta \in [0, \bar{\delta}\,]$.
\end{proposition}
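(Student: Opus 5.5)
The plan is to write down the first-order optimality conditions of the convex risk~\eqref{eq:advpca_risk} at an interpolator and match them against the KKT conditions of the minimum $\ell_1$-norm interpolation problem. This follows the strategy of \cite[thm.~1]{ribeiro_regularization_2023}, with the supervised target $y$ replaced by $y := X\alpha$.

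\textbf{Step 1: the subdifferential at an interpolator.} Write $g_i(\beta) = |y_i - x_i^\top\beta| + \delta\norm[1]{\beta}$. Each $g_i$ is convex and nonnegative, and $t\mapsto t^2$ is convex and nondecreasing on $[0,\infty)$. Hence $\widehat{\mathcal{R}}_\delta = \sum_i g_i^2$ is convex and finite everywhere. The chain rule for a nondecreasing convex outer function then gives $\partial (g_i^2)(\beta) = 2g_i(\beta)\,\partial g_i(\beta)$, and the sum rule applies.

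Now let $\widehat\beta$ satisfy $X\widehat\beta = y$, so that every residual vanishes and $g_i(\widehat\beta) = \delta\norm[1]{\widehat\beta}$. We obtain
\begin{equation*}
\partial \widehat{\mathcal{R}}_\delta(\widehat\beta) = \Bigl\{ 2\delta\norm[1]{\widehat\beta}\,\bigl(-X^\top s + n\delta v\bigr) \;:\; \norm{s}_\infty\le 1,\ v\in\partial\norm[1]{\widehat\beta}\Bigr\}.
\end{equation*}
The degenerate cases are handled separately. If $\delta=0$, or if $\widehat\beta=0$ (i.e.\ $X\alpha=0$), then the risk is $0$ at $\widehat\beta$, so $\widehat\beta$ is trivially optimal. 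In the second case one should also check that the threshold formula is consistent with this, reading $\bar\delta$ as $+\infty$. Otherwise, by convexity, $\widehat\beta$ is a minimizer if and only if there exist $s$ with $\norm{s}_\infty\le 1$ and $v\in\partial\norm[1]{\widehat\beta}$ such that $X^\top s = n\delta v$.

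\textbf{Step 2: duality for the minimum $\ell_1$-norm interpolator.} Full row rank makes $X\beta = y$ feasible. The Lagrange dual of $\min_{X\beta=y}\norm[1]{\beta}$ is $\max_{\norm{X^\top\nu}_\infty\le1}\nu^\top y$, and strong duality holds because the primal is a feasible linear program.

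Let $\widehat\beta$ be the minimum $\ell_1$-norm interpolator and $\nu$ any dual feasible point. Then $\nu$ is dual optimal if and only if $X^\top\nu\in\partial\norm[1]{\widehat\beta}$. Indeed, $\nu^\top y = (X^\top\nu)^\top\widehat\beta \le \norm[1]{\widehat\beta}$, with equality exactly when $X^\top\nu$ is a subgradient of $\norm[1]{\cdot}$ at $\widehat\beta$.

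\textbf{Step 3: matching the two conditions.}

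($\Leftarrow$) Suppose $\delta\le\bar\delta$. Take $s = n\delta\hat\nu$, so that $\norm{s}_\infty = n\delta\norm{\hat\nu}_\infty\le 1$, and take $v = X^\top\hat\nu\in\partial\norm[1]{\widehat\beta}$. Then $X^\top s = n\delta v$, which is the optimality condition from Step~1.

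($\Rightarrow$) Suppose the minimum $\ell_1$-norm interpolator minimizes the risk. Step~1 supplies $s$ and $v$ with $X^\top s = n\delta v$. Set $\nu = s/(n\delta)$. Then $X^\top\nu = v\in\partial\norm[1]{\widehat\beta}$, so $\nu$ is dual feasible, and by Step~2 it is dual optimal. Moreover $\norm{\nu}_\infty\le 1/(n\delta)$, which gives $\delta\le \frac1n\norm{\nu}_\infty^{-1}$.

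\textbf{Main obstacle.} The difficulty is in the ($\Rightarrow$) direction when the dual solution is not unique. The argument above produces \emph{some} dual optimum $\nu$, not necessarily $\hat\nu$. The clean fix is to interpret $\hat\nu$ as a dual solution of minimal $\ell_\infty$-norm, which is exactly what the two directions prove; equivalently, one assumes the dual solution is unique, as is generic under the full-row-rank assumption. A related point is that the ($\Leftarrow$) direction holds for any dual solution, so the statement is sharp precisely under that minimal-norm reading. The other technical point, the validity of the chain rule for $g_i^2$ at points where $g_i$ is nondifferentiable, is routine by monotonicity of $t^2$ on $[0,\infty)$.
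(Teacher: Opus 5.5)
Your proof is correct and shares the paper's skeleton. Both evaluate $\partial\widehat{\mathcal{R}}_\delta$ at an interpolator, reduce optimality to finding $s$ with $\|s\|_\infty\le 1$ and $v\in\partial\|\widehat{\beta}\|_1$ such that $X^\top s = n\delta v$, and tie $v$ to a basis-pursuit dual certificate.

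The difference is the ($\Rightarrow$) direction. The paper asserts that the subgradient in this condition must equal $X^\top\hat{\nu}$, then cancels $X^\top$ using full row rank to get $s = n\delta\hat{\nu}$. The duality lemma does not justify that assertion when the dual optimum is not unique. You instead build the certificate $\nu = s/(n\delta)$ and verify it is dual optimal. This needs no rank assumption: feasibility already comes from $\beta=\alpha$, and full row rank only makes the dual feasible set bounded. It also identifies the exact threshold $\frac{1}{n}\bigl(\min\{\|\nu\|_\infty : \nu \text{ dual optimal}\}\bigr)^{-1}$.

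Your caveat is not merely technical. Take $X$ with rows $(1,1,0)$ and $(0,1,1)$, and $\alpha = e_2$. Then $\widehat{\beta} = e_2$, and the dual optima are $\nu = (t,1-t)$ for $t\in[0,1]$. Checking $X^\top s = 2\delta v$ directly shows that $\widehat{\beta}$ minimizes the risk if and only if $\delta\le 1$. The dual optimum $\nu=(1/2,1/2)$ gives exactly this threshold. The equally valid dual optimum $\nu=(1,0)$ would give $\bar{\delta} = 1/2$, which breaks the ``only if''. So the minimal-norm (or uniqueness) reading you adopt is exactly what the statement needs, and your version is the sharper one.
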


\begin{proposition}\label{prop:5}
    The zero solution $\widehat{\beta} = 0$ minimizes the Adversarial PCA risk~\eqref{eq:advpca_risk} if and only~if 
    \begin{equation*}
        \delta \geq \delta_{\mathrm{max}} = \frac{\norm{X^\top X \alpha}_\infty}{\norm{X \alpha}_1}.
    \end{equation*}
    In addition, let $X = U \Sigma V^\top$ be the singular value decomposition and consider the case where $\alpha$ is set to the $j$th standard principal component, namely $\alpha := v_j$ with corresponding left singular vector $u_j$ and singular value $\sigma_j$. Then, $\delta_{\mathrm{max}}$ simplifies to $\delta_{\mathrm{max}} = \sigma_j \frac{\norm[\infty]{v_j}}{\norm[1]{u_j}}$.
\end{proposition}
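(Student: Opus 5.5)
The plan is to use first-order optimality conditions for the convex risk \eqref{eq:advpca_risk} at $\beta = 0$. The risk is a sum of squares of nonnegative convex functions, so it is convex. By Fermat's rule, $0$ is a minimizer if and only if $0 \in \partial \widehat{\mathcal{R}}_\delta(0)$, and the proof reduces to computing this subdifferential.

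Write $g_i(\beta) = \lvert \alpha^\top x_i - \beta^\top x_i\rvert + \delta\norm[1]{\beta}$, so that $\widehat{\mathcal{R}}_\delta = \sum_i g_i^2$.
\begin{itemize}
\item Since $g_i \geq 0$ and $t\mapsto t^2$ is nondecreasing on $[0,\infty)$, the chain rule gives $\partial (g_i^2)(0) = 2 g_i(0)\,\partial g_i(0)$, with $g_i(0) = \lvert \alpha^\top x_i\rvert$.
\item At $\beta = 0$, the first term of $g_i$ is differentiable whenever $\alpha^\top x_i \neq 0$, with gradient $-\mathrm{sign}(\alpha^\top x_i)\, x_i$.
\item When $\alpha^\top x_i = 0$ the prefactor $g_i(0)$ vanishes, so that sample contributes nothing.
\item The subdifferential of $\delta\norm[1]{\cdot}$ at $0$ is $\delta\,[-1,1]^d$. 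Being a common set added to each $\partial g_i(0)$, by the sum rule it aggregates into $\delta \sum_i \lvert\alpha^\top x_i\rvert\,[-1,1]^d$.
\end{itemize}
This yields
\begin{equation*}
\partial \widehat{\mathcal{R}}_\delta(0) = 2\Bigl(-X^\top X\alpha + \delta \norm[1]{X\alpha}\,[-1,1]^d\Bigr).
\end{equation*}
The first term uses $\sum_i \lvert\alpha^\top x_i\rvert\,\mathrm{sign}(\alpha^\top x_i)\, x_i = X^\top X\alpha$.

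Hence $0 \in \partial\widehat{\mathcal{R}}_\delta(0)$ if and only if $\norm[\infty]{X^\top X\alpha} \leq \delta\norm[1]{X\alpha}$, which is exactly $\delta \geq \delta_{\max}$. There is a degenerate case, $X\alpha = 0$. Then $\beta = 0$ trivially attains risk zero, and the threshold should be read as $0/0$ with the conventional interpretation; I would remark on this.

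For the second claim, substitute $\alpha = v_j$. The SVD gives $X v_j = \sigma_j u_j$ and $X^\top X v_j = \sigma_j^2 v_j$. Thus
\begin{equation*}
\delta_{\max} = \frac{\sigma_j^2\norm[\infty]{v_j}}{\sigma_j\norm[1]{u_j}} = \sigma_j\frac{\norm[\infty]{v_j}}{\norm[1]{u_j}},
\end{equation*}
assuming $\sigma_j > 0$.

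The main obstacle is justifying the subdifferential calculus rigorously. The chain rule for $g_i^2$ needs care at points where $g_i(0) = 0$. I would handle this cleanly by an alternative, direct argument via directional derivatives. Since $\widehat{\mathcal{R}}_\delta$ is convex, $0$ is optimal if and only if $\widehat{\mathcal{R}}_\delta'(0; h) \geq 0$ for all $h$. A direct computation gives
\begin{equation*}
\widehat{\mathcal{R}}_\delta'(0;h) = 2\sum_i \lvert\alpha^\top x_i\rvert\bigl(-\mathrm{sign}(\alpha^\top x_i)\,x_i^\top h + \delta\norm[1]{h}\bigr) = 2\bigl(-h^\top X^\top X\alpha + \delta\norm[1]{X\alpha}\norm[1]{h}\bigr).
\end{equation*}
This is nonnegative for all $h$ exactly when $\norm[\infty]{X^\top X\alpha}\leq \delta\norm[1]{X\alpha}$, by $\ell_1/\ell_\infty$ duality. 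This route sidesteps the chain-rule subtleties, and I would present it as the main proof.
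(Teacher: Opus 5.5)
Your proposal is correct and follows essentially the paper's route: the paper evaluates the subdifferential of the risk at $\beta=0$, obtaining $\frac{1}{2}\partial\widehat{\mathcal{R}}_\delta(0) = -X^\top X\alpha + \delta\norm[1]{X\alpha}\,\partial\norm[1]{0}$, reads off $\norm[\infty]{X^\top X\alpha}\le\delta\norm[1]{X\alpha}$, and then uses the same SVD substitution $Xv_j=\sigma_j u_j$, $X^\top X v_j=\sigma_j^2 v_j$. Your directional-derivative version is just the support-function form of that same condition, and the chain rule $\partial(g_i^2)=2g_i\,\partial g_i$ is in fact valid here because $g_i\ge 0$; your remarks on the degenerate cases $X\alpha=0$ and $\sigma_j>0$ are a small addition the paper omits.
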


\noindent
\begin{minipage}[t]{0.52\textwidth}
  \vspace{0pt} 
   Figure~\ref{fig:wrapped_phase_trans} visualizes the two regimes induced by $\bar{\delta}$ and $\delta_{\textrm{max}}$. We fit AdvPCA using $k=1$ over a range of $\delta$s on $n=500$ datapoints in $\R^{250}$. Worth noting is that when $\delta \leq \bar{\delta}$ in accordance with Proposition~\ref{prop:4}, since \mbox{AdvPCA} is initialized to PCA, it recovers the same $\beta$ as applying basis pursuit to the principal component. \\
   
   To better understand the generalization of Adversarial PCA, we bound the estimation error of our empirical estimator $\widehat{\beta}$ under the spiked covariance data model (see Appendix~\ref{app:spiked_cov}). The following proposition provides a deterministic upper bound on this error, measured in the squared $\Sigma_*$-norm.
\end{minipage}
\hfill
\begin{minipage}[t]{0.43\textwidth}
  \vspace{0pt}
  \centering
  \includegraphics[width=\textwidth]{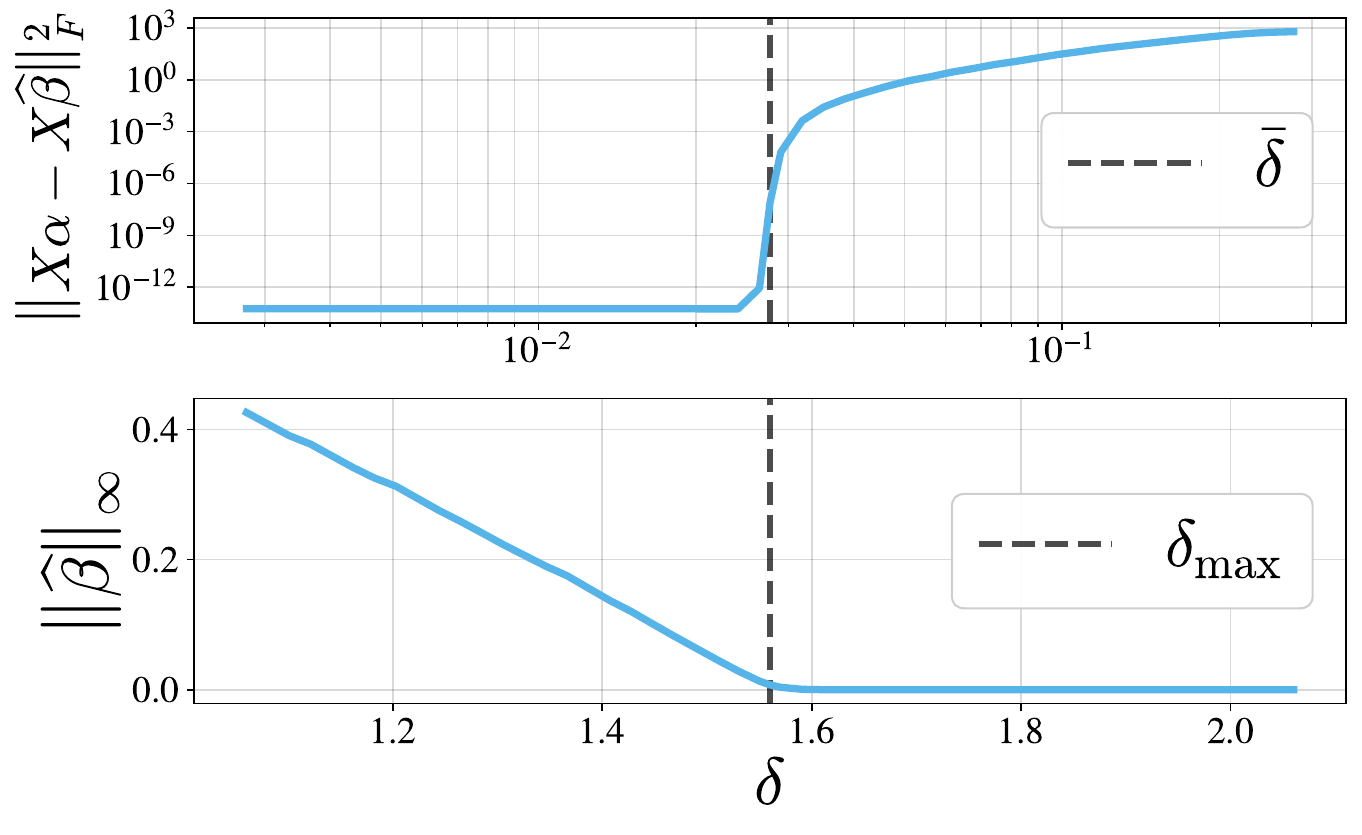}
  \captionof{figure}{\textbf{\protect\boldmath Phase transitions $\bar{\delta}$, $\delta_{\mathrm{max}}$.} \mbox{(\textit{Bottom})} $\widehat{\beta}=0$ for $\delta \geq \delta_{\textrm{max}}$. (\textit{Top}) $\widehat{\beta}$~interpolates $X\alpha$ when $\delta \in [0, \bar{\delta}\,]$.}
  \label{fig:wrapped_phase_trans}
\end{minipage}
\begin{proposition}\label{prop:6}
    Let $\widehat{\beta} \in \argmin_{\beta} \widehat{\mathcal{R}}_\delta(\beta)$ minimize the Adversarial PCA risk \eqref{eq:advpca_risk}, and let $\beta_*$ denote the true parameter with parameter error $\widehat{\Delta} = \widehat{\beta} - \beta_*$. Further, assume $X \in \R^{n \times d}$ follows the spiked covariance model with true covariance $\Sigma_*$ and empirical covariance $\widehat{\Sigma} = \frac{1}{n} X^\top X$. Then, for any fixed vector $\alpha$ and $\delta \geq 0$, the following bound on the error $\widehat{\Delta}$ holds:
    \begin{align*} 
    \scaleeq{\snorm{\widehat{\Delta}}_{\Sigma_*}^2 \leq \snorm{\widehat{\Delta}}_{\Sigma_* - \widehat{\Sigma}}^2 + \frac{4}{n} (\alpha - \beta_*)^\top X^\top X \widehat{\Delta} + \frac{4 \delta}{n} \norm[1]{X(\alpha - \beta_*)} \left( \norm[1]{\beta_*} - \snorm{\widehat{\beta}}_1 \right) + 2 \delta^2\left( \norm[1]{\beta_*}^2 + \snorm{\widehat{\beta}}_1^2 \right)}{0.87}.
    \end{align*}
\end{proposition}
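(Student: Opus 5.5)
The plan is to use the basic inequality $\widehat{\mathcal{R}}_\delta(\widehat{\beta}) \leq \widehat{\mathcal{R}}_\delta(\beta_*)$, which holds because $\widehat{\beta}$ is a minimizer. I then expand both sides and isolate $\snorm{X\widehat{\Delta}}_2^2 = n\,\snorm{\widehat{\Delta}}_{\widehat{\Sigma}}^2$. The decomposition
\[
\snorm{\widehat{\Delta}}_{\Sigma_*}^2 = \snorm{\widehat{\Delta}}_{\Sigma_*-\widehat{\Sigma}}^2 + \snorm{\widehat{\Delta}}_{\widehat{\Sigma}}^2
\]
reduces the claim to bounding $\snorm{\widehat{\Delta}}_{\widehat{\Sigma}}^2$. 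Here $\snorm{\cdot}_{M}^2$ means the quadratic form $v^\top M v$, even for the indefinite matrix $M = \Sigma_*-\widehat{\Sigma}$. The spiked covariance model is not needed beyond defining $\Sigma_*$ and $\beta_*$; the argument is deterministic.

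Set $w = X(\alpha-\beta_*)\in\R^n$. Then $X(\alpha-\widehat{\beta}) = w - X\widehat{\Delta}$. Expanding each square in the risk \eqref{eq:advpca_risk} gives
\[
\widehat{\mathcal{R}}_\delta(\beta) = \norm[2]{X(\alpha-\beta)}^2 + 2\delta\norm[1]{\beta}\norm[1]{X(\alpha-\beta)} + n\delta^2\norm[1]{\beta}^2 .
\]
Plugging in $\widehat{\beta}$ and $\beta_*$ and cancelling $\norm[2]{w}^2$, the basic inequality becomes
\[
\snorm{X\widehat{\Delta}}_2^2 \leq 2w^\top X\widehat{\Delta} + 2\delta\big(\norm[1]{\beta_*}\norm[1]{w} - \snorm{\widehat{\beta}}_1\snorm{w-X\widehat{\Delta}}_1\big) + n\delta^2\big(\norm[1]{\beta_*}^2 - \snorm{\widehat{\beta}}_1^2\big).
\]

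Next I handle the cross term. The reverse triangle inequality gives $\snorm{w-X\widehat{\Delta}}_1 \geq \norm[1]{w} - \snorm{X\widehat{\Delta}}_1$. This produces the desired $2\delta\norm[1]{w}(\norm[1]{\beta_*}-\snorm{\widehat{\beta}}_1)$ plus the leftover term $2\delta\snorm{\widehat{\beta}}_1\snorm{X\widehat{\Delta}}_1$.

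Absorbing this leftover term is the main step, and it fixes the constants. I use $\snorm{X\widehat{\Delta}}_1 \leq \sqrt{n}\,\snorm{X\widehat{\Delta}}_2$ and then Young's inequality $2ab\leq a^2/2 + 2b^2$ with $a = \snorm{X\widehat{\Delta}}_2$ and $b = \sqrt{n}\,\delta\snorm{\widehat{\beta}}_1$. This bounds the leftover by $\tfrac12\snorm{X\widehat{\Delta}}_2^2 + 2n\delta^2\snorm{\widehat{\beta}}_1^2$.

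Moving the half-square to the left side gives
\[
\tfrac12\snorm{X\widehat{\Delta}}_2^2 \leq 2w^\top X\widehat{\Delta} + 2\delta\norm[1]{w}\big(\norm[1]{\beta_*}-\snorm{\widehat{\beta}}_1\big) + n\delta^2\big(\norm[1]{\beta_*}^2+\snorm{\widehat{\beta}}_1^2\big).
\]
The $\delta^2$ term on the right equals $n\delta^2(\norm[1]{\beta_*}^2 - \snorm{\widehat{\beta}}_1^2) + 2n\delta^2\snorm{\widehat{\beta}}_1^2$, which is exactly $n\delta^2(\norm[1]{\beta_*}^2+\snorm{\widehat{\beta}}_1^2)$. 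Multiplying by $2/n$ yields
\[
\snorm{\widehat{\Delta}}_{\widehat{\Sigma}}^2 \leq \tfrac{4}{n}(\alpha-\beta_*)^\top X^\top X\widehat{\Delta} + \tfrac{4\delta}{n}\norm[1]{X(\alpha-\beta_*)}\big(\norm[1]{\beta_*}-\snorm{\widehat{\beta}}_1\big) + 2\delta^2\big(\norm[1]{\beta_*}^2+\snorm{\widehat{\beta}}_1^2\big).
\]
Adding $\snorm{\widehat{\Delta}}_{\Sigma_*-\widehat{\Sigma}}^2$ to both sides gives the statement of Proposition~\ref{prop:6}.
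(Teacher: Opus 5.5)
Your proposal is correct and follows essentially the paper's route: the basic inequality $\widehat{\mathcal{R}}_\delta(\widehat{\beta}) \leq \widehat{\mathcal{R}}_\delta(\beta_*)$, the reverse triangle inequality, $\norm[1]{\cdot} \leq \sqrt{n}\norm[2]{\cdot}$, and then absorbing the term linear in $\snorm{X\widehat{\Delta}}_2$. Your Young's inequality step is the paper's quadratic lemma (from $y^2 \leq by + c$ conclude $y^2 \leq b^2 + 2c$, with $y = \snorm{X\widehat{\Delta}}_2/\sqrt{n}$ and $b = 2\delta\snorm{\widehat{\beta}}_1$) stated directly, which lets you skip the paper's discriminant lemma, since that lemma is not needed for the bound.
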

\textit{Remark.} The utility of the bound in Proposition~\ref{prop:6} is that it isolates different sources of estimation error and demonstrates how the estimator behaves as $n$ grows. Specifically, we show in the proof given in Appendix \ref{app:proof:prop6} that \mbox{$\snorm{\widehat{\Delta}}_{\Sigma_* - \widehat{\Sigma}}^2 \leq \snorm{\Sigma_* - \widehat{\Sigma}}_\infty \snorm{\widehat{\Delta}}_1^2$} where the covariance gap shrinks at a rate of \mbox{$\snorm{\Sigma_* - \widehat{\Sigma}}_\infty \lesssim \sqrt{\ln(d)/n}$}, meaning this term vanishes asymptotically. Similarly, the term $\frac{1}{n}\norm[1]{X(\alpha - \beta_*)}$ concentrates around a constant. Therefore, by setting the radius $\delta \propto 1/\sqrt{n}$, all terms on the right-hand side vanish with increasing $n$ except for $\frac{4}{n} (\alpha - \beta_*)^\top X^\top X \widehat{\Delta}$. This term persists as a constant error floor, representing the geometric projection of the estimation error onto the adversarial direction $(\alpha - \beta_*)$ evaluated over the empirical data.

\section{Numerical Experiments}\label{sec:experiments}
While AdvPCA is valid for any $\delta \geq 0$, practical use requires \mbox{$\delta \in [0, \delta_{\max})$}. Guided by the theoretical error decay in the previous remark, we scale this maximum bound by setting $\delta := \delta_{\max} \sqrt{\ln (d) / n}$ throughout the experiments, unless stated otherwise. In Appendix~\ref{app:sec:multicomp}, we present additional results for other choices of $\delta$.
We compare AdvPCA to ordinary PCA, thresholded PCA and Sparse PCA. To establish straightforward baselines without relying on hyperparameter tuning (as this is not an obvious procedure), we run Sparse PCA using the default \texttt{scikit-learn} configuration. Similarly, for thresholded PCA, we simply apply a threshold of $\varepsilon=0.1$, which we find as a natural heuristic for truncating unit-norm principal components.

\textbf{Synthetic data from the spiked covariance model.} We evaluate our method using the spiked covariance model from \cite[ch. 8.2.2]{wainwright_high-dimensional_2019}, described in Appendix~\ref{app:spiked_cov}. In short, let $\beta_{*j} := e_j \in \mathbb{R}^d$ for $j = 1,\dots,k$ be the true principal directions with associated eigengaps $\tau_j \geq 0$. We form the covariance matrix $\Sigma_* = I_d + \sum_{j=1}^k \tau_j \beta_{*j} \beta_{*j}^\top$, ensuring all eigenvalues equal one except in the $k$ spiked directions. Finally, we draw $n$ datapoints $x_i \sim \mathcal{N}(0,\Sigma_*)$ to form $X \in \mathbb{R}^{n \times d}$, from which we aim to recover the true directions $\beta_{*1}, \dots,\beta_{*k}$; see Figure~\ref{fig:sparsevsdense} and Figure~\ref{fig:multi-comp-recovery}. We supplement these experiments with material in Appendix~\ref{app:smoothing} and Appendix~\ref{app:sec:spiked_cov}.

\begin{figure}[t]
     \centering
     \begin{subfigure}[b]{0.425\textwidth}
         \centering
         \includegraphics[width=1\textwidth]{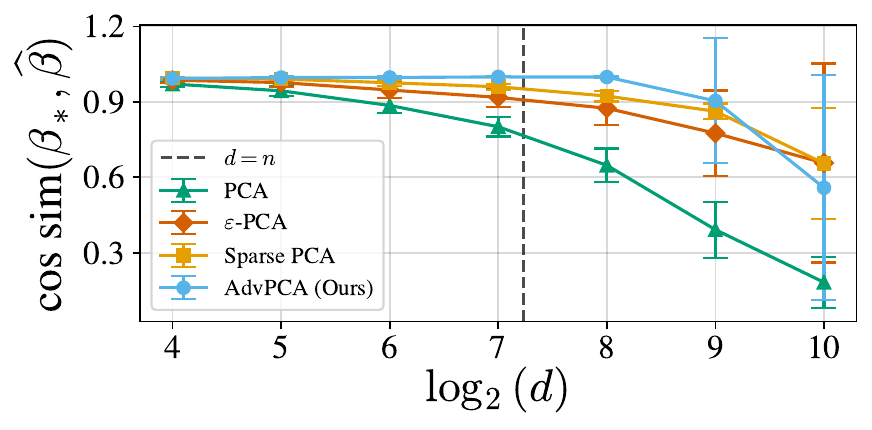}
     \end{subfigure}
     \hspace{0.5cm}
     \begin{subfigure}[b]{0.425\textwidth}
         \centering
         \includegraphics[width=1\textwidth]{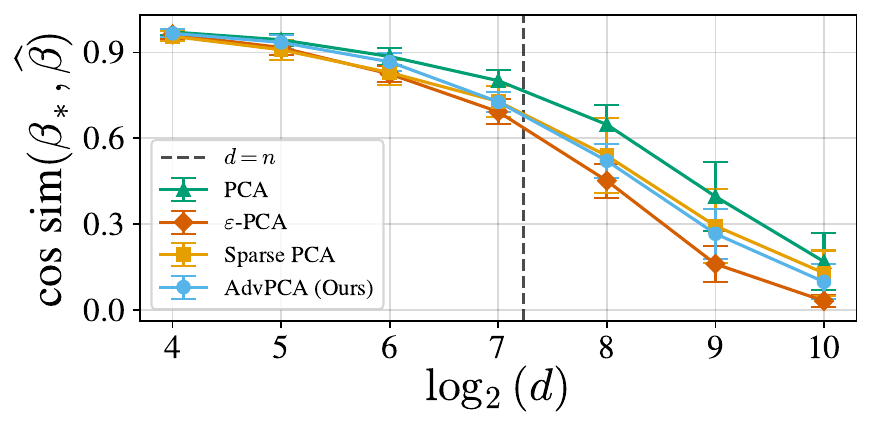}
     \end{subfigure}
     
     \caption{\textbf{Sparse versus dense system.} (\textit{Left}) We draw $n=150$ datapoints from the spiked covariance model with $\beta_* = e_1$ as the true sparse spike direction, and evaluate the recovery of this direction from the generated data by fitting the models using $k=1$. We measure the absolute cosine similarity over various $d$ for a fixed $\tau=2.5$. We report mean $\pm 1$ standard deviation over 20 independent draws. (\textit{Right}) The data matrix $X$ and the true direction $\beta_*$ are subjected to a random orthogonal rotation, which redistributes the signal across all dimensions and makes $\beta_*$~dense.}
     \label{fig:sparsevsdense}
\end{figure}
\begin{figure}[H]
     \centering
     \begin{subfigure}[b]{0.425\textwidth}
         \centering
         \includegraphics[width=1\textwidth]{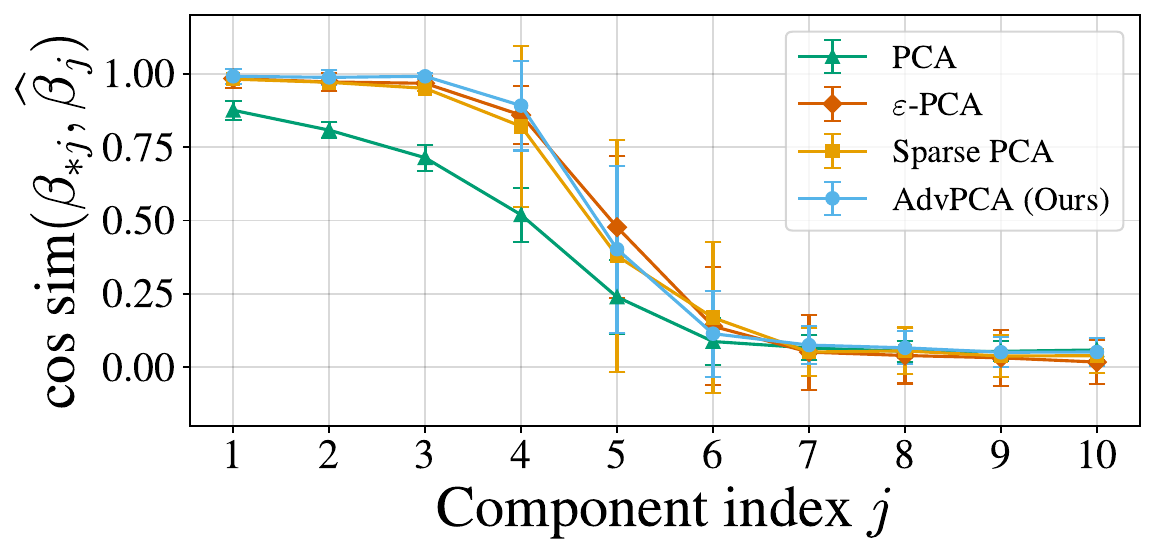}
     \end{subfigure}
     \hspace{0.5cm}
     \begin{subfigure}[b]{0.425\textwidth}
         \centering
         \includegraphics[width=1\textwidth]{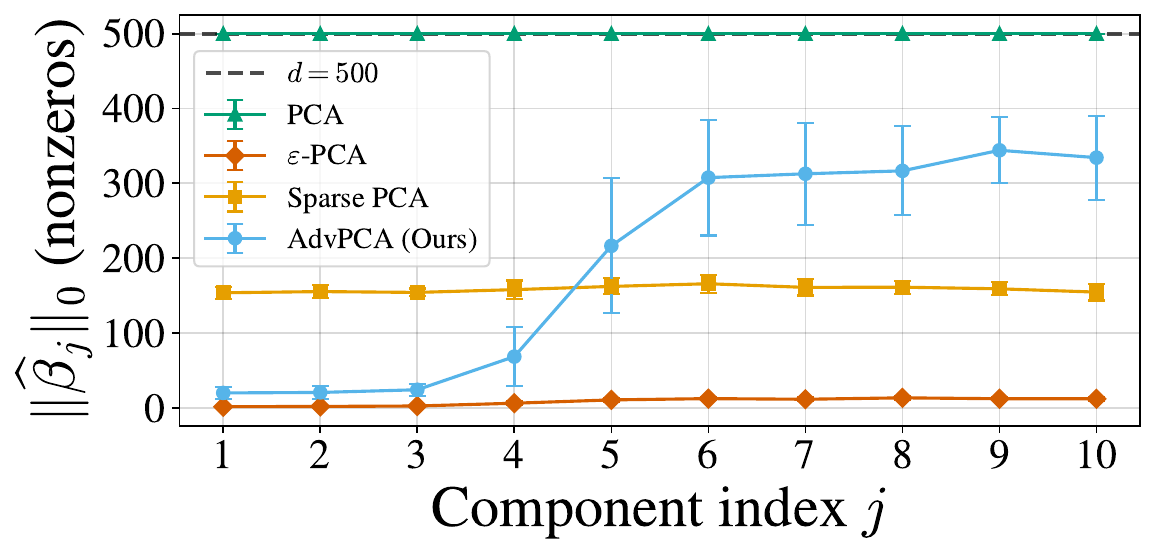}
     \end{subfigure}
     \caption{\textbf{Multi-component recovery.} We evaluate the methods on $k=10$ principal components where $n=500$ datapoints in $\mathbb{R}^{500}$ are drawn from the spiked model, and report mean $\pm 1$ standard deviation over 20 independent draws. We let the eigengaps $\tau_j$ decrease with $j$, making successive directions harder to recover (see Appendix~\ref{app:sec:multicomp} for details). (\textit{Left}) Absolute cosine similarity with true $\beta_{*j} = e_j$. (\textit{Right}) Corresponding $\ell_0$-norm showing sparsity of the solution.}
     \label{fig:multi-comp-recovery}
\end{figure}
\textbf{An application in genomics.} To complement the synthetic data, we analyze the high-dimensional MAGIC wheat dataset \cite{scott_limited_2021}, where the input features are gene sequence variations (SNPs). Sparsity is highly desirable in this context to identify distinct, co-varying genetic markers rather than dense combinations of the entire genome. As shown in Figure~\ref{fig:magic}, AdvPCA achieves an out-of-sample reconstruction error on par with the baseline methods while yielding sparser and more interpretable components. Further details on this experiment can be found in Appendix~\ref{app:sec:magic}.
\begin{figure}[H]
     \centering
     \begin{subfigure}[b]{0.365\textwidth}
         \centering
         \includegraphics[width=\textwidth]{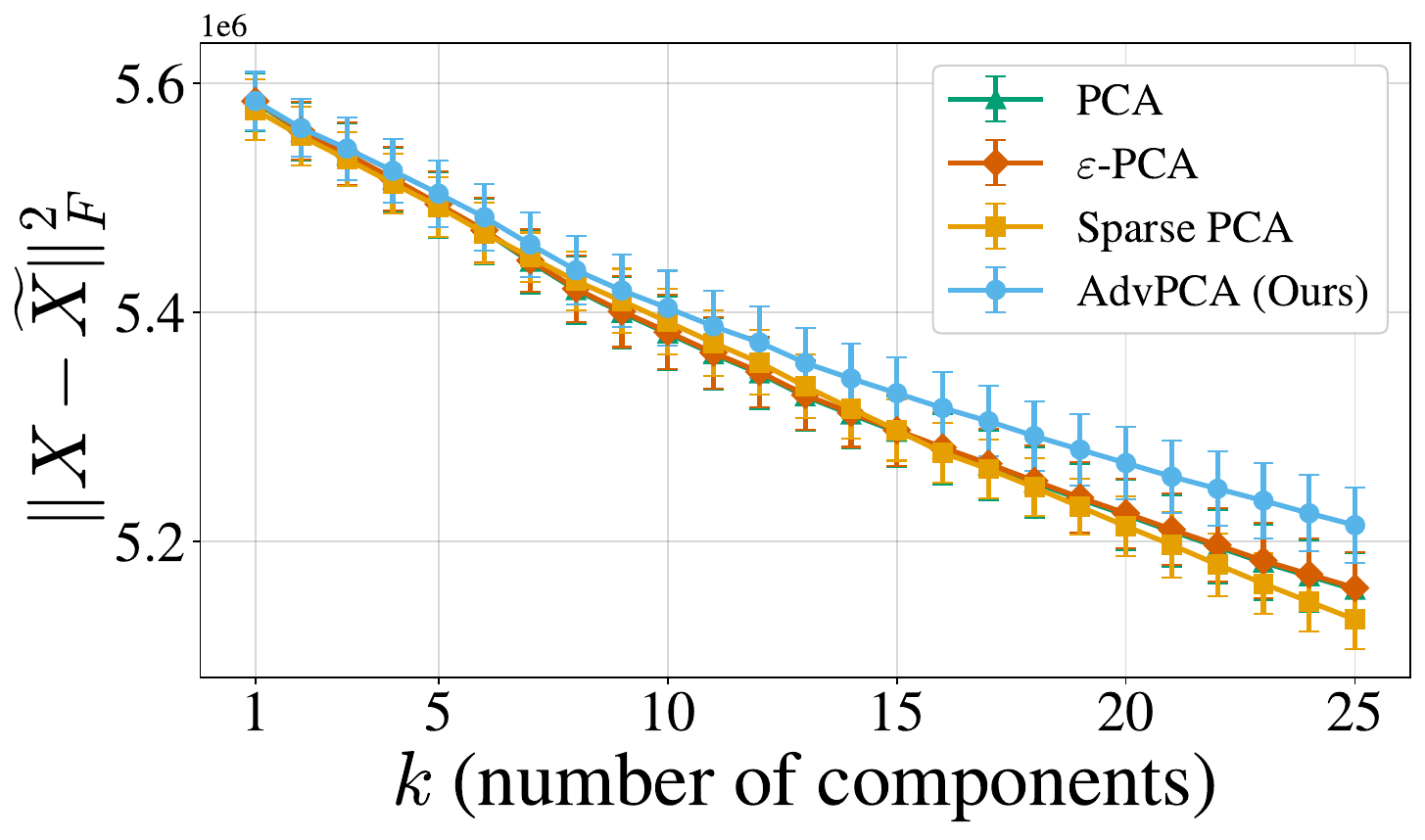}
     \end{subfigure}
     \hspace{0.5cm}
     \begin{subfigure}[b]{0.355\textwidth}
         \centering
         \includegraphics[width=\textwidth]{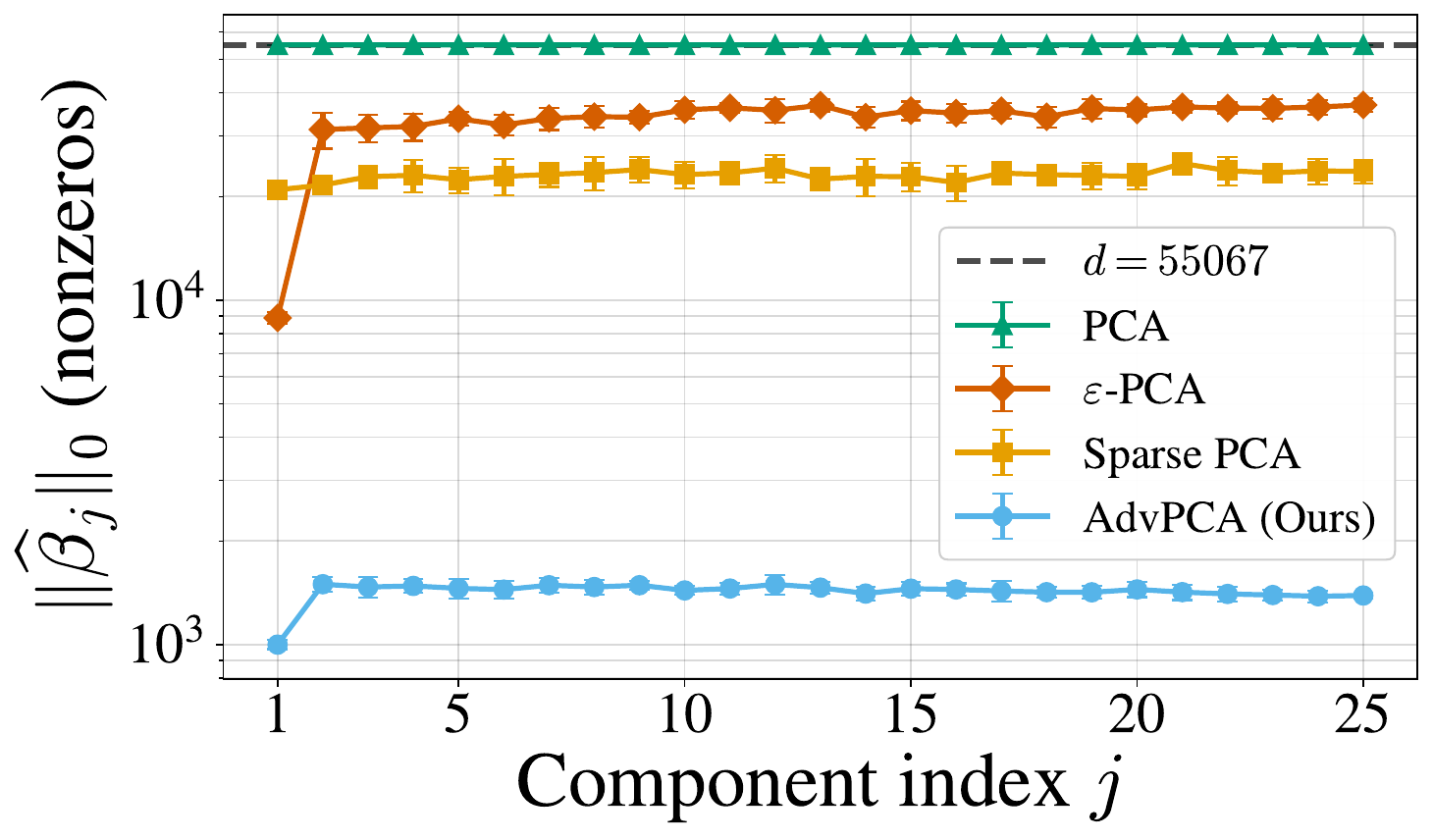}
     \end{subfigure}
    \caption{\textbf{Reconstructing the MAGIC wheat dataset.} We evaluate the methods on a multi-parent inter-cross population of wheat comprising $n = 504$ inbred lines genotyped across $d = 55\,067$ SNP markers. (\textit{Left}) We use an 80/20 train-test split, fitting $k=25$ components on the training set and evaluating the reconstruction error on the held-out test set. We report the mean $\pm$ 1 standard deviation across 10 independent splits. (\textit{Right})~Corresponding sparsity of the solution.}
     \label{fig:magic}
\end{figure}

\section{Conclusion}\label{sec:conclusion}
We propose a robust optimization approach to sparse PCA that naturally induces sparsity and is well-suited for high-dimensional problems. Hyperparameter tuning in unsupervised learning remains an important challenge in modern data science. The adversarial formulation provides a valuable perspective with favorable theoretical properties to guide the parameterization, showing promising results out of the box. Our work is best understood in the context of its limitations. AdvPCA solves a min-max problem that is not concave-convex, restricting global theoretical guarantees. We underline that the nonconvexity of the outer problem also applies to Sparse PCA \cite{zou_sparse_2006}. Furthermore, establishing the exact decay properties in Proposition~\ref{prop:6} warrants further study to better understand how the radius $\delta$ dictates the solution. A promising avenue for future work is extending the framework to kernel PCA using techniques similar to~\cite{ribeiro_kernel_2025}, as well as exploring other norms, e.g., structured PCA via the nonconvex $\ell_{1/2}$-norm~\cite{pmlr-v9-jenatton10a}. Finally, analyzing the adjusted total variance is also another important avenue for future work.

\section{Acknowledgments and Disclosure of Funding}
\label{sec:aknowledgment}
We would like to thank Paul Häusner and Ayca Özcelikkale for their valuable feedback on early versions of this manuscript.
AHR is partially supported  by the Wallenberg AI, Autonomous Systems and Software Program (WASP) funded by the Knut and Alice Wallenberg Foundation and by eSSENCE and SciLifeLab. 
AMHT and DZ are both supported by the Swedish Research Council under grants 2023-05234 and 2024-03903 respectively, as well as the Knut and Alice Wallenberg Foundation. FB is supported by the France 2030 program “PR[AI]RIE-PSAI” (ANR-23-IACL-0008).

\bibliographystyle{plainnat}
\bibliography{references}

\clearpage
\appendix
\pagenumbering{roman}
\thispagestyle{empty}
\onecolumn 
\etocdepthtag.toc{app} 

{\hypersetup{allcolors=black} 

\begin{center}
    {\LARGE A Robust Optimization Approach to Sparse Principal Component Analysis} \\[0.4cm]
    {\LARGE Appendix} \\[0.5cm]
    {\large \makebox[0pt]{David Vävinggren, Francis Bach, André M. H. Teixeira, Dave Zachariah, Antônio H. Ribeiro}} \\[0.5cm]
    
    \begin{minipage}{0.9\textwidth}
        \itshape
    In the Appendix, we provide additional material supporting the main text. In Appendix~\ref{app:sec:supp}, we supply information left out in the paper and also expand on some areas; in Appendix~\ref{app:sec:proofs}, we prove all results in the paper; and in Appendix~\ref{app:sec:exp}, we provide additional experimental results.
    \end{minipage}
\end{center}

\vspace{1cm}
\hrule
\vspace{0.3cm}

\etocsettagdepth{main}{-1}
\etocsettagdepth{app}{2}

\etocsettocstyle{\section*{Contents}}{} 

\begin{center}
\begin{minipage}{0.9\textwidth}
    \etocarticlestyle 
    \tableofcontents 
\end{minipage}
\end{center}

\vspace{0.5cm}
\hrule
\vspace{1cm}

\newpage
}

\setcounter{equation}{0}
\renewcommand{\theequation}{A.\arabic{equation}}
\setcounter{figure}{0}
\renewcommand{\thefigure}{A.\arabic{figure}}

\section{Supplementary Material}\label{app:sec:supp}

In this section, we provide supplementary material complementing the main text. We expand on some areas and provide additional information that could not fit in the paper.

\subsection{PCA and Eigenvector Inconsistency}\label{app:eig_incons}
PCA is a linear dimensionality reduction technique that exploits an important property of high-dimensional data: dimensions are often correlated meaning it is typically concentrated near a lower-dimensional manifold. This manifold is not necessarily linear, but PCA finds the best rank-$k$ linear approximation to it. Once the subspace is found the data can be projected onto it, which reduces the dimension while preserving as much information as possible. 
\begin{figure}[b!]
    \centering
    \includegraphics[width=0.4\linewidth]{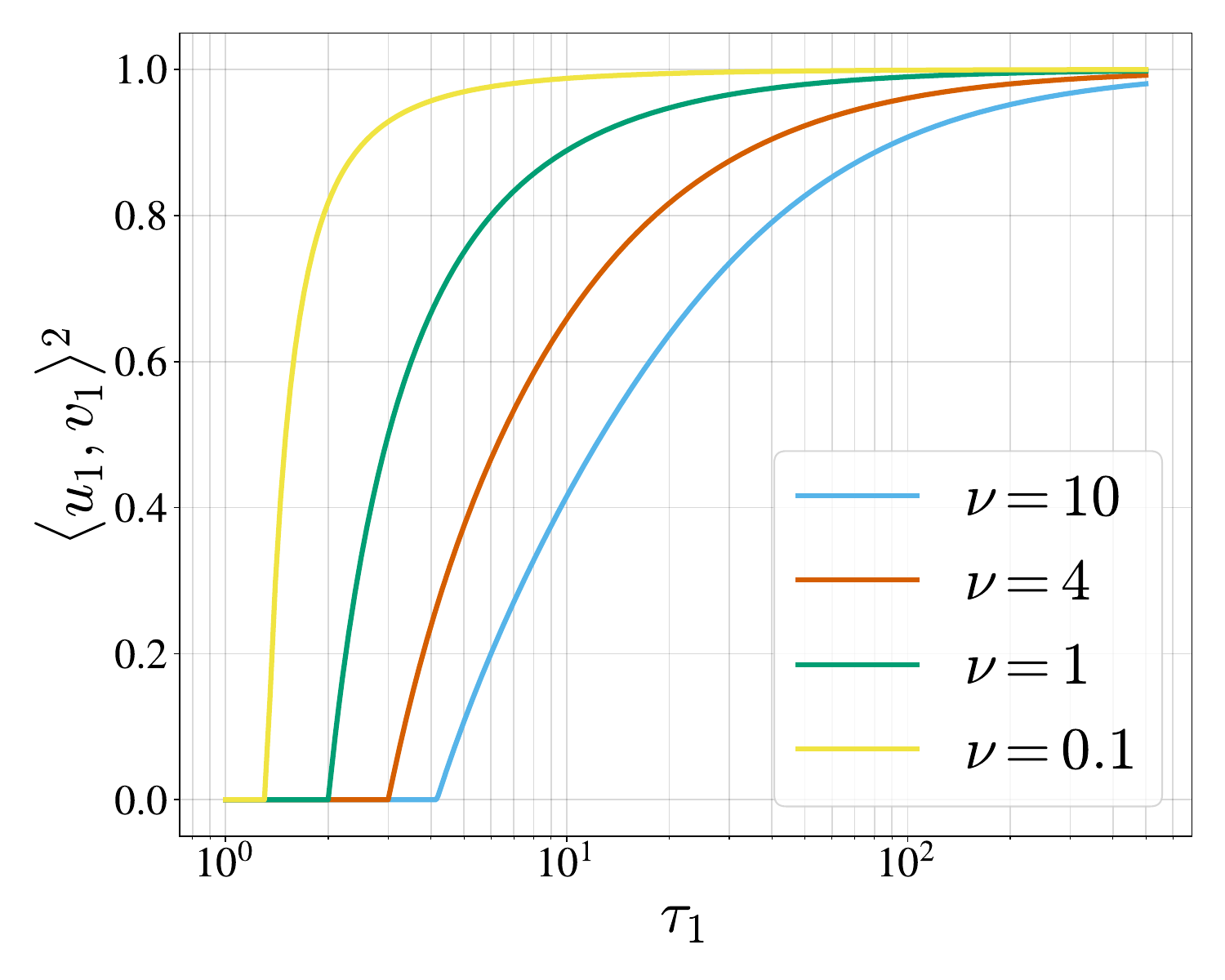}
    \caption{Plot illustrating the eigenvector inconsistency showcased in~\eqref{eq:inconcistency}. Here, $\langle u_1,v_1 \rangle = 0$ means that the estimate $v_1$ is orthogonal to $u_1$. Hence, the estimate is as good as a random guess and provides no information about $u_1$ for low signal strength $\tau_1$.}
    \label{fig:inconsistency}
\end{figure}

PCA hinges on the assumption that the dimension of the data $d$ is sufficiently small in relation to the number of datapoints available, $n$. The high-dimensional setting poses issues that have been studied extensively over the years. \citet{johnstone_pca_2018} give an orientation into these phenomena, and labels them \textit{eigenvalue spreading}, \textit{eigenvalue bias} and \textit{eigenvector inconsistency}. Perhaps least recognized of the three is the eigenvector inconsistency, which refers to the fact that leading eigenvectors in high-dimensional PCA can be inconsistent estimates when $d$ grows proportionally with $n$.

We assume the data  $\mathcal{D} = \{x_i\}_{i=1}^n$ are drawn i.i.d. from an underlying distribution $p_{x}$ on $\mathbb{R}^d$, with zero mean and covariance $\Sigma$. The sample covariance matrix is then given by $\widehat{\Sigma} = \frac{1}{n} \sum_{i=1}^n x_i x_i^\top$. The traditional formulation of large sample theory assumes $d$ to be fixed and $n$ to be large; in this setting it is well-known that $\widehat{\Sigma}$ estimates $\Sigma$ consistently. For example, if the data is assumed to be i.i.d., then $\widehat{\Sigma} \to \Sigma$ as $n \to \infty$ \citep{wainwright_high-dimensional_2019}. We write the population covariance as 
\begin{equation*}
    \Sigma = \sum_{\ell=1}^d \tau_{\ell} u_{\ell} u_{\ell}^\top = UTU^\top,
\end{equation*}
where $U$ is a $d \times d$ orthogonal matrix whose columns $\{u_{\ell}\}_{\ell=1}^d$ are the eigenvectors of $\Sigma$ and $T$ is a diagonal matrix containing the corresponding eigenvalues. By convention and without loss of generality, the eigenvalues are arranged in decreasing order such that \mbox{$\tau_1 \geq \tau_2 \geq \ldots \geq \tau_d$}. In the same fashion, we write the sample covariance matrix as
\begin{equation*}
    \widehat{\Sigma} = \sum_{\ell=1}^d \xi_{\ell} v_{\ell} v_{\ell}^\top = V \Xi V^\top,
\end{equation*}
where we use the same convention as for the population. The asymptotic result for the covariance matrix presented above now extends to its decomposition; if $d$ is considered fixed, one can show that $\xi_{\ell} \to \tau_{\ell}$ and $v_{\ell} \to u_{\ell}$ $\forall\ell=1,\dots,d$ as $n \to \infty$. However, if we no longer consider $d$ as fixed and instead let $d$ grow proportionally with $n$, the eigenvector inconsistency presents itself. 

For the leading eigenvector, \citet{johnstone_pca_2018} show that
\begin{equation}\label{eq:inconcistency}
\langle u_1, v_1\rangle ^2 \to
\begin{dcases}
\dfrac{1-\nu/(\tau_1 - 1)^2}{1+\nu/(\tau_1 - 1)}, &\tau_1 > 1 + \sqrt{\nu},\\
0,                                                   &\tau_1 \in [1,\, 1 + \sqrt{\nu}],
\end{dcases}
\end{equation}
when $d/n \to \nu > 0$. This result is plotted in Figure \ref{fig:inconsistency}. Broadly speaking, this result shows that the leading eigenvalue (corresponding to the signal strength along the leading eigenvector) needs to be large enough to match $d$. If $\tau_1$ remains fixed and $d$ and is allowed to grow, the inconsistency properties can only get worse.

\subsection{The Spiked Covariance Model}\label{app:spiked_cov}
In the Sparse PCA literature, it is common to study models under simpler classes of covariance matrices. Two common ones are the spiked Wigner model and the spiked Wishart model, e.g., considered in \cite{deshpande_information-theoretically_2014}. These allow for analysis and tractable derivations. 

In this work, we use the spiked covariance model from \cite[ch. 8.2.2]{wainwright_high-dimensional_2019}. Let $z_i \sim \mathcal{N}(0,1)$ and \mbox{$w_i \sim \mathcal{N}(0,I_d)$} be i.i.d. noise and $\beta_*\in \mathbb{R}^d$ be the true (unknown) spike direction. Moreover, let $\tau$ define the spike strength, which is a quantity similar to the Signal-to-Noise Ratio (SNR). Then, let \mbox{$x_i = \sqrt{\tau} z_i \beta_* + w_i, \; i=1,\dots,n$}, which means $x_i$ has zero mean and true covariance matrix $\Sigma_* = \tau \beta_* {\beta_*}^\top + I_d$. By construction, this setup means that $\beta_*$ is the unique maximal eigenvector of $\Sigma_*$ with eigenvalue $1+\tau$. All other eigenvalues are located at 1, meaning $\tau$ is the eigengap. 

To accommodate arbitrary eigenvalue spectra and multiple spike directions in our empirical evaluations, we generalize the rank-1 model to a full spectral formulation. Let $D = \mathrm{diag}(\lambda_1, \dots, \lambda_d)$ be a diagonal matrix of eigenvalues and $V \in \mathbb{R}^{d \times d}$ be an orthogonal matrix whose columns encode the corresponding eigenvectors. For a generalized $k$-spiked model, we configure the spectrum by setting the top $k$ eigenvalues to $\lambda_j = 1 + \tau_j$ for $j=1,\dots,k$, where $\tau_j$ are the respective eigengaps, and the remaining $d-k$ ambient eigenvalues to $1$. The orthogonal matrix $V$ can then be instantiated to inject arbitrary, potentially sparse, spike directions into the ambient space.

To draw a finite-sample data matrix $X \in \mathbb{R}^{n \times d}$ with the exact population covariance $\Sigma_* = V D V^\top$, we first sample a latent noise matrix $Z \in \mathbb{R}^{n \times d}$ whose entries are drawn i.i.d.\ from a standard normal distribution, $\mathcal{N}(0,1)$. The observed data matrix is then constructed via the transformation
\begin{equation}
    X = Z D^{1/2} V^\top.
\end{equation}
By construction, the $i$-th sample (corresponding to the $i$-th row of $X$) is formed by the linear combination $x_i = V D^{1/2} z_i$, where $z_i \in \mathbb{R}^d$ is the $i$-th row of $Z$. Because $z_i$ has an identity covariance matrix, this affine transformation guarantees that each sample has zero mean and exactly recovers the prescribed true covariance:
\begin{align*}
    \mathbb{E}[x_i x_i^\top] &= V D^{1/2} \mathbb{E}[z_i z_i^\top] D^{1/2} V^\top \\
    &= V D I_d V^\top \\
    &= V D V^\top = \Sigma_*.
\end{align*}
This generative procedure explicitly connects our theoretical covariance assumptions to the practical simulation of our datasets, allowing independent control over the geometric structure of the eigenvectors in $V$ and the signal-to-noise ratios in $D$.

\subsection{Efficient Solver of Linear Adversarial Regression Problems}\label{app:eta-trick}
In Section \ref{sec:solver}, we use Algorithm~\ref{alg:eta-trick} for the $k$ independent adversarial linear regression problems. Algorithm~\ref{alg:eta-trick} leverages what is sometimes referred to as the $\eta$-trick, see e.g~\cite{ItrickReloadedMultiple2019}.

\noindent
\begin{minipage}[t]{0.33\textwidth}
    \vspace{0pt} 
    It is an alternating minimization scheme designed to improve computational performance over standard general-purpose convex solvers like \texttt{cvxpy}~\cite{agrawal2018rewriting, diamond2016cvxpy}. Solving adversarial linear regression problems this way was proposed by~\cite{ribeiro_efficient_2025}. By reformulating the objective, the optimization is reduced to iteratively solving a weighted ridge regression problem (which admits a fast, closed-form solution) and updating the variational parameters $\eta$. We provide a proof for the algorithm in Appendix~\ref{app:proof:eta}. We want to highlight that a critical implementation detail (left out of the algorithm for clarity) is numerical stability. When $\delta \to 0$ or residuals vanish, the $\eta$ updates risk division by zero. In the actual implementation, this is routinely handled by adding a small smoothing constant $\varepsilon > 0$ to the denominators. We refer the reader to~\cite{ribeiro_efficient_2025} for a detailed discussion on this stabilization step.
\end{minipage}
\hfill
\begin{minipage}[t]{0.62\textwidth}
    \vspace{0pt}
    \begin{algorithm}[H]
        \caption{$\eta$-trick}
        \label{alg:eta-trick}
    \KwInput{$\{x_i\}_{i=1}^n;$ $\alpha;$ $\delta\geq0$}
    \KwInitialize{$w_i\gets 1$ for $i=1,\dots,n$ \\ $\qquad \; \gamma_j\gets 1$ for $j=1,\dots,d$}
    
    \Repeat{\texttt{StopCondition}}{
      \Comment{Solve weighted ridge problem}
      $\beta \gets \argmin_{\beta} \sum_{i=1}^n w_i(\alpha^\top x_i - \beta^\top x_i)^2 + \sum_{j=1}^d \gamma_j \beta_j^2$\;
      \Comment{Set $\eta$}
      \For{$i \gets 1$ \KwTo $n$}{
      $\eta_0^{(i)} \gets \dfrac{\lvert \alpha^\top x_i - \beta^\top x_i\rvert}{\lvert \alpha^\top x_i - \beta^\top x_i\rvert + \delta \lVert \beta \rVert_1}$\;
      \For{$j \gets 1$ \KwTo $d$}{
      $\eta_j^{(i)} \gets
       \dfrac{\delta\,\lvert \beta_j\rvert}{\lvert \alpha^\top x_i - \beta^\top x_i\rvert + \delta \lVert \beta \rVert_1}$\;
      }
      }    
      \Comment{Update weights}
      \For{$i \gets 1$ \KwTo $n$}{
      $w_i \gets \dfrac{1}{\eta_0^{(i)}}$\;
      }
      \For{$j \gets 1$ \KwTo $d$}{
      $\gamma_j \gets \delta^2 \sum_{i=1}^n \dfrac{1}{\eta_j^{(i)}}$\;
      }
      
    }
    \KwRet{$\beta$}
    \label{alg:eta-trick}
    \end{algorithm}
\end{minipage}

\subsection{Smoothed Update of the $A$ Matrix}\label{app:smoothing}
In Algorithm~\ref{alg:advpca}, we employ a smoothed update of the $A$ matrix to stabilize the iterative optimization procedure. For small values of $\delta$, we find empirically that the optimization is stable even without the smoothing, however, smoothing greatly improves the ability to converge for larger $\delta$s. In Figure~\ref{app:fig:convergence}, we show an example of the difference when running Algorithm~\ref{alg:advpca} with $\epsilon=0.5$ (smoothing) versus $\epsilon=0$ (no smoothing) using $\delta_j = 0.4 \delta_{\mathrm{max},j}$ for $j=1,\dots,10$ in the multi-component recovery experiment presented in Figure~\ref{fig:multi-comp-recovery}. The training loss refers to the objective in \eqref{eq:final_advpca}. We see that we achieve a much more stable decrease in loss as well as a lower final value.
\begin{figure}[H]
     \centering
     \begin{subfigure}[b]{0.49\textwidth}
         \centering
         \includegraphics[width=0.9\textwidth]{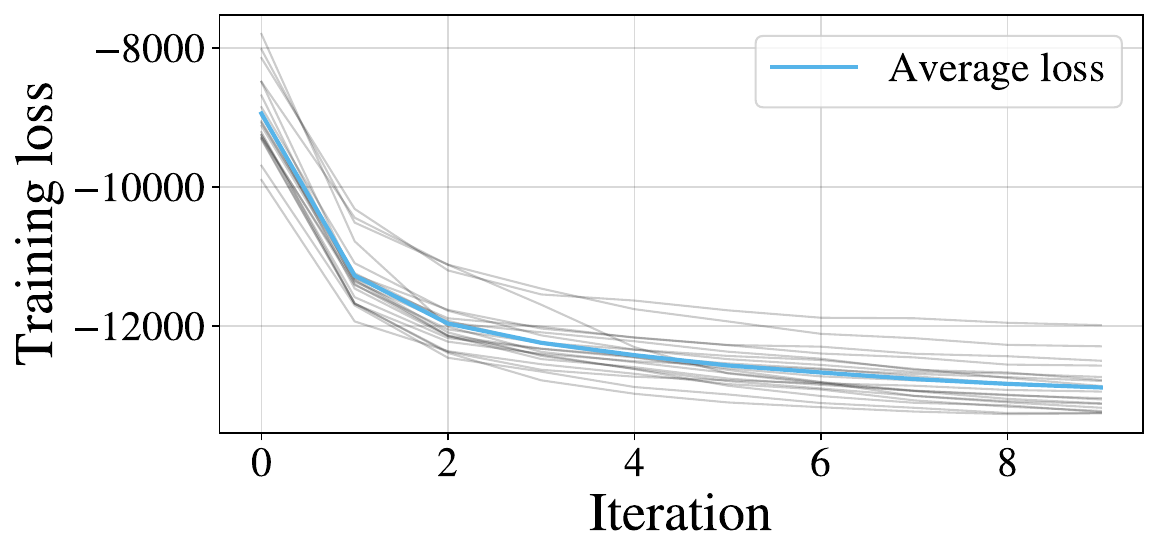}
     \end{subfigure}
     \hfill 
     \begin{subfigure}[b]{0.49\textwidth}
         \centering
         \includegraphics[width=0.9\textwidth]{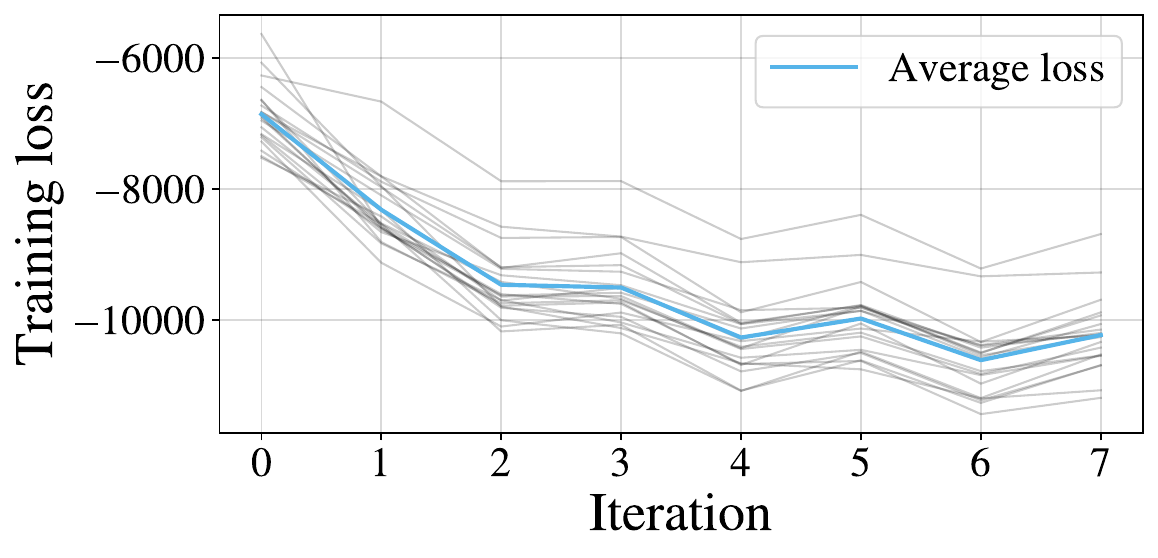}
     \end{subfigure}
     \caption{We use $\delta_j = 0.4 \delta_{\mathrm{max},j}$ for $j=1,\dots,10$ in the multi-component recovery experiment presented in Figure~\ref{fig:multi-comp-recovery} to illustrate the difference between optimizing with and without smoothing. The training loss refers to the objective in \eqref{eq:final_advpca}, and the grey curves are 20 independent runs from different random draws of data from the spiked model. (\textit{Left}) With smoothing, using $\epsilon=0.5$. (\textit{Right}) Without smoothing, using $\epsilon=0$.}
     \label{app:fig:convergence}
\end{figure}
To implement the smoothing, we cannot simply update \mbox{$A \gets \epsilon A + (1-\epsilon) A_{\mathrm{upd}}$}. Because the set of orthogonal matrices forms a curved manifold, taking a linear step pushes the matrix off the constraint surface into the ambient space, violating the strict requirement that $A^\top A = I_k$. To address this issue, we allow the step into the ambient space but also follow it up by projecting the smoothed matrix back onto the nearest valid point on the manifold of orthogonal matrices~\cite{edelman_geometry_1998}. Finding the closest orthogonal matrix to our smoothed target $A_{\mathrm{smooth}}$ is formulated as the orthogonal Procrustes problem
\begin{equation*}
    \min_{A^\top A = I} \lVert A - A_{\mathrm{smooth}} \rVert_F^2.
\end{equation*}
We solve this Procrustes problem in the exact same way as when updating $A$, i.e., using the singular value decomposition; see Appendix~\ref{app:proof:prop2}.

\subsection{The Subgradient}\label{app:subgradient}
We compute the subgradient of the Adversarial PCA risk~\eqref{eq:advpca_risk}, as this will be used in the proofs later on. Consider the minimization of the risk, 
\[
\argmin_\beta \widehat{\mathcal{R}}_\delta(\beta) = \argmin_\beta \sum_{i=1}^n \left( \lvert \alpha^\top x_i - \beta^\top x_i \rvert + \delta \norm[1]{\beta} \right) ^2,
\]
where we minimize over $\beta \in \mathbb{R}^d$. Let $g_i: \mathbb{R}^d \to \mathbb{R}$ be defined as $g_i(\beta) = \lvert \alpha^\top x_i - \beta^\top x_i \rvert + \delta \norm[1]{\beta}$, so that the risk $\empadvpcarisk{\beta}$ is given by
\[
\empadvpcarisk{\beta} = \sum_{i=1}^n g_i(\beta) ^2.
\]
$\widehat{\mathcal{R}}_\delta$ is convex in $\beta$ since $g_i$ is convex and nonnegative, but it is not differentiable everywhere because of the absolute value. For convex functions however, the notion of subgradients naturally extends derivatives into the space of non-differentiable functions. 

Given a convex function $f: \mathbb{R}^d \to \mathbb{R}$, a vector $v \in \mathbb{R}^d$ is said to be a subgradient of $f$ at $\beta$ if
\[
f(\beta') \geq f(\beta) + \langle v, \beta' - \beta \rangle \quad \text{for all $\beta' \in \mathbb{R}^d$}.
\]
The set of all subgradients of $f$ at $\beta$ is called the subdifferential, and is a convex set denoted $\partial f(\beta)$. If $f$ is differentiable at $\beta$, then this consists of a single element.

In case of the absolute value function $f : \R \to \R$ given by $f(\beta) = \lvert \beta \rvert$, the subdifferential is given by
\[
\partial \lvert \beta \rvert = 
\begin{dcases}
    \{+1\} & \beta > 0, \\
    \{-1\} & \beta < 0, \\
    [-1, +1] & \beta = 0.
\end{dcases}
\]
We use this to derive an expression for the subdifferential of $\empadvpcarisk{\beta}$, which is the set of all subgradients. To start,
\[
\partial \empadvpcarisk{\beta} = \partial \sum_{i=1}^n g_i(\beta)^2 = 2\sum_{i=1}^n g_i(\beta) \partial g_i(\beta),
\]
where $\partial g_i(\beta) = \partial \lvert \alpha^\top x_i - \beta^\top x_i \rvert + \delta \partial \norm[1]{\beta}$. Here, we apply the result for $\lvert \cdot \rvert$, which gives
\[
\partial \lvert \alpha^\top x_i - \beta^\top x_i \rvert = 
\underbrace{
\begin{dcases}
    \{+1\} & \alpha^\top x_i - \beta^\top x_i > 0 \\
    \{-1\} & \alpha^\top x_i - \beta^\top x_i < 0 \\
    [-1, +1] & \alpha^\top x_i - \beta^\top x_i = 0
\end{dcases}
}_{s_i}
\; \partial \left( \alpha^\top x_i - \beta^\top x_i \right) = s_i (-x_i),
\]
and 
\[
\partial \norm[1]{\beta} = z, \quad \text{where} \quad z_j = 
\begin{dcases}
    \{+1\} & \beta_j > 0 \\
    \{-1\} & \beta_j < 0 \\
    [-1, +1] & \beta_j = 0
\end{dcases}
\quad \text{for} \quad  j=1,\dots,d.
\]
We therefore have
\[
\partial \empadvpcarisk{\beta} = 2 \sum_{i=1}^n \left( \lvert \alpha^\top x_i - \beta^\top x_i \rvert + \delta \norm[1]{\beta} \right) (-s_i x_i + \delta z) \subseteq \mathbb{R}^d.
\]
By noting that $\lvert \alpha^\top x_i - \beta^\top x_i \rvert s_i = \alpha^\top x_i - \beta^\top x_i$, this can be simplified to
\[
\frac{1}{2} \partial \empadvpcarisk{\beta} = \{ X^\top X \left( \beta - \alpha \right) \} + \norm[1]{X(\beta - \alpha)}\delta z - \delta \norm[1]{\beta} X^\top s + n \delta^2 \norm[1]{\beta} z.
\]

\section{Proofs}\label{app:sec:proofs}
In the Proofs section, we give proofs to all results presented in the paper.

\subsection{Proof of Proposition~\ref{prop:1}}\label{app:proof:prop1}
 
\begin{proof}
We denote the adversarial PCA objective as
\begin{equation*}
\mathcal{L}(A, B) = \underset{r \in \Omega_{\delta}}{\text{max}} \; \norm[2]{x - A(B^\top x + r)}^2.
\end{equation*}

Expanding the norm gives
\begin{align*}
\norm[2]{x - A(B^\top x + r)}^2 &= (x - A(B^\top x + r))^\top (x - A(B^\top x + r)) \\
&= x^\top x - x^\top A(B^\top x + r) - (B^\top x + r)^\top A^\top x + (B^\top x + r)^\top \underbracket{A^\top A}_{= I_k} (B^\top x + r) \\
&= x^\top (I_d - 2A B^\top +B B^\top )x + 2r^\top(B-A)^\top x + r^\top r \\
&= x^\top \left[(B-A)(B-A)^\top + I_d - A A^\top \right]x + 2r^\top (B-A)^\top x + r^\top r.
\end{align*}

Letting $e = (B-A)^\top x$, where we note $e \in \mathbb{R}^k$, gives 
\begin{align*}
 \norm[2]{x - A(B^\top x + r)}^2 &= e^\top e + x^\top x - x^\top AA^\top x + 2r^\top e + r^\top r, 
\end{align*}

and hence
\begin{align*}
\mathcal{L}(A, B) &= e^\top e + x^\top x - x^\top A A^\top x + \underset{r \in \Omega_{\delta}}{\text{max}} \; \left\{ r^\top r + 2r^\top e \right\}.
\end{align*}

If we let $\Omega_\delta$ be the hyperrectangle given by $\Omega_{\delta} = \prod_{j=1}^k \left[-\delta_j \norm{\beta_j}_1, \delta_j \norm{\beta_j}_1\right]$, we get
\begin{align*}
\underset{r \in \Omega_{\delta}}{\text{max}} \; \left\{r^\top r + 2r^\top e \right\} &= \underset{|r_j| \leq \delta_j \norm[1]{\beta_j}}{\text{max}} \; \sum_{j=1}^k \left\{ r_j^2 + 2r_je_j \right\} = \sum_{j=1}^k \; \underset{|r_j| \leq \delta_j \norm[1]{\beta_j}}{\text{max}} \left\{ r_j^2 + 2r_je_j \right\}
\end{align*}

since we have $k$ separable constraints and $k$ separable objective functions. The maximization is now solvable directly:
\begin{align*}
\underset{|r_j| \leq \delta_j \norm[1]{\beta_j}}{\text{max}} \left\{ r_j^2 + 2r_je_j \right\} &= 
\begin{dcases}
\evat{r_j^2 + 2r_je_j}{r_j = +\delta_j \norm[1]{\beta_j}} \quad \text{if $e_j \geq 0$} \\[2.5ex]
\evat{r_j^2 + 2r_je_j}{r_j = -\delta_j \norm[1]{\beta_j}} \quad \text{if $e_j < 0$} \\ 
\end{dcases} \\[1em]
&= 
\begin{dcases}
    \delta_j^2\norm[1]{\beta_j}^2 + 2\delta_j\norm[1]{\beta_j}e_j \quad \text{if $e_j \geq 0$} \\
    \delta_j^2\norm[1]{\beta_j}^2 - 2\delta_j\norm[1]{\beta_j}e_j \quad \text{if $e_j < 0$}
\end{dcases} \\[1em]
&= \delta_j^2\norm[1]{\beta_j}^2 + 2\delta_j\norm[1]{\beta_j}|e_j|.
\end{align*}

So
\begin{empheq}[box=\fbox]{align*}
\mathcal{L}(A, B) &= x^\top x - x^\top AA^\top x + e^\top e + \sum_{j=1}^k \delta_j^2\norm[1]{\beta_j}^2 + 2\delta_j\norm[1]{\beta_j}|e_j| \\
&= \norm{x}_2^2 - \norm{A^\top x}_2^2 + \sum_{j=1}^k \left( |e_j| + \delta_j \norm[1]{\beta_j} \right)^2
\end{empheq}

leaving us with the final expression for $\mathcal{L}(A, B)$. Note that the perturbation maximizing the objective corresponds to $r_j = \text{sign}(e_j) \, \delta_j \norm{\beta_j}_1$. Also note that $\norm{x}_2^2 - \norm{A^\top x}_2^2 = \norm{x - A A^\top x }_2^2$ since $A^\top A = I_k$.

\end{proof}

\subsection{Proof of Proposition \ref{prop:2}}\label{app:proof:prop2}
\begin{proof}
We want to solve    
\begin{equation}\label{app:prop3:procrustes}
\min_{A} \norm{X - (XB + \widehat{R}) A^\top}_F^2 \quad \text{subject to} \quad A^\top A = I_k,
\end{equation}
with $\widehat{R} = \text{sign}(X(B-\widebar{A})) D \in \R^{n \times k}$. Expanding the norm gives
\begin{align*}
\norm{X - (XB + \widehat{R}) A^\top}_F^2
&= \Tr{(X - (XB + \widehat{R}) A^\top)^\top (X - (XB + \widehat{R}) A^\top)} \\
&\stackrel{\text{(a)}}{=} \Tr{X^\top X} - 2\Tr{X^\top (XB + \widehat{R})A^\top} \\
&\quad + \Tr{(XB + \widehat{R})^\top (XB + \widehat{R}) \underbracket{A^\top A}_{=I_k}},
\end{align*}
where (a) uses the cyclic property of the trace. Because $A^\top A = I_k$, the cross term is the only one dependent on $A$, meaning~\eqref{app:prop3:procrustes} is equivalent to
\begin{equation}
\max_{A} \; \Tr{X^\top (XB + \widehat{R})A^\top} \quad \text{subject to} \quad A^\top A = I_k.
\end{equation}
Now, let 
\begin{equation*}
    X^\top (XB + \widehat{R}) = U \Sigma V^\top
\end{equation*}
be the singular value decomposition. Since $X^\top (XB + \widehat{R}) \in \R^{d \times k}$ for $d\geq k$, we have $U \in \R^{d \times d}$, $\Sigma \in \R^{d \times k}$ and $V \in \R^{k \times k}$ where $U^\top U = UU^\top = I_d$ and $V^\top V = VV^\top = I_k$. Substituting the SVD gives
\begin{align*}
    \Tr{X^\top (XB + \widehat{R})A^\top} = \Tr{U \Sigma V^\top A^\top} = \Tr{V^\top A^\top U \Sigma}.
\end{align*}
Let $Q = V^\top A^\top U \in \R^{k \times d}$. Because $U$ and $V$ are orthogonal matrices and $A$ has orthonormal columns, $Q$ has orthonormal rows:
\[
QQ^\top = V^\top A^\top U (V^\top A^\top U)^\top = V^\top A^\top \underbracket{U U^\top}_{= I_d} A V = V^\top \underbracket{A^\top A}_{= I_k} V = V^\top V = I_k.
\]
Thus, we can rewrite the trace as
\begin{equation*}
    \Tr{Q \Sigma} = \sum_{i=1}^k q_{ii} \sigma_i.
\end{equation*}
Since $QQ^\top = I_k$, the row vectors of $Q$ have a unit $\ell_2$-norm, which bounds its elements such that $q_{ii} \leq 1$. Furthermore, by definition, the singular values are non-negative ($\sigma_i \geq 0$). Therefore, the sum $\sum q_{ii} \sigma_i$ is maximized when every diagonal element $q_{ii}$ is as large as possible. This maximum is achieved when $q_{ii} = 1$ for all $i=1, \dots, k$, which implies that $Q$ must be a rectangular (fat) identity matrix $I_{k \times d} = [I_k \mid 0]$. Setting $Q = I_{k \times d}$, we solve for the optimal $A$:
\[
\boxed{
    V^\top A^\top U = Q = I_{k \times d} \Leftrightarrow A = U I_{d \times k} V^\top = U_{1:k} V^\top
    } \;,
\]
where $U_{1:k}$ denotes the first $k$ columns of $U$. This yields the final closed-form update for $A$.

\end{proof}

\subsection{Proof of Proposition~\ref{prop:3}}\label{app:proof:prop3}
We start by showing the equivalence between input and latent space perturbations. 
\begin{proof}
We define 
\begin{equation*}
    \mathcal{L}(\alpha, \beta) = \underset{\norm[p]{\Delta x} \leq \delta}{\mathrm{max}} \norm[2]{x - \alpha \beta^\top(x + \Delta x)}^2.
\end{equation*}
Making the variable substitution $\beta^\top \Delta x = r$ and using Hölder's inequality \cite[ch.~9]{steele_cauchy-schwarz_2010}, namely
\[
\abs{r} = |\beta^\top \Delta x| \leq \norm{\Delta x}_p \norm{\beta}_q \leq \delta \norm{\beta}_q,
\]
gives
\[
\boxed{
\mathcal{L}(\alpha, \beta) = \underset{\norm[p]{\Delta x} \leq \delta}{\mathrm{max}} \norm[2]{x - \alpha (\beta^\top x + \beta^\top \Delta x)}^2 = \underset{\abs{r} \leq \delta \norm{\beta}_q}{\mathrm{max}} \norm[2]{x - \alpha (\beta^\top x + r)}^2}\; .
\]
To complete the proof, we must verify that the upper bound of Hölder's inequality is tight for any conjugate pair $p, q \geq 1$. By the definition of the dual norm (see e.g. \cite[app. A.1.6]{Boyd_Vandenberghe_2004})
\[
\norm{\beta}_q = \sup_{\norm{\nu}_p \leq 1} \beta^\top \nu,
\]
there always exists a vector $\nu$ satisfying $\norm{\nu}_p \leq 1$ that maximizes the inner product such that $\beta^\top \nu = \norm{\beta}_q$. By scaling this vector to utilize the full adversarial budget, we construct the worst-case perturbation $\Delta x_\star = \delta \nu$. This guarantees strict feasibility, as $\norm[p]{\Delta x_\star} \leq \delta$, and achieves exactly $\beta^\top \Delta x_\star = \delta \norm[q]{\beta}$. By symmetry, the lower bound $-\delta \norm[q]{\beta}$ is also attainable. Consequently, the inner product $r = \beta^\top \Delta x$ can span the entire interval $[-\delta \norm[q]{\beta}, \delta \norm[q]{\beta}]$, confirming the equivalence of the optimization domains.

\end{proof}

We move on and solve the inner maximum.

\begin{proof}
We start from the input space formulation
\begin{equation*}
    \mathcal{L}(\alpha, \beta) = \underset{\norm[p]{\Delta x} \leq \delta}{\mathrm{max}} \norm[2]{x - \alpha \beta^\top(x + \Delta x)}^2
\end{equation*}
and expand the norm:
\begin{align*}
    \norm[2]{x - \alpha \beta^\top(x + \Delta x)}^2 &= (x - \alpha \beta^\top(x + \Delta x))^\top (x - \alpha \beta^\top(x + \Delta x)) \\
    &= x^\top x - 2x^\top \alpha \beta^\top (x + \Delta x) + (x + \Delta x)^\top \beta \underbracket{\alpha^\top \alpha}_{=1} \beta^\top (x + \Delta x) \\
    &= \norm[2]{x}^2 - (\alpha^\top x)^2 + (\beta^\top x - \alpha^\top x)^2 +  (\beta^\top \Delta x)^2 + 2\beta^\top \Delta x(\beta^\top x - \alpha^\top x).
\end{align*} 
Isolating terms dependent on $\Delta x$ then gives
\begin{equation*}
    \mathcal{L}(\alpha, \beta) = \norm[2]{x}^2 - (\alpha^\top x)^2 + (\beta^\top x - \alpha^\top x)^2 + \underset{\norm[p]{\Delta x} \leq \delta}{\mathrm{max}}\Bigl\{ (\beta^\top \Delta x)^2 + 2\beta^\top \Delta x(\beta^\top x - \alpha^\top x)\Bigl\}.
\end{equation*}
We now let $r = \beta^\top \Delta x$ and $e = (\beta - \alpha)^\top x$:
\begin{equation*}
    \mathcal{L}(\alpha, \beta) = \norm[2]{x}^2 - (\alpha^\top x)^2 + e^2 + \underset{\norm[p]{\Delta x} \leq \delta}{\mathrm{max}}\{r^2+2re\}.
\end{equation*}
Again, by using Hölder's inequality, we finally have
\begin{equation*}
    \mathcal{L}(\alpha, \beta) = \norm[2]{x}^2 - (\alpha^\top x)^2 + e^2 + \underset{|r| \leq \delta\norm[q]{\beta}}{\mathrm{max}}\{r^2+2re\},
\end{equation*}
which is maximized by $r=\delta\norm[q]{\beta}$ if $e \geq 0$ and $r=-\delta\norm[q]{\beta}$ if $e < 0$. Hence
{\setlength{\jot}{10pt}
\begin{empheq}[box=\fbox]{align*}
    \mathcal{L}(\alpha, \beta) &= \norm[2]{x}^2 - (\alpha^\top x)^2 + e^2 + 
    \begin{dcases}
        \delta^2\norm[q]{\beta}^2 + 2\delta\norm[q]{\beta}e & \text{if } e \geq 0 \\
        \delta^2\norm[q]{\beta}^2 - 2\delta\norm[q]{\beta}e & \text{if } e < 0
    \end{dcases} \\
    &= \norm[2]{x}^2 - (\alpha^\top x)^2 + |e|^2 + \delta^2\norm[q]{\beta}^2 + 2\delta\norm[q]{\beta}|e| \\
    &= \norm[2]{x}^2 - (\alpha^\top x)^2 + (|e| + \delta \norm[q]{\beta})^2
\end{empheq}}
leaving us with our final expression for $\mathcal{L}(\alpha, \beta)$. Note that the perturbation maximizing the objective corresponds to $r = \text{sign}(e) \, \delta \norm{\beta}_q$. Also note that $\norm{x}_2^2 - (\alpha^\top x)^2 = \norm{x - \alpha \alpha^\top x }_2^2$ since $\alpha^\top \alpha = 1$.

\end{proof}

We end with showing the case when $\alpha = \beta$.

\begin{proof}
\small
\begin{align*}
    \underset{\norm[p]{\Delta x} \leq \delta}{\mathrm{max}} \norm[2]{x - \beta \beta^\top(x + \Delta x)}^2 &= \underset{\norm[p]{\Delta x} \leq \delta}{\mathrm{max}} {\footnotesize \left\{ x^\top x - 2x^\top\beta \beta^\top(x + \Delta x) + (x + \Delta x)^\top\beta \underbracket{\beta^\top \beta}_{=1} \beta^\top  (x + \Delta x)\right\}} \\
    &= \underset{\norm[p]{\Delta x} \leq \delta}{\mathrm{max}}\left\{ x^\top(I - \beta\beta^\top)x + \Delta x^\top\beta\beta^\top\Delta x \right\} \\
    &= 
x^\top(I - \beta \beta^\top)x + \underset{\norm[p]{\Delta x} \leq \delta}{\mathrm{max}} (\beta^\top \Delta x)^2 \stackrel{\text{(a)}}{=} \snorm{x - \beta \beta^\top x}_2^2 + \delta^2 \norm[q]{\beta}^2
\end{align*}

\normalsize where we use Hölder's inequality in (a), i.e., that \mbox{$|\beta^\top\Delta x| \leq \norm[p]{\Delta x} \norm[q]{\beta} \leq \delta \norm[q]{\beta}$}.

\end{proof}

\subsection{Proof of Proposition \ref{prop:4}}
This proof is adapted from \cite[thm.~1]{ribeiro_regularization_2023}. 

The following lemma is used to prove Proposition \ref{prop:4}. We start by proving this.
\begin{lemma}\label{app:lemma:dual_prob}
    Let $X \in \R^{n \times d}$ be full row rank. The minimum $\ell_1$-norm interpolator then admits the following dual problem:
    \begin{equation*}
    \min_{X\alpha = X\beta} \norm{\beta}_1 = \max_{\norm{X^\top \nu}_\infty \leq 1} \nu^\top X \alpha.
    \end{equation*}
    Furthermore, $\widehat{\beta}$ and $\hat{\nu}$ are primal and dual optimal, respectively, if and only if \mbox{$X^\top \hat{\nu} \in \partial\snorm{\widehat{\beta}}_1$}.
\end{lemma}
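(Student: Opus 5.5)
The plan is to treat this as a standard instance of Lagrangian (or Fenchel) duality for an $\ell_1$-minimization problem with linear equality constraints, and then read off optimality from the KKT conditions. Write the primal as $\min_\beta \norm[1]{\beta}$ subject to $X\beta = X\alpha$. The full-row-rank assumption on $X$ guarantees that $X\beta = X\alpha$ is feasible. It is feasible regardless, since $\beta=\alpha$ works. More importantly, full row rank means the affine constraint set is nonempty. Because the problem is a convex program with only affine constraints and a finite objective, Slater's condition holds in its refined form for affine constraints, so strong duality applies and dual attainment is guaranteed.

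\textbf{Step 1 (dual derivation).} Introduce a multiplier $\nu \in \R^n$ and form $L(\beta,\nu) = \norm[1]{\beta} + \nu^\top(X\alpha - X\beta)$. Minimizing over $\beta$ gives $\nu^\top X\alpha + \inf_\beta\{\norm[1]{\beta} - (X^\top\nu)^\top\beta\}$. By the definition of the dual norm (the $\ell_\infty$-norm is dual to $\ell_1$, as already used in the proof of Proposition~\ref{prop:3}), this infimum equals $0$ if $\norm[\infty]{X^\top\nu}\le 1$ and $-\infty$ otherwise. The dual is therefore $\max_{\norm[\infty]{X^\top\nu}\le1} \nu^\top X\alpha$. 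Weak duality can also be checked directly: for any feasible pair, $\nu^\top X\alpha = \nu^\top X\beta = (X^\top\nu)^\top\beta \le \norm[\infty]{X^\top\nu}\norm[1]{\beta}\le\norm[1]{\beta}$.

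\textbf{Step 2 (strong duality and attainment).} Invoke strong duality for convex problems with linear equality constraints, which requires no further constraint qualification. The primal optimum is attained because a continuous coercive function is minimized over a closed nonempty set. Dual attainment follows as well. Alternatively, observe that under full row rank the map $\nu\mapsto X^\top\nu$ is injective, so the dual feasible set is compact and the maximum exists. I expect this to be the only genuinely delicate point, and I would settle it by citing the standard linear-programming or Fenchel duality theorem. Indeed, the primal can be rewritten as an LP, which gives zero duality gap whenever it is feasible and bounded.

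\textbf{Step 3 (optimality characterization).} For primal-feasible $\widehat\beta$ and dual-feasible $\hat\nu$, the chain of inequalities in Step 1 shows that the two objective values coincide if and only if $(X^\top\hat\nu)^\top\widehat\beta = \norm[1]{\widehat\beta}$ with $\norm[\infty]{X^\top\hat\nu}\le1$. This is exactly the characterization of the subdifferential of the $\ell_1$-norm recorded in Appendix~\ref{app:subgradient}: $(X^\top\hat\nu)_j=\operatorname{sign}(\widehat\beta_j)$ when $\widehat\beta_j\neq0$, and $(X^\top\hat\nu)_j\in[-1,1]$ otherwise. In short, $X^\top\hat\nu\in\partial\snorm{\widehat\beta}_1$.

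By strong duality, a feasible pair is optimal if and only if the duality gap vanishes, which gives the stated equivalence. For the converse direction, note that $X^\top\hat\nu\in\partial\norm[1]{\widehat\beta}$ automatically implies dual feasibility, namely $\norm[\infty]{X^\top\hat\nu}\le1$.
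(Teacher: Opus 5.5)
Your proposal is correct and follows essentially the same route as the paper: the same Lagrangian $L(\beta,\nu)=\|\beta\|_1+\nu^\top(X\alpha-X\beta)$, the same dual-norm (conjugate) computation giving the constraint $\|X^\top\nu\|_\infty\le 1$, and strong duality from Slater's condition for affine constraints. The only difference is the last step, where you obtain the subgradient condition from equality in the weak-duality (H\"older) chain rather than from stationarity $0\in\partial\|\widehat\beta\|_1-X^\top\hat\nu$ of the Lagrangian; this gives you both directions of the equivalence (under the implicit but necessary assumption that $\widehat\beta$ is primal feasible), whereas the paper only writes out necessity.
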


\begin{proof}
We study the primal problem
\[
\min_{X\alpha = X\beta} \norm{\beta}_1.
\]
We first note that we have a feasible problem ($\beta=\alpha$ trivially feasible) with a convex objective function subject to an affine constraint. Therefore, Slater's condition is satisfied and strong duality holds (see e.g. \cite[ch.~5.2.3]{Boyd_Vandenberghe_2004}). 

The following is standard duality theory, with notation from \cite[ch.~5]{Boyd_Vandenberghe_2004}. The Lagrangian of the primal problem is given by
\[
L(\beta, \nu) = \norm{\beta}_1 + \nu^\top (X\alpha - X \beta)
\]
where $\nu \in \R^n$ denotes the dual variable. The Lagrange dual function is then given by
\begin{align*}
g(\nu) 
= \inf_{\beta \in \R^d} L(\beta, \nu) 
= \inf_{\beta \in \R^d} \left\{ \norm{\beta}_1 + \nu^\top (X\alpha - X \beta) \right\} 
&= \nu^\top X\alpha + \inf_{\beta \in \R^d} \left\{ \norm{\beta}_1 - (X^\top \nu)^\top \beta \right\} \\
&\eqstack{(a)} \nu^\top X\alpha -\sup_{\beta \in \R^d} \left\{ (X^\top \nu)^\top \beta - \norm{\beta}_1 \right\},
\end{align*}
where we in (a) use that $\inf_x \{ f(x) \} = -\sup_x \{ -f(x) \}$ holds for an arbitrary function $f$. By the definition of the conjugate of a function, see e.g. \cite[ch.~3.3]{Boyd_Vandenberghe_2004}, we identify that the sup is a conjugate:
\[
\sup_{\beta \in \R^d} \left\{ (X^\top \nu)^\top \beta - \norm{\beta}_1 \right\} = \norm{X^\top \nu}_1^*.
\]
Further, the conjugate of this $\ell_1$-norm is given by
\[
\norm{X^\top \nu}_1^* = 
\begin{dcases}
0 & \norm{X^\top \nu}_\infty \leq 1, \\
\infty & \text{otherwise}.
\end{dcases}
\]

So for the Lagrange dual function $g(\nu)$ to be bounded, we have the implicit constraint \mbox{$\norm{X^\top \nu}_\infty \leq 1$}. Using this, we can formulate the dual problem:
\[
\boxed{
\sup g(\nu) = \sup_{\norm{X^\top \nu}_\infty \leq 1} \nu^\top X\alpha
} \;.
\]
Finally, to establish the optimality condition, we rely on the KKT conditions (see e.g. \cite[ch.~5.5.3]{Boyd_Vandenberghe_2004}). Since strong duality holds, the primal optimal $\widehat{\beta}$ must minimize the Lagrangian evaluated at the dual optimal $\hat{\nu}$. This means the zero vector must be in the subdifferential of $L(\beta, \hat{\nu})$ with respect to $\beta$, evaluated at $\widehat{\beta}$:
\[
\boxed{
0 \in \partial \snorm{\widehat{\beta}}_1 - X^\top \hat{\nu} \Rightarrow X^\top \hat{\nu} \in \partial \snorm{\widehat{\beta}}_1
} \;.
\]

\end{proof}

We now move on to prove Proposition \ref{prop:4}.

\begin{proof}
Let $\widehat{\beta}$ be the minimum $\ell_1$-norm interpolator $\argmin_{X\alpha = X\beta}\norm{\beta}_1$ and let $X \in \R^{n \times d}$ be full row rank such that $n < d$. We want to prove that $\widehat{\beta}$ minimizes the Adversarial PCA risk~\eqref{eq:advpca_risk}, given by
\[
\widehat{\mathcal{R}}_\delta(\beta) = \sum_{i=1}^n \left( \lvert \alpha^\top x_i - \beta^\top x_i \rvert + \delta \norm[1]{\beta} \right) ^2,
\]
if and only if 
\[
\delta \leq \frac{1}{n}\norm{\hat{\nu}}_\infty^{-1},
\]
where $\hat{\nu}$ is the solution to the convex problem $\max_{\norm{X^\top \nu}_\infty \leq 1} \nu^\top X\alpha$.

To prove the theorem, we make use of the subdifferential of the risk (see Appendix~\ref{app:subgradient}), satisfying
\[
\frac{1}{2} \partial \empadvpcarisk{\beta} = \{ X^\top X ( \beta - \alpha ) \} + \snorm{X(\beta - \alpha)}_1 \delta z - \delta \snorm{\beta}_1 X^\top s + n \delta^2 \snorm{\beta}_1 z.
\]
Evaluating the subdifferential at the interpolator $\widehat{\beta}$ means we have $X\widehat{\beta} = X\alpha$, which simplifies the subdifferential according to
\[
\frac{1}{2} \partial \widehat{\mathcal{R}}_\delta(\widehat{\beta}) = - \delta \snorm{\widehat{\beta}}_1 X^\top \partial \norm{0}_1 + n \delta^2 \snorm{\widehat{\beta}}_1 \partial \snorm{\widehat{\beta}}_1.
\]
Because $\widehat{\mathcal{R}}_\delta$ is convex, $\widehat{\beta}$ is a global minimizer if and only if $0 \in \partial \widehat{\mathcal{R}}_\delta(\widehat{\beta})$. For the zero vector to be in this set, there must exist some specific subgradient $\hat{s} \in \partial \norm[1]{0}$ and some $\hat{z} \in \partial \snorm{\widehat{\beta}}_1$ such that
\[
0 = -\delta \snorm{\widehat{\beta}}_1 X^\top \hat{s} + n \delta^2 \snorm{\widehat{\beta}}_1 \hat{z}.
\]
Assuming $\widehat{\beta} \neq 0$, we equivalently have
\[
X^\top \hat{s} = n \delta \hat{z}.
\]
From Lemma \ref{app:lemma:dual_prob}, we know that the specific subgradient $\hat{z} \in \partial \snorm{\widehat{\beta}}_1$ at optimality is determined by the dual solution as $\hat{z} = X^\top \hat{\nu}$. Substituting this yields
\[
X^\top \hat{s} = n \delta X^\top \hat{\nu}.
\]
By assumption, $X \in \R^{n \times d}$ with $n < d$ has full row rank, which means its transpose $X^\top$ has full column rank. Consequently, $X^\top$ has a trivial null space, allowing us to drop it from both sides of the equation and obtain
\[
\hat{s} = n\delta \hat{\nu}.
\]

By definition, any element $\hat{s} \in \partial \norm{0}_1$ must satisfy $\snorm{\hat{s}}_\infty \leq 1$. Therefore, for $\widehat{\beta}$ to be the global minimizer, we must have
\[
\norm{n \delta \hat{\nu}}_\infty \leq 1,
\]
which, because $n$ and $\delta$ are strictly non-negative, rearranges to the required bound
\[
\boxed{
\delta \leq \frac{1}{n}\norm{\hat{\nu}}_\infty^{-1}
} \;.
\]
Conversely, if $\delta > (n \norm{\hat{\nu}}_\infty)^{-1}$, we would have
\[
\norm{\hat{s}}_\infty = n \delta \norm{\hat{\nu}}_\infty > 1,
\]
which violates the bound required for $\hat{s}$ to be a valid element of the subdifferential $\partial \norm{0}_1$. This means no such subgradient exists, $0 \notin \partial \empadvpcarisk{\widehat{\beta}}$, and therefore $\widehat{\beta}$ is not the global minimizer.

\end{proof}

\subsection{Proof of Proposition \ref{prop:5}}
This proof is adapted from \cite[prop. 3]{ribeiro_regularization_2023}.
\begin{proof}
We want to prove that the zero solution $\widehat{\beta} = 0$ minimizes the Adversarial PCA risk~\eqref{eq:advpca_risk}, given by
\[
\widehat{\mathcal{R}}_\delta(\beta) = \sum_{i=1}^n \left( \lvert \alpha^\top x_i - \beta^\top x_i \rvert + \delta \norm[1]{\beta} \right) ^2,
\]
if and only if 
\[
\delta \geq \delta_{\mathrm{max}} = \frac{\norm{X^\top X \alpha}_\infty}{\norm[1]{X \alpha}}.
\]
To prove the proposition, we use the subdifferential of the risk; the complete derivation can be found in Appendix \ref{app:subgradient}. The subdifferential of the risk satisfies
\[
\frac{1}{2} \partial \empadvpcarisk{\beta} = \{ X^\top X \left( \beta - \alpha \right) \} + \delta \norm[1]{X(\beta - \alpha)} z - \delta \norm[1]{\beta} X^\top s + n \delta^2 \norm[1]{\beta} z.
\]
Because $\widehat{\mathcal{R}}_\delta$ is convex, $\widehat{\beta} = 0$ is a global minimizer if and only if $0 \in \partial \widehat{\mathcal{R}}_\delta(0)$. Evaluating the subdifferential at $\beta = 0$ simplifies the expression significantly, yielding the set
\[
\frac{1}{2} \partial \empadvpcarisk{0} = -\{ X^\top X \alpha \} + \delta \norm[1]{X \alpha} \partial \norm[1]{0}.
\]
For the zero vector to be in this set, there must exist some specific subgradient $\hat{z} \in \partial \norm[1]{0}$ such that
\[
0 = -X^\top X \alpha + \delta \norm[1]{X \alpha} \hat{z}.
\]
Solving for $\hat{z}$ then gives
\[
\hat{z} = \frac{1}{\delta \norm{X \alpha}_1} X^\top X \alpha,
\]
and taking the $\ell_\infty$-norm on both sides gives
\[
\norm{\hat{z}}_\infty = \frac{1}{\delta \norm{X \alpha}_1} \norm{X^\top X \alpha}_\infty.
\]
From the definition of the subdifferential of the absolute value function (see Appendix \ref{app:subgradient}), any element $\hat{z} \in \partial \norm[1]{0}$ must satisfy $\norm{\hat{z}}_\infty \leq 1$. Thus, if
\[
\delta = \frac{\norm{X^\top X \alpha}_\infty}{\norm{X \alpha}_1}, 
\]
then $\norm{\hat{z}}_\infty = 1$ and subsequently $0 \in \partial \empadvpcarisk{0}$, implying $\beta=0$ is the global minimizer. The same goes for 
\[
\delta > \frac{\norm{X^\top X \alpha}_\infty}{\norm{X \alpha}_1}
\]
since this implies $\norm{\hat{z}}_\infty < 1$. However, if
\[
\delta < \frac{\norm{X^\top X \alpha}_\infty}{\norm{X \alpha}_1},
\]
then $\norm{\hat{z}}_\infty > 1$, meaning no such subgradient $\hat{z} \in \partial \norm[1]{0}$ can exist, and consequently $\beta=0$ is not the global minimizer of $\empadvpcarisk{\beta}$. Thus, the statement is proven.

Finally, we show that $\delta_{\mathrm{max}}$ is directly linked to the singular values of $X$ when $\alpha$ is a standard principal component. Consider $k \leq d$ components and let the SVD of $X \in \R^{n \times d}$ be given by $U \Sigma V^\top$, where $U \in \R^{n \times n}$ such that $U^\top U = U U^\top = I_n$, $\Sigma \in \R^{n \times d}$, and $V \in \R^{d \times d}$ such that $V^\top V = V V^\top = I_d$. This implies that
\[
    X^\top X = V \Sigma^\top U^\top U \Sigma V^\top = V\Sigma^\top \Sigma V^\top.
\]
Consider the $j$th subproblem where $\alpha$ is set to the $j$th right singular vector, namely $\alpha = v_j$. We then have
\[
    X^\top X \alpha = V \Sigma^\top \Sigma V^\top v_j = V \Sigma^\top \Sigma e_j = \sigma_j^2 v_j,
\]
where $\sigma_j$ is the corresponding singular value. Taking the $\ell_\infty$-norm gives
\[
    \norm[\infty]{X^\top X \alpha} = \sigma_j^2 \norm[\infty]{v_j}.
\]
The denominator can similarly be written as
\[
    X\alpha = U\Sigma V^\top v_j = U \Sigma e_j = \sigma_j u_j,
\]
where $u_j$ denotes the $j$th left singular vector. Taking the $\ell_1$-norm yields
\[
    \norm[1]{X\alpha} = \sigma_j \norm[1]{u_j}.
\]
By substituting these back into the expression for $\delta_{\mathrm{max}}$, we conclude that
\[
\boxed{
    \delta_{\mathrm{max}} = \frac{\norm[\infty]{X^\top X \alpha}}{\norm[1]{X \alpha}} = \frac{\sigma_j^2 \norm[\infty]{v_j}}{\sigma_j \norm[1]{u_j}} = \sigma_j \frac{\norm[\infty]{v_j}}{\norm[1]{u_j}}
    } \; .
\]

\end{proof}

\subsection{Proof of Proposition \ref{prop:6}}\label{app:proof:prop6}
\textbf{Proposition bound: } 

\begin{proof}
As before, for a fixed vector $\alpha$ and $\delta \geq 0$, we define the Adversarial PCA risk \eqref{eq:advpca_risk} as 
\[
\widehat{\mathcal{R}}_\delta(\beta) = \frac{1}{n} \sum_{i=1}^n \left( \lvert x_i^\top \alpha - x_i^\top \beta \rvert + \delta \norm[1]{\beta} \right) ^2.
\]
Furthermore, let $\widehat{\beta} \in \argmin_{\beta} \empadvpcarisk{\beta}$ minimize the empirical risk and $\beta_*$ the population counterpart. Using this notation, we let $\widehat{\Delta} = \widehat{\beta} - \beta_*$ and want to find a bound for $\snorm{\widehat{\beta} - \beta_*}_{\Sigma_*}^2 = \snorm{\widehat{\Delta}}_{\Sigma_*}^2$, where we let $\Sigma_*$ denote the true covariance of the data $\mathcal{D} = \{ x_i \in \R^d: i \in [n] \}$.

Now, since $\widehat{\mathcal{R}}_\delta(\widehat{\beta}) \leq \widehat{\mathcal{R}}_\delta(\beta_*)$ by definition, we have that 
\[
0 \leq \snorm{\widehat{\Delta}}_{\Sigma_*}^2 \leq \snorm{\widehat{\Delta}}_{\Sigma_*}^2 + \underbrace{\widehat{\mathcal{R}}_\delta(\beta_*)  - \widehat{\mathcal{R}}_\delta(\widehat{\beta})}_{\geq 0}.
\]
The difference in risk can, after some algebra, be shown to equal
\begin{align*}
\widehat{\mathcal{R}}_\delta(\beta_*)  - \widehat{\mathcal{R}}_\delta(\widehat{\beta}) = 
&-\frac{1}{n}\snorm{X\widehat{\Delta}}_2^2 
+ \frac{2}{n} (\alpha - \beta_*)^\top X^\top X \widehat{\Delta} \\
&+ \frac{2\delta}{n} \left( \norm[1]{\beta_*} \norm[1]{X(\alpha - \beta_*)} - \snorm{\widehat{\beta}}_1 \snorm{X(\alpha - \widehat{\beta})}_1 \right) \\
&+\delta^2 ( \norm[1]{\beta_*}^2 - \snorm{\widehat{\beta}}_1^2 ) \geq 0.
\end{align*}
We simplify the $2\delta/n$-term: 
\begin{align*}
\norm[1]{\beta_*} \norm[1]{X(\alpha - \beta_*)} - \snorm{\widehat{\beta}}_1 \snorm{X(\alpha - \widehat{\beta})}_1 
&\leqstack{(a)} \snorm{\widehat{\beta}}_1 \snorm{X\widehat{\Delta}}_1 + \norm[1]{X(\alpha - \beta_*)} \left( \norm[1]{\beta_*} - \snorm{\widehat{\beta}}_1 \right) \\
&\leqstack{(b)} \scalebox{0.95}{$\sqrt{n} \snorm{\widehat{\beta}}_1 \snorm{X\widehat{\Delta}}_2 + \norm[1]{X(\alpha - \beta_*)} \left( \norm[1]{\beta_*} - \snorm{\widehat{\beta}}_1 \right)$},
\end{align*}
where (a) is the reverse triangle inequality and (b) the fact that $\norm[1]{\cdot} \leq \sqrt{n} \norm[2]{\cdot}$. Rearranging then gives
\begin{align*}
 \frac{1}{n}\snorm{X\widehat{\Delta}}_2^2 
 &-\frac{1}{\sqrt{n}}\snorm{X\widehat{\Delta}}_2 2 \delta \snorm{\widehat{\beta}}_1 \\
 &- \scaleeq{\left( \frac{2}{n} (\alpha - \beta_*)^\top X^\top X \widehat{\Delta}
+ \frac{2 \delta}{n}\norm[1]{X(\alpha - \beta_*)} \left( \norm[1]{\beta_*} - \snorm{\widehat{\beta}}_1 \right)
+ \delta^2 ( \norm[1]{\beta_*}^2 - \snorm{\widehat{\beta}}_1^2 )  \right) \leq 0}{0.95}.
\end{align*}
We now identify that we have something on the form $f(y) = y^2 -by-c \leq 0$ for $y = \frac{1}{\sqrt{n}}\snorm{X\widehat{\Delta}}_2$, and that we can use the following lemma:
\begin{lemma}\label{app:lemma:f}
    Consider $f: \mathbb{R} \to \mathbb{R}$ given by $f(x) = x^2 - bx - c$, where $b,c \in \mathbb{R}$. If $f(x) \geq 0 \; \forall x$, then $b^2 + 4c \leq 0$. Similarly, if $\exists \, x$ s.t. $f(x) \leq 0$, then $b^2 + 4c \geq 0$.
\end{lemma}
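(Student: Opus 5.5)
The plan is to complete the square and read both claims off the minimum value of $f$; no earlier result from the paper is needed. Since $f$ is a monic quadratic, I will rewrite it as
\begin{equation*}
f(x) = x^2 - bx - c = \left(x - \frac{b}{2}\right)^2 - \frac{b^2 + 4c}{4},
\end{equation*}
which is checked by expanding $(x - b/2)^2 = x^2 - bx + b^2/4$. The first term is nonnegative and vanishes exactly at $x = b/2$. Hence $f$ attains its global minimum at $x = b/2$, with value $\min_x f(x) = -\tfrac{1}{4}(b^2 + 4c)$.

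For the first claim, assume $f(x) \geq 0$ for all $x \in \R$. Evaluating at the minimizer $x = b/2$ gives $-\tfrac{1}{4}(b^2+4c) \geq 0$, that is, $b^2 + 4c \leq 0$.

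For the second claim, assume some $x_0$ satisfies $f(x_0) \leq 0$. Since the minimum is at most any value of $f$, we get $-\tfrac{1}{4}(b^2 + 4c) \leq f(x_0) \leq 0$, so $b^2 + 4c \geq 0$.

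There is no real obstacle here; the only care needed is the direction of each inequality. The second claim is the one used afterwards. Applied with $y = \tfrac{1}{\sqrt{n}}\snorm{X\widehat{\Delta}}_2$, it turns the quadratic inequality $y^2 - by - c \leq 0$ from the proof of Proposition~\ref{prop:6} into $b^2 + 4c \geq 0$. Here $b = 2\delta\snorm{\widehat{\beta}}_1$ and $c$ is the bracketed expression, so this lower bound on $c$ is what feeds the stated bound on $\snorm{\widehat{\Delta}}_{\Sigma_*}^2$.
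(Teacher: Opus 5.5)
Your proof is correct and follows the paper's argument: complete the square, identify the global minimum $-\tfrac{1}{4}(b^2+4c)$ at $x=b/2$, and read off both claims. For the second claim, your step $\min_x f(x)\le f(x_0)\le 0$ proves the implication as stated. The paper's ``conversely'' sentence argues the reverse direction instead (from $b^2+4c\ge 0$ to the existence of some $x$ with $f(x)\le 0$), so your version is the one that justifies the step ``since $f(y)\le 0$, we have $b^2+4c\ge 0$'' in the proof of Proposition~\ref{prop:6}.
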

\begin{proof}
    \[
    f(x) = x^2 - bx - c = \left( x - \frac{b}{2} \right)^2 - \frac{b^2 + 4c}{4}.
    \] 
    Since $\left( x - \frac{b}{2} \right)^2 \geq 0$ for all $x \in \mathbb{R}$, the global minimum of the function occurs at $x = \frac{b}{2}$, where $f\left(\frac{b}{2}\right) = -\frac{b^2 + 4c}{4}$. If $f(x) \geq 0$ for all $x$, then specifically $f\left(\frac{b}{2}\right) \geq 0$, which immediately requires $b^2 + 4c \leq 0$. Conversely, if $b^2 + 4c \geq 0$, then $f\left(\frac{b}{2}\right) \leq 0$, demonstrating there exists an $x$ such that $f(x) \leq 0$.
    
\end{proof}
Since $f(y) \leq 0$, we have to have $b^2 + 4c \geq 0$. After simplifying, this gives
\begin{align}\label{app:b2+4cgeq0}
\frac{2}{n} (\alpha - \beta_*)^\top X^\top X \widehat{\Delta} + \frac{2 \delta}{n}\norm[1]{X(\alpha - \beta_*)} \left( \norm[1]{\beta_*} - \snorm{\widehat{\beta}}_1 \right) + \delta^2 \norm[1]{\beta_*}^2 \geq 0.
\end{align}
Since~\eqref{app:b2+4cgeq0} holds by Lemma \ref{app:lemma:f}, the conditions are met for a second one:
\begin{lemma}
    Consider $f: \mathbb{R} \to \mathbb{R}$ given by $f(x) = x^2 - bx - c$ and $g: \mathbb{R} \to \mathbb{R}$ given by $g(x) = x^2 - b^2 - 2c$ where $b,c \in \mathbb{R}$. If $b^2 + 4c \geq 0$ and $f(x) \leq 0$, then $g(x) \leq f(x)$.
\end{lemma}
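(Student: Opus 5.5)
The plan is to reduce the claim $g(x) \le f(x)$ to a linear inequality in $x$, and then use that $f(x) \le 0$ confines $x$ to the interval between the real roots of $f$. Subtracting gives
\[
f(x) - g(x) = \bigl(x^2 - bx - c\bigr) - \bigl(x^2 - b^2 - 2c\bigr) = b^2 + c - bx,
\]
so it suffices to show $bx \le b^2 + c$ whenever $f(x) \le 0$ and $D := b^2 + 4c \ge 0$.

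First I locate $x$. Since $D \ge 0$, the quadratic $f$ has real roots $x_\pm = \tfrac{1}{2}\bigl(b \pm \sqrt{D}\bigr)$ and factors as $f(x) = (x - x_-)(x - x_+)$. The hypothesis $f(x) \le 0$ therefore gives $x_- \le x \le x_+$. This is consistent with the completed-square form used in the proof of Lemma~\ref{app:lemma:f}.

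Second I bound $bx$. It is linear in $x$, so on $[x_-, x_+]$ its maximum is attained at an endpoint:
\[
bx \le \max\{b x_-, b x_+\} = \tfrac{1}{2}\bigl(b^2 + |b|\sqrt{D}\bigr).
\]
The maximizing endpoint is $x_+$ when $b \ge 0$ and $x_-$ when $b < 0$; in both cases $b x_{\pm} = \tfrac12\bigl(b^2 + |b|\sqrt{D}\bigr)$.

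Third, the remaining task is to check
\[
\tfrac{1}{2}\bigl(b^2 + |b|\sqrt{D}\bigr) \le b^2 + c, \qquad\text{equivalently}\qquad |b|\sqrt{D} \le b^2 + 2c.
\]
The key observation is that $b^2 + 2c = \tfrac{1}{2}\bigl(b^2 + D\bigr)$. By AM--GM applied to $|b|$ and $\sqrt{D}$,
\[
|b|\sqrt{D} \le \tfrac{1}{2}\bigl(b^2 + D\bigr) = b^2 + 2c,
\]
which is exactly what is needed.

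I expect this last step to be the only real obstacle. It requires noticing that $b^2 + 2c$ is the arithmetic mean of $b^2$ and $D$, after which the inequality is immediate. Combining the three steps gives $bx \le b^2 + c$, hence $f(x) - g(x) \ge 0$, i.e.\ $g(x) \le f(x)$.
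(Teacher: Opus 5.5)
Your proof is correct, but it takes a different and longer route than the paper. You reduce the claim to $bx \le b^2 + c$ and localize $x$ between the roots $x_\pm = \tfrac12\bigl(b \pm \sqrt{D}\bigr)$, with $D = b^2 + 4c$. You then bound the linear function $bx$ by its value at the appropriate endpoint, and finish with AM--GM via $b^2 + 2c = \tfrac12\bigl(b^2 + D\bigr)$. Each step checks out.

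The paper instead uses the identity $f(x) - g(x) = b^2 - bx + c = (x-b)^2 - f(x)$. This writes $f(x) - g(x)$ as the sum of the nonnegative terms $(x-b)^2$ and $-f(x)$, which settles the lemma in one line. That identity needs no root formula and no case split on the sign of $b$. It also shows that the hypothesis $b^2 + 4c \ge 0$ plays no logical role: it is implied by $f(x) \le 0$, and the paper only remarks that it makes the premise attainable. Your argument, by contrast, invokes it to get real roots.

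What your route buys is the exact worst case over the whole sublevel set. It shows that $\min_{f(x)\le 0}\bigl(f(x)-g(x)\bigr) = \tfrac14\bigl(|b| - \sqrt{D}\bigr)^2$, attained at an endpoint. The paper's identity instead gives the slack pointwise, and both yield the same equality case, $c = 0$ and $x = b$.
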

\begin{proof}
    We aim to show that $f(x) - g(x) \geq 0$. First, we expand the difference between the two functions:
    \[
        f(x) - g(x) = (x^2 - bx - c) - (x^2 - b^2 - 2c) = b^2 - bx + c.
    \]
    Observe that we can construct this exact expression by expanding the perfect square $(x-b)^2$ and subtracting our function $f(x)$:
    \[
        (x-b)^2 - f(x) = (x^2 - 2bx + b^2) - (x^2 - bx - c) = b^2 - bx + c.
    \]
    Therefore, we establish the direct algebraic identity:
    \[
        f(x) - g(x) = (x-b)^2 - f(x).
    \]
    Since the square of any real number is non-negative, we know $(x-b)^2 \geq 0$. We are explicitly given the condition that $f(x) \leq 0$, which implies $-f(x) \geq 0$. 
    Because $f(x) - g(x)$ is expressed as the sum of two non-negative terms, it must also be non-negative:
    \[
        f(x) - g(x) \geq 0 \implies g(x) \leq f(x).
    \]
    \textit{Note:} The condition $b^2 + 4c \geq 0$ guarantees that the premise $f(x) \leq 0$ is attainable for some $x \in \mathbb{R}$.

\end{proof}
Thus, we have $y^2-b^2 - 2c \leq y^2 - by - c$. This gives
\begin{align*}
     - \frac{1}{n}\snorm{X\widehat{\Delta}}_2^2 
 + \frac{4}{n} (\alpha - \beta_*)^\top X^\top X \widehat{\Delta} 
 &+ \frac{4 \delta}{n} \norm[1]{X(\alpha - \beta_*)} \left( \norm[1]{\beta_*} - \snorm{\widehat{\beta}}_1 \right) \\
&+2 \delta^2\left( \norm[1]{\beta_*}^2 + \snorm{\widehat{\beta}}_1^2 \right)  \geq 0.
\end{align*}
To summarize, we have shown
\begin{align*}
    0 \leq \widehat{\mathcal{R}}_\delta(\beta_*)  - \widehat{\mathcal{R}}_\delta(\widehat{\beta}) \leq - \frac{1}{n}\snorm{X\widehat{\Delta}}_2^2 
 + \frac{4}{n} (\alpha - \beta_*)^\top X^\top X \widehat{\Delta} 
 &+ \scaleeq{\frac{4 \delta}{n} \norm[1]{X(\alpha - \beta_*)} \left( \norm[1]{\beta_*} - \snorm{\widehat{\beta}}_1 \right)}{1} \\
&+2 \delta^2\left( \norm[1]{\beta_*}^2 + \snorm{\widehat{\beta}}_1^2 \right),
\end{align*}
and hence we have that
\begin{empheq}[box=\fbox]{align*}
0 \leq \snorm{\widehat{\Delta}}_{\Sigma_*}^2 
    &\leq \snorm{\widehat{\Delta}}_{\Sigma_* - \widehat{\Sigma}}^2 + \frac{4}{n} (\alpha - \beta_*)^\top X^\top X \widehat{\Delta} \\
    &\quad + \frac{4 \delta}{n} \norm[1]{X(\alpha - \beta_*)} \left( \norm[1]{\beta_*} - \snorm{\widehat{\beta}}_1 \right) \\
    &\quad + 2 \delta^2\left( \norm[1]{\beta_*}^2 + \snorm{\widehat{\beta}}_1^2 \right)
\end{empheq}
where we have used that $\widehat{\Sigma} = \frac{1}{n} X^\top X$ denotes the empirical covariance.

\end{proof}

\textbf{Rates mentioned in the remark:} 

\begin{proof}
We start by looking at the term
\[
\frac{1}{n} \norm[1]{X(\alpha-\beta_*)} = \frac{1}{n} \sum_{i=1}^n |\underbrace{(\alpha - \beta_*)^\top x_i}_{y_i \in \mathbb{R}}|.
\]
$\mathbb{E}[y] = 0$ and $\mathbb{V}[y] = (\alpha-\beta_*)^\top \Sigma_* (\alpha-\beta_*)$; letting $(\alpha-\beta_*)^\top \Sigma_* (\alpha-\beta_*) = \sigma_y^2$ therefore means $y \sim \mathcal{N}(0, \sigma_y^2)$. $|y|$ follows a folded normal distribution \cite{tsagris_folded_2014}, and hence has mean 
\[
\mathbb{E}[|y|] = \sigma_y\sqrt{\frac{2}{\pi}}
\]
and variance 
\[
\mathbb{V}[|y|] = \sigma_y^2 - \frac{2}{\pi}\sigma_y^2 = \frac{\pi - 2}{\pi}\sigma_y^2.
\]
Furthermore, $\abs{y}$ is sub-Gaussian with parameter $\sigma_y$. To see this, we can write $y = \sigma_y z$, where $z \sim \mathcal{N}(0,1)$. We then define the function $f(z) = \sigma_y \abs{z} = \abs{y}$, and note that
\[
\abs{f(z_1) - f(z_2)} = \sigma_y \abs{\abs{z_1} - \abs{z_2}} \leq \sigma_y \abs{z_1 - z_2} = \sigma_y \norm{z_1 - z_2}_2,
\]
meaning $f$ is a $\sigma_y$-Lipschitz function applied to a standard Gaussian variable. By Gaussian concentration for Lipschitz functions (see e.g. \cite[Thm. 2.26]{wainwright_high-dimensional_2019}), $f(z)=\abs{y}$ is sub-Gaussian with parameter $\sigma_y$.

We now use the Hoeffding bound for a sum of independent sub-Gaussian variables, see e.g. \cite[prop. 2.5]{wainwright_high-dimensional_2019}:
\[
\mathbb{P}\left( \sum_{i=1}^n \left( |y_i| - \sigma_y \sqrt{\frac{2}{\pi}} \right) \geq t \right) \leq \exp \left\{ -\frac{t^2}{2\sum_{i=1}^n \sigma_y^2 }\right\},
\]
so
\[
\mathbb{P}\left( \frac{1}{n} \sum_{i=1}^n |y_i| \geq \frac{t + n \sigma_y \sqrt{2/\pi}}{n} \right) \leq \exp \left\{ -\frac{t^2}{2n\sigma_y^2} \right\},
\]
and 
\[
\mathbb{P}\left( \frac{1}{n} \sum_{i=1}^n |y_i| \leq \frac{t+n\sigma_y \sqrt{2/\pi}}{n} \right) \geq 1 - \exp \left\{ -\frac{t^2}{2n\sigma_y^2} \right\}.
\]

Letting 
\[
\exp \left\{ -\frac{t^2}{2n\sigma_y^2} \right\} = 2\gamma \quad \text{for } \gamma \in (0,1/2)
\]
finally implies that 
\begin{equation*}
\boxed{
\frac{1}{n} \norm[1]{X(\alpha-\beta_*)} = \frac{1}{n} \sum_{i=1}^n |y_i| \leq \frac{\sqrt{2} \sigma_y}{\sqrt{\pi}} + \frac{\sqrt{2} \sigma_y}{\sqrt{n}} \sqrt{\ln{(\gamma^{-1}/2})}
}
\end{equation*}
with probability at least $1 - 2\gamma$. We move on and consider $\snorm{\widehat{\Delta}}_{\Sigma_* - \widehat{\Sigma}}^2$. We have
\[
\snorm{\widehat{\Delta}}_{\Sigma_* - \widehat{\Sigma}}^2 = \widehat{\Delta}^\top (\Sigma_* - \widehat{\Sigma}) \widehat{\Delta} \leq \snorm{\Sigma_* - \widehat{\Sigma}}_\infty \snorm{\widehat{\Delta}}_1^2.
\]
We now want to find a bound for 
\[
\snorm{\Sigma_*-\widehat{\Sigma}}_\infty. 
\]
Using the spiked model, described in detail in Appendix \ref{app:spiked_cov}, we get
\begin{align*}
    \Sigma_*-\widehat{\Sigma}
    = \tau \left( 1-\frac{1}{n} \sum_{i=1}^n z_i^2 \right) \beta_* {\beta_*}^\top 
    -\sqrt{\tau} \beta_* \left( \frac{1}{n} \sum_{i=1}^n z_i w_i^\top \right) 
    &-\left[ \sqrt{\tau} \beta_* \left( \frac{1}{n} \sum_{i=1}^n z_i w_i^\top \right) \right]^\top \\
    &+ I_d - \frac{1}{n} \sum_{i=1}^n w_i w_i^\top, 
\end{align*}
and then applying the triangle inequality gives
\begin{align*}
\scaleeq{\norm{\Sigma_* - \widehat{\Sigma}}_\infty
    \leq 
    \underbrace{\norm{\tau \left(\frac{1}{n} \sum_{i=1}^n z_i^2 -1\right) \beta_* {\beta_*}^\top}_\infty}_{\text{\textcircled{$\ast$}}}
    + \underbrace{2\norm{\sqrt{\tau} \left( \frac{1}{n} \sum_{i=1}^n z_i w_i \right) {\beta_*}^\top }_\infty}_{\text{\textcircled{$\ast$}}\text{\textcircled{$\ast$}}}
    + \underbrace{\norm{I_d - \frac{1}{n} \sum_{i=1}^n w_i w_i^\top}_\infty}_{\text{\textcircled{$\ast$}}\text{\textcircled{$\ast$}}\text{\textcircled{$\ast$}}}}{0.93}.
\end{align*}
\textbf{\underline{\textcircled{$\ast$}:}}

We note that $\frac{1}{n}\sum_{i=1}^n z_i^2 - 1$ is a scalar and that $\sum_{i=1}^n z_i^2 \sim \chi^2(n)$. Following \cite[ex. 2.11]{wainwright_high-dimensional_2019}, we have that this particular chi-squared is sub-exponential with parameters $(\nu, \alpha) = (2\sqrt{n},4)$, which gives the two-sided bound
\[
\prob{\abs{\frac{1}{n}\sum_{i=1}^n z_i^2 - 1} \geq t} \leq 2e^{-nt^2/8} \quad \forall t \in (0,1).
\]
Thus,
\[
\prob{\abs{\frac{1}{n}\sum_{i=1}^n z_i^2 - 1} \leq t} \geq 1 - 2e^{-nt^2/8}, 
\]
and letting $e^{-nt^2/8} = \gamma$ for $\gamma \in (0,1/2)$ therefore gives
\[
\abs{\frac{1}{n}\sum_{i=1}^n z_i^2 - 1} \leq \frac{2\sqrt{2}}{\sqrt{n}} \sqrt{\ln{(\gamma^{-1})}}
\]
with probability at least $1-2\gamma$. Using this and the fact that $\snorm{\beta_* {\beta_*}^\top}_\infty = \norm{\beta_*}_\infty^2$, we have
\begin{equation*}
\boxed{
\norm{\tau \left(\frac{1}{n} \sum_{i=1}^n z_i^2 -1\right) \beta_* {\beta_*}^\top}_\infty = \tau \abs{\frac{1}{n} \sum_{i=1}^n z_i^2 -1} \norm{\beta_* {\beta_*}^\top}_\infty \leq \tau \frac{2\sqrt{2}}{\sqrt{n}} \sqrt{\ln{(\gamma^{-1})}} \norm{\beta_*}_\infty^2
}
\end{equation*}
with probability at least $1-2\gamma$.

Note here that this bound is only valid for $t \in (0,1)$, however the criterion is easily met for relatively low $n$; e.g., for $\gamma = 0.025$, $n$ can be as low as $30$.

\textbf{\underline{\textcircled{$\ast$}\textcircled{$\ast$}:}}

We denote the $j$th entry of $w_i$ as $w_{ij}$, meaning $w_{ij} \sim \normal{0,1} \; \forall (i,j) \in [n] \times [d]$. This means
\[
2\norm{\sqrt{\tau} \left( \frac{1}{n} \sum_{i=1}^n z_i w_i \right) {\beta_*}^\top }_\infty 
= \frac{2\sqrt{\tau}}{\sqrt{n}} \norm{\beta_*}_{\infty} \max_{j \in [d]} \abs{\frac{1}{\sqrt{n}}\sum_{i=1}^n z_i w_{ij}}.
\]
We first aim to bound the absolute sum $\frac{1}{\sqrt{n}}\abs{\sum_{i=1}^n z_i w_{ij}}$ for a fixed $j\in [d]$. From \cite[rem. 2.9.4]{vershyninHighdimensionalProbabilityIntroduction2026}, for every $t \geq 0$ we have that
\begin{equation}\label{eq:2.9.4}
\prob{\abs{\frac{1}{\sqrt{n}} \sum_{i=1}^n z_i w_{ij}}  \geq t} \leq 
\begin{dcases}
    2 e^{-c_1t^2} & t \leq \sqrt{n}, \\
    2 e^{-c_1t\sqrt{n}} & t \geq \sqrt{n}, \\
\end{dcases}
\end{equation}
since $\{z_i w_{ij}\}_{i=1}^n$ is a set of independent, zero mean, subexponential random variables for a constant $c_1>0$ depending only on the subexponential norm of $z_i w_{ij}$. For small deviations from the mean $(t \leq \sqrt{n})$, we get a Gaussian tail bound, and for large deviation $(t \geq \sqrt{n})$, we get heavier subexponential tails. 

Now, by using the properties of the union, we have that 
\begin{align*}
\prob{\max_{j \in [d]} \abs{\frac{1}{\sqrt{n}}\sum_{i=1}^n z_i w_{ij}} \geq t} 
&= \prob{ \bigcup_{j=1}^d \left\{ \abs{\frac{1}{\sqrt{n}}\sum_{i=1}^n z_i w_{ij}}  \geq t \right\}} \\
&\leq \sum_{j=1}^d \prob{ \abs{\frac{1}{\sqrt{n}}\sum_{i=1}^n z_i w_{ij}}  \geq t },
\end{align*}
and we can then apply~\eqref{eq:2.9.4} to each individual term. First, assuming $t \leq \sqrt{n}$, we get
\[
\prob{\max_{j \in [d]} \abs{\frac{1}{\sqrt{n}}\sum_{i=1}^n z_i w_{ij}} \geq t} 
\leq \sum_{j=1}^d 2e^{-c_1t^2}
= 2de^{-c_1t^2},
\]
which means 
\[
\prob{\max_{j \in [d]} \abs{\frac{1}{\sqrt{n}}\sum_{i=1}^n z_i w_{ij}} \leq t} 
\geq 1 - 2de^{-c_1t^2}. 
\]
By letting $de^{-c_1t^2} = \gamma$ for $\gamma \in (0,1/2)$, we get
\[
\max_{j \in [d]} \abs{\frac{1}{\sqrt{n}}\sum_{i=1}^n z_i w_{ij}} \leq \sqrt{\frac{\ln{(d/\gamma)}}{c_1}}
\]
with probability at least $1-2\gamma$. This holds as long as $n \geq \frac{\ln{(d/\gamma)}}{c_1}$.

If we use the same reasoning but instead assume the second case, i.e., $t \geq \sqrt{n}$, we get
\[
\max_{j \in [d]} \abs{\frac{1}{\sqrt{n}}\sum_{i=1}^n z_i w_{ij}} \leq \frac{\ln{(d/\gamma)}}{c_1\sqrt{n}}
\]
with probability at least $1-2\gamma$, where we have let $de^{-c_1t\sqrt{n}} = \gamma$. This holds as long as $n \leq \frac{\ln{(d/\gamma)}}{c_1}$.

Thus, we can conclude that
\begin{align*}
\boxed{
2\norm{\sqrt{\tau} \left( \frac{1}{n} \sum_{i=1}^n z_i w_i \right) {\beta_*}^\top }_\infty 
\leq 2\sqrt{\tau} \norm{\beta_*}_{\infty}
\begin{rcases}
\begin{dcases}
    \frac{\ln{(d/\gamma)}}{c_1 n} & n \leq \frac{\ln{(d/\gamma)}}{c_1} \\
    \sqrt{\frac{\ln{(d/\gamma)}}{nc_1}} & n \geq \frac{\ln{(d/\gamma)}}{c_1}
\end{dcases}
\end{rcases}
}
\end{align*}
with probability at least $1-2\gamma$ and for a constant $c_1>0$.

\textbf{\underline{\textcircled{$\ast$}\textcircled{$\ast$}\textcircled{$\ast$}:}}

For the third and final term, we note that the matrix $I_d - \frac{1}{n} \sum_{i=1}^n w_i w_i^\top$ now provides a mixture between \textcircled{$\ast$} and \textcircled{$\ast$}\textcircled{$\ast$}, with the important property that all entries of the matrix are sub-exponential with zero mean. By denoting
\[
W = \frac{1}{n} \sum_{i=1}^n w_i w_i^\top - I_d
\]
the diagonal elements are given by
\[
W_{jj} = \frac{1}{n} \sum_{i=1}^n w_{ij}^2 - 1,
\]
i.e., what we had in \textcircled{$\ast$}, and the off-diagonal elements are given by 
\[
W_{jk} = \frac{1}{n} \sum_{i=1}^n w_{ij}w_{ik}, \quad j \neq k
\]
which is what we dealt with in \textcircled{$\ast$}\textcircled{$\ast$}. By the same reasoning as before, and using $\norm{W}_\infty = \max_{j,k} \abs{W_{jk}}$, we have that 
\[
\prob{\max_{j,k} \abs{W_{jk}} \geq t} = \prob{\bigcup_{j,k} \left\{ \abs{W_{jk}} \geq t \right\}} \leq \sum_{j,k} \prob{\abs{W_{jk}} \geq t}.
\]
Again using \cite[rem. 2.9.4]{vershyninHighdimensionalProbabilityIntroduction2026} then gives
\begin{align*}
\boxed{
\norm{W}_\infty =  \norm{I_d - \frac{1}{n} \sum_{i=1}^n w_i w_i^\top}_\infty 
\leq 
\begin{dcases}
    \frac{2\ln{(d/\sqrt{\gamma})}}{c_2 n} & n \leq \frac{2\ln{(d/\sqrt{\gamma)}}}{c_2} \\
    \sqrt{\frac{2\ln{(d/\sqrt{\gamma})}}{nc_2}} & n \geq \frac{2\ln{(d/\sqrt{\gamma})}}{c_2}
\end{dcases}
}
\end{align*}
for a constant $c_2 > 0$. 

We conclude that using the spiked covariance model, treating $\tau$, $\gamma$ and $\snorm{\beta_*}_\infty$ as constants, with high probability we have 
\begin{equation*}
\boxed{
\norm{\Sigma_*-\widehat{\Sigma}}_\infty \lesssim \sqrt{\frac{\ln{d}}{n}}
}\;.
\end{equation*}

\end{proof}

\subsection{Proof of Algorithm~\ref{alg:eta-trick}}\label{app:proof:eta}
The algorithm we use was proposed by \citet{ribeiro_efficient_2025}. We outline the main steps of the proof and refer the reader to this paper for a more detailed description. 
\begin{proof}
We want to solve
\begin{align}\label{app:eq:eta_start}
    \argmin_{\beta} \; \sum_{i=1}^n \left(|\alpha^\top x_i - \beta^\top x_i| + \delta \norm[1]{\beta} \right)^2
\end{align}
As outlined already, by solving this subproblem,~\eqref{eq:fixA} is given by the sum of the individual solutions and thus the ''fix $A$, solve for $B$'' part is completed. Note here that we drop the $j$-subscript for simplicity. We denote the summand \textcircled{$\ast$}:
\[
\argmin_{\beta} \; \sum_{i=1}^n \underbrace{\left(|\alpha^\top x_i - \beta^\top x_i| + \delta \norm[1]{\beta} \right)^2}_{\text{\textcircled{$\ast$}}}.
\]
We now use something that is sometimes referred to as the $\eta$-trick (see e.g \cite{ItrickReloadedMultiple2019}):
\begin{align*}
    \norm[1]{\mathbf{w}}^2 = \underset{\mathbf{\eta} \in \Delta_d}{\text{min}} \sum_{\ell = 1}^d \frac{w_{\ell}^2}{\eta_{\ell}},
\end{align*}
where the set $\Delta_d$ is a simplex. If we think of $\norm[1]{\mathbf{w}}^2$ as \text{\textcircled{$\ast$}}, then we get
\begin{align*}
\text{\textcircled{$\ast$}} = \underset{\mathbf{\eta}^{(i)} \in \Delta_{d+1}^{(i)}}{\text{min}} \left\{ \frac{(\alpha^\top x_i - \beta^\top x_i)^2}{\eta_0^{(i)}} + \delta^2 \sum_{\ell=1}^d \frac{\beta_{\ell}^2}{\eta_{\ell}^{(i)}} \right\}.
\end{align*}
This is uniquely minimized at
\begin{align*}
    \eta_0^{(i)} = \frac{|\alpha^\top x_i - \beta^\top x_i|}{|\alpha^\top x_i - \beta^\top x_i| + \delta \norm[1]{\beta}}, \quad \text{and} \quad \eta_{\ell}^{(i)} = \frac{\delta |\beta_{\ell}|}{|\alpha^\top x_i - \beta^\top x_i| + \delta \norm[1]{\beta}}.
\end{align*}
If we set 
\begin{align*}
    w_i := \frac{1}{\eta_0^{(i)}}, \quad \text{and} \quad \gamma_{\ell} := \delta^2 \sum_{i=1}^n \frac{1}{\eta_{\ell}^{(i)}},
\end{align*}
we can write~\eqref{app:eq:eta_start} as 
\begin{align}\label{app:eq:eta_final}
\argmin_{\beta} \; \sum_{i=1}^n w_i(\alpha^\top x_i - \beta^\top x_i)^2 + \sum_{\ell = 1}^d \gamma_{\ell} \beta_{\ell}^2.
\end{align}
\eqref{app:eq:eta_final} is a  weighted ridge regression problem that can be solved using standard methods. To summarize, when we include the minimization over $\mathbf{\eta}$, we get a joint problem over $\beta$ and $\mathbf{\eta}$. Since the minimization over $\mathbf{\eta}$ is uniquely given in closed-form, we can iterate between fixing $\beta$ and $\mathbf{\eta}$ and solve for one at a time until convergence.

\end{proof}

\section{Additional Experimental Results}\label{app:sec:exp}

We provide additional experimental results to give better insight into the performance of the algorithm, and demonstrate how different experiment setups affect the results. 

\subsection{Spiked Covariance Data Model}\label{app:sec:spiked_cov}
For the experiments on synthetic data drawn from the spiked covariance model, we refer to Appendix~\ref{app:spiked_cov} for a detailed description of the data generating process.

\subsubsection{Sparse Versus Dense System}\label{app:sec:sparsevsdense}

We complement Figure \ref{fig:sparsevsdense} with additional plots where we vary the eigengap $\tau$. Other than $\tau$, the setup is exactly the same. For example, we use $n=150$ samples drawn from the spiked covariance model with true spike vector $\beta_* = e_1$, and do 20 repeats over different draws of data.
\begin{figure}[H]
    \centering
    \includegraphics[width=1\linewidth]{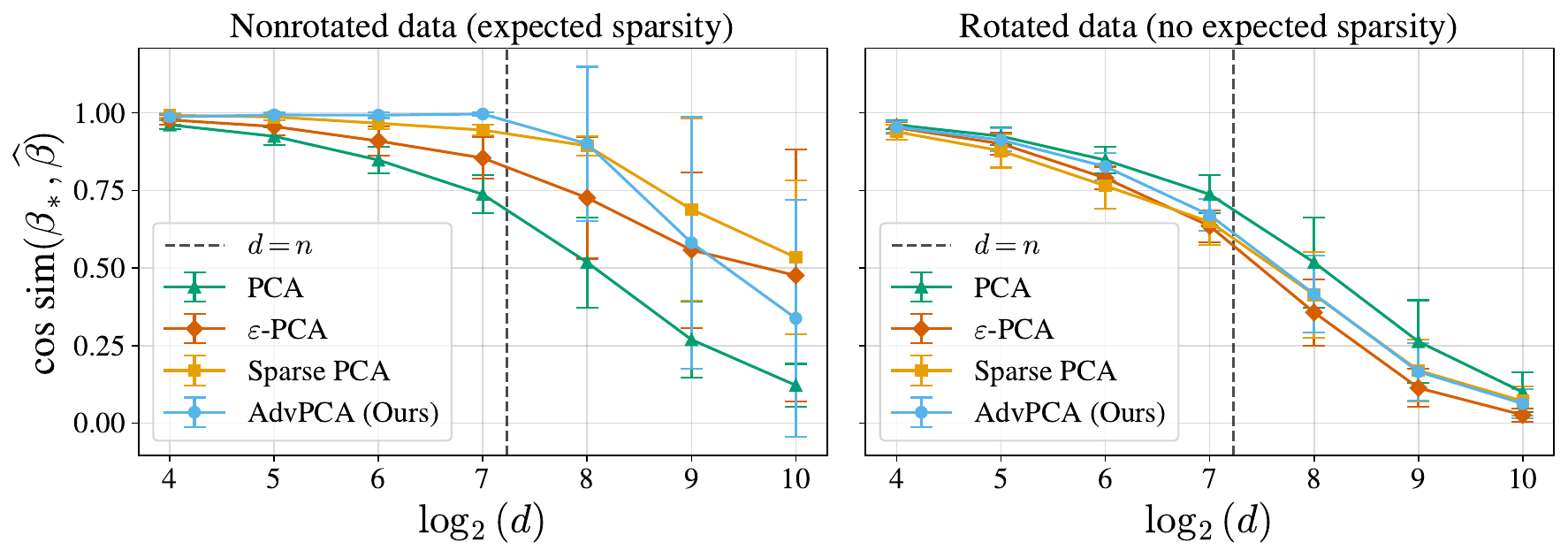}
    \caption{$\tau=2$.}
    \label{fig:placeholder}
\end{figure}
\begin{figure}[H]
    \centering
    \includegraphics[width=1\linewidth]{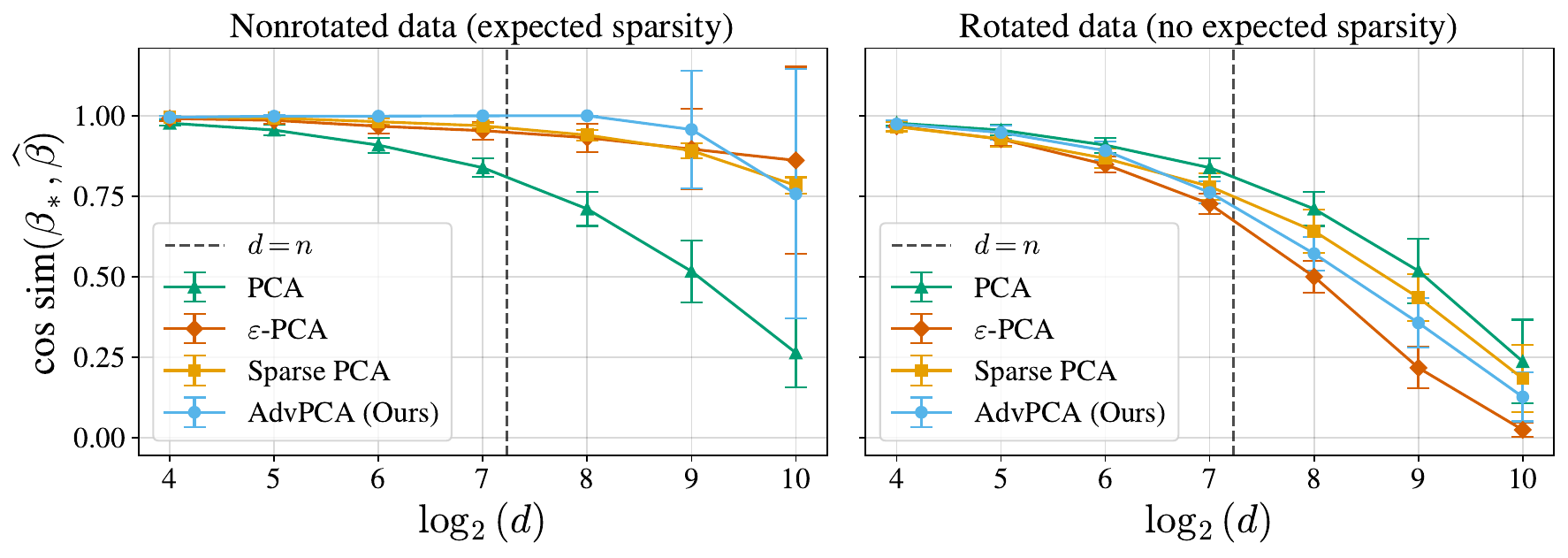}
    \caption{$\tau=3$.}
    \label{fig:placeholder}
\end{figure}
\begin{figure}[H]
    \centering
    \includegraphics[width=1\linewidth]{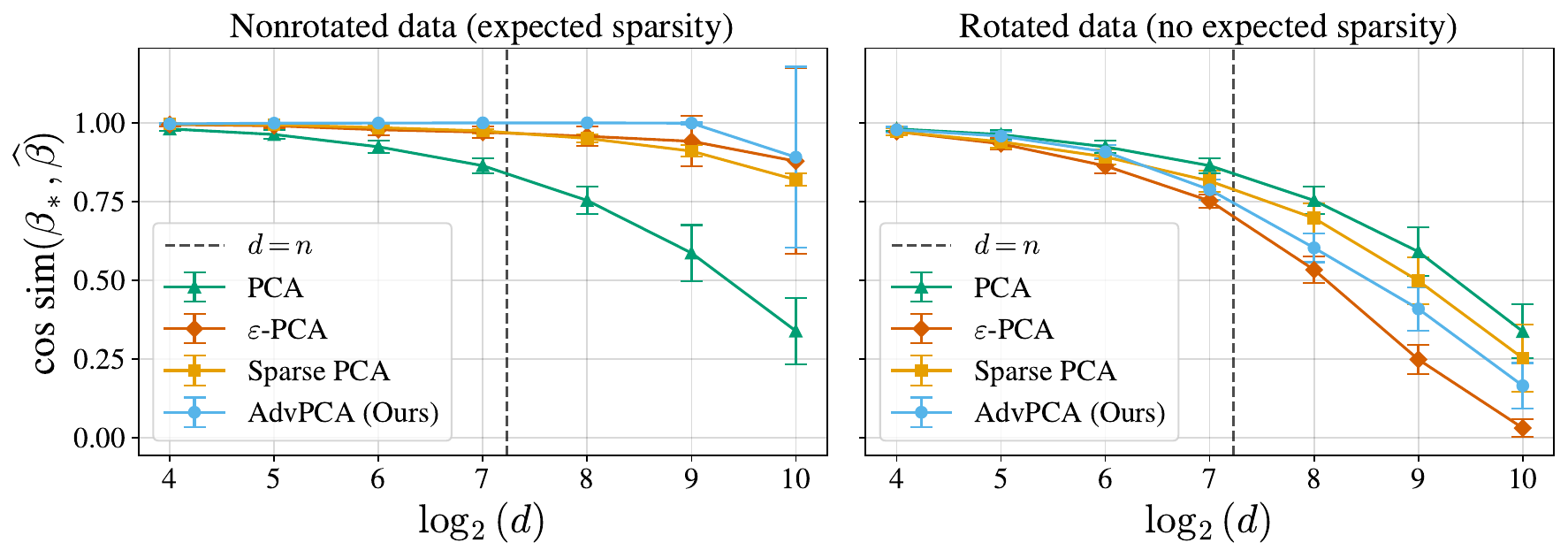}
    \caption{$\tau=3.5$.}
    \label{fig:placeholder}
\end{figure}
\begin{figure}[H]
    \centering
    \includegraphics[width=1\linewidth]{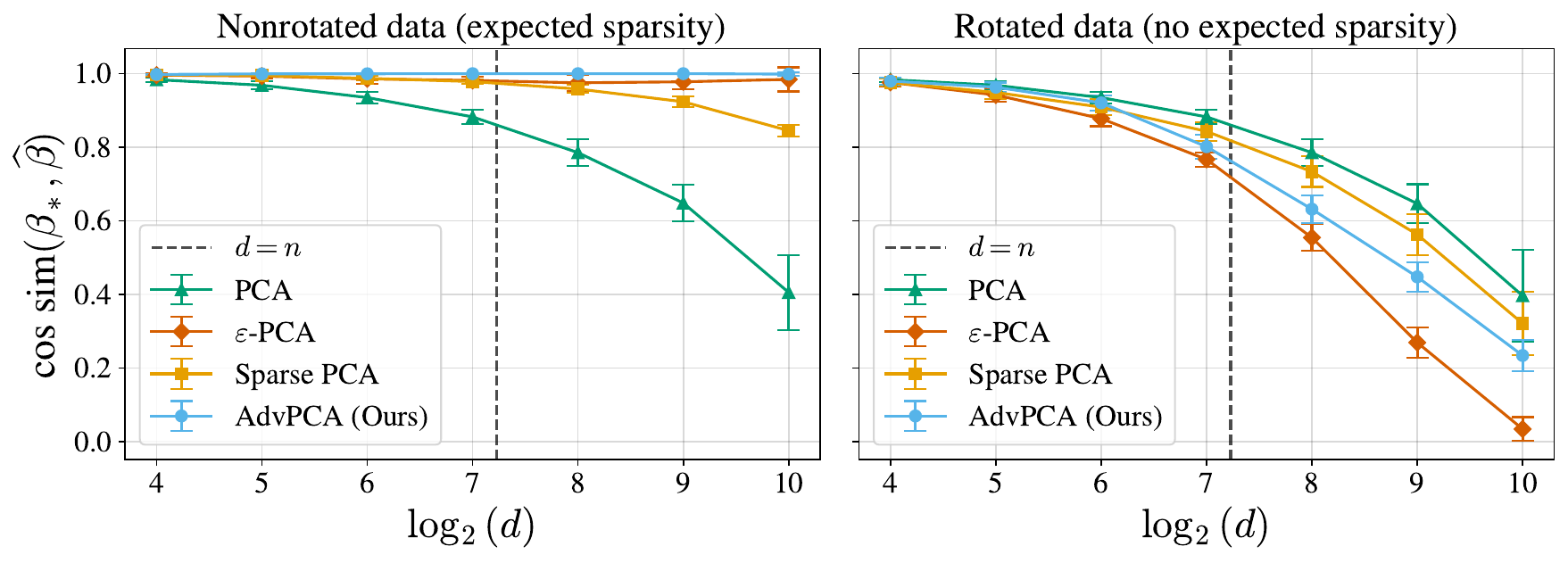}
    \caption{$\tau=4$.}
    \label{fig:placeholder}
\end{figure}

\newpage
We also perform the same experiment as a function of $n$ for a fixed $d=250$.
\begin{figure}[H]
    \centering
    \includegraphics[width=1\linewidth]{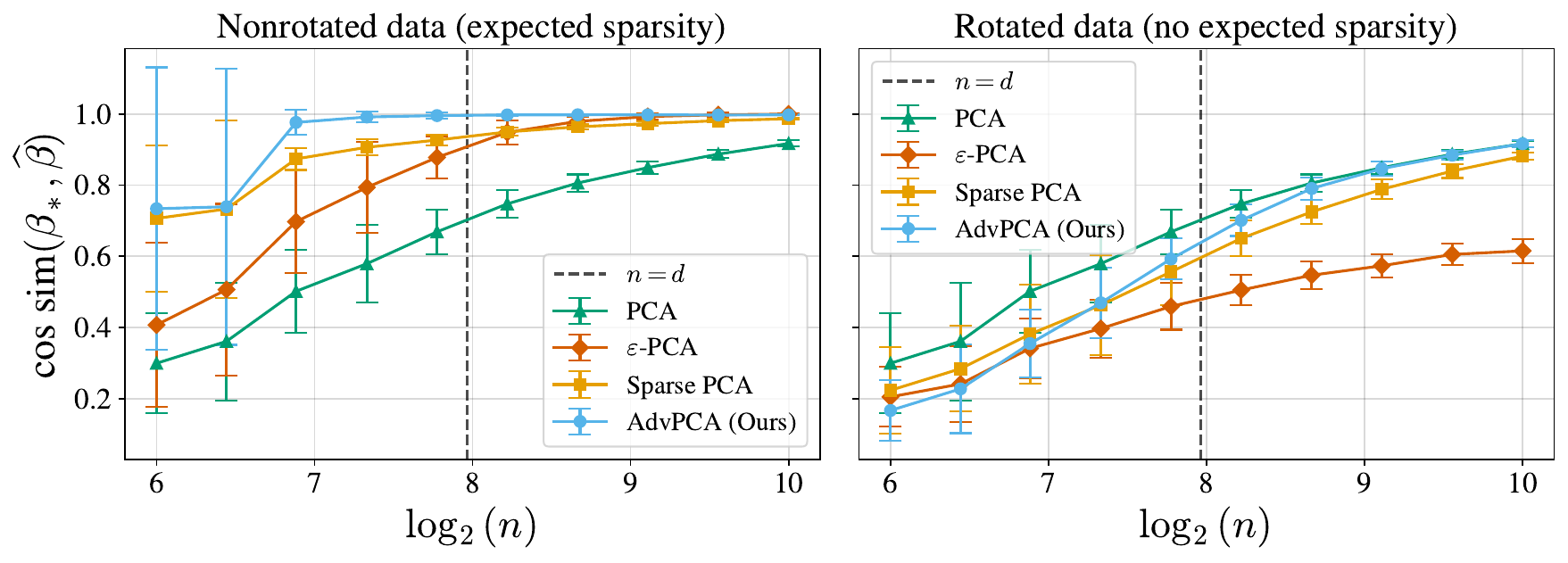}
    \caption{$\tau=2$.}
    \label{fig:placeholder}
\end{figure}
\begin{figure}[H]
    \centering
    \includegraphics[width=1\linewidth]{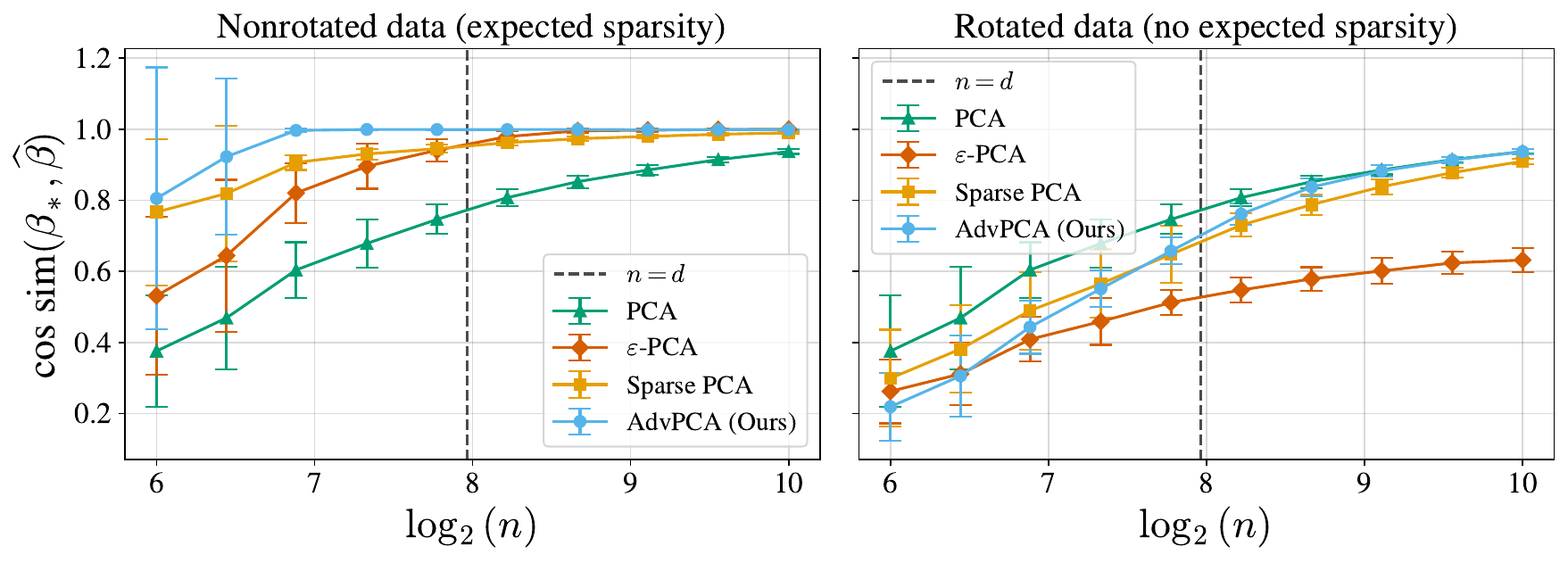}
    \caption{$\tau=2.5$.}
    \label{fig:placeholder}
\end{figure}
\begin{figure}[H]
    \centering
    \includegraphics[width=1\linewidth]{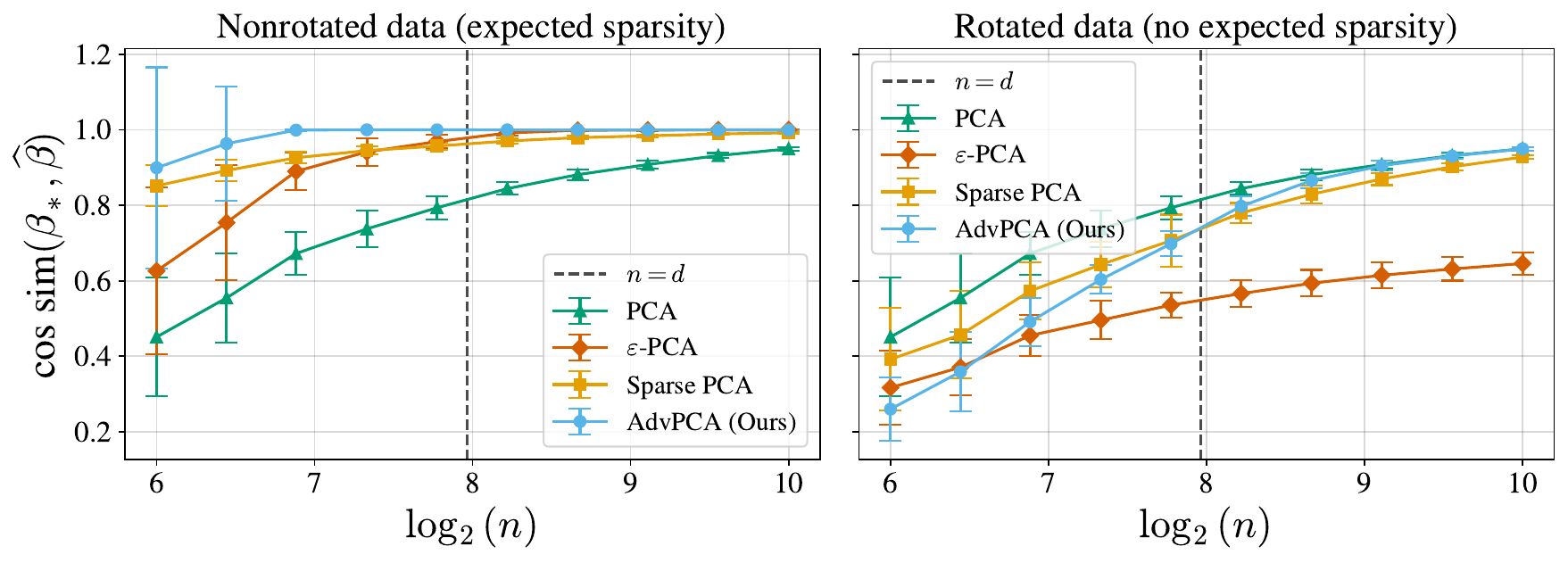}
    \caption{$\tau=3$.}
    \label{fig:placeholder}
\end{figure}
\begin{figure}[H]
    \centering
    \includegraphics[width=1\linewidth]{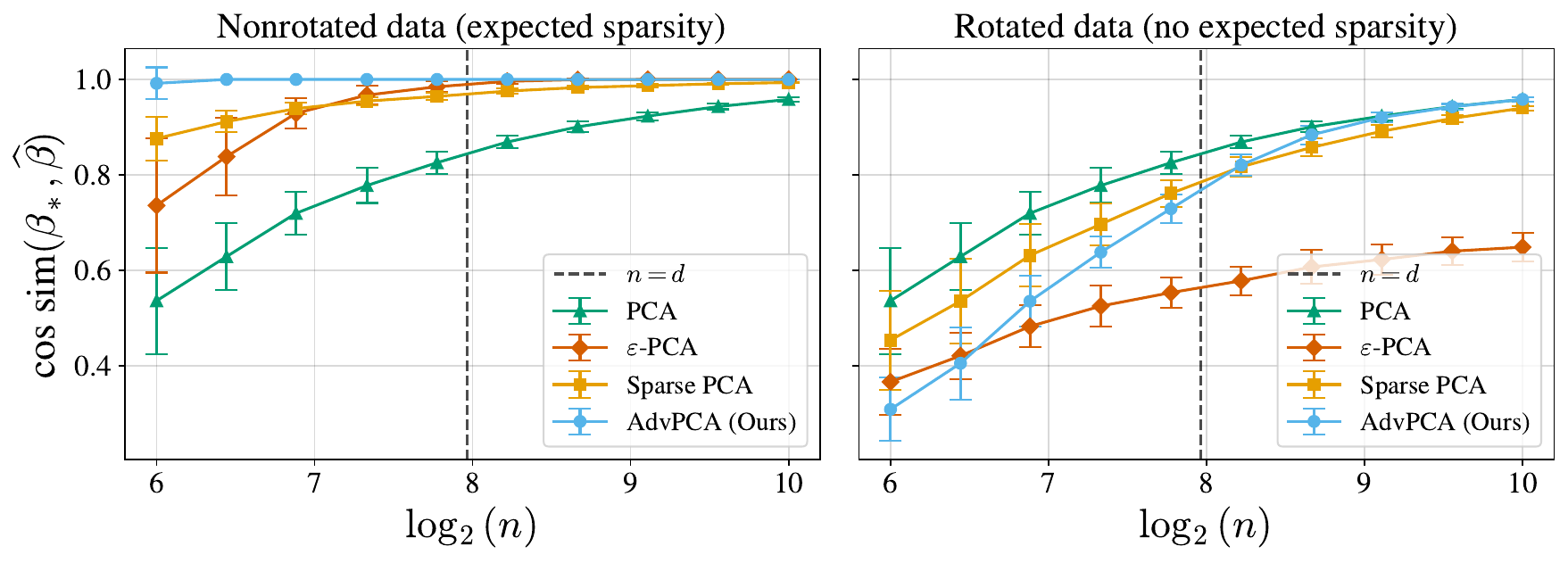}
    \caption{$\tau=3.5$.}
    \label{fig:placeholder}
\end{figure}
\begin{figure}[H]
    \centering
    \includegraphics[width=1\linewidth]{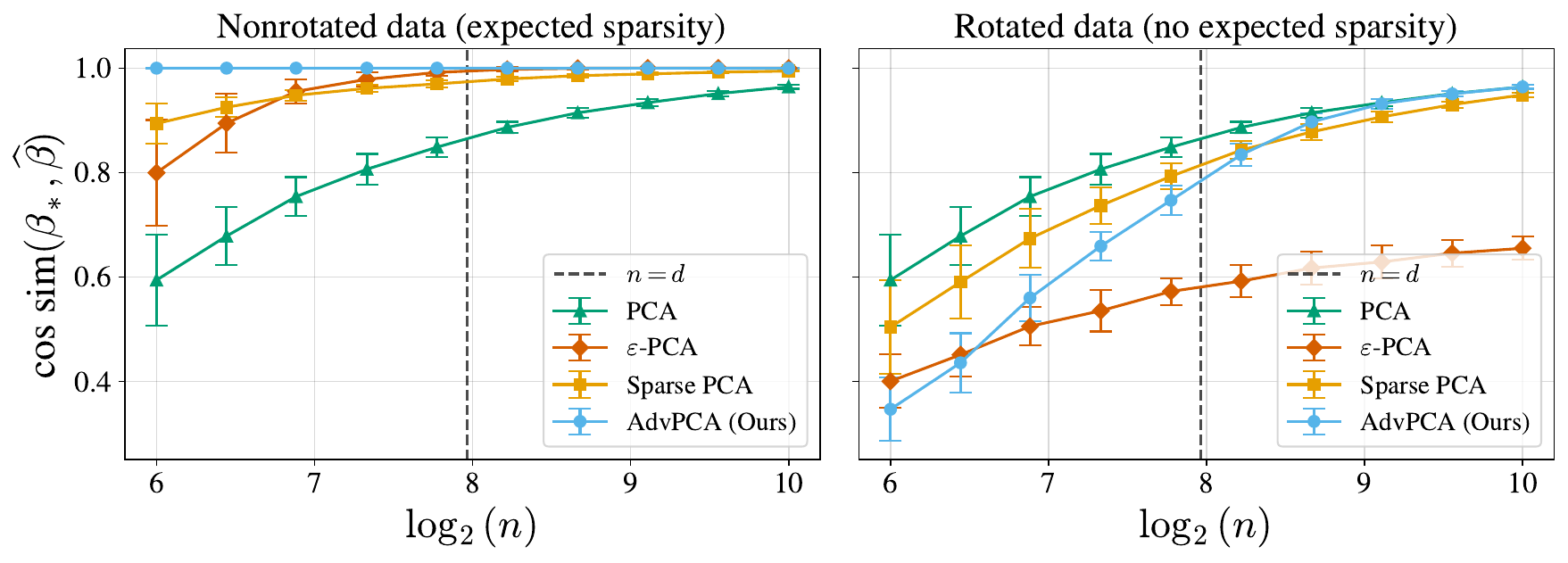}
    \caption{$\tau=4$.}
    \label{fig:placeholder}
\end{figure}

\subsubsection{Varying the Eigengap}

We complement Figure~\ref{fig:sparsevsdense} with an experiment where we plot the cosine similarity as a function of $\tau$. We let $n=500$ and show results for different values of $d$, illustrating both low and high-dimensional scenarios.
\begin{figure}[H]
    \centering
    \includegraphics[width=0.6\linewidth]{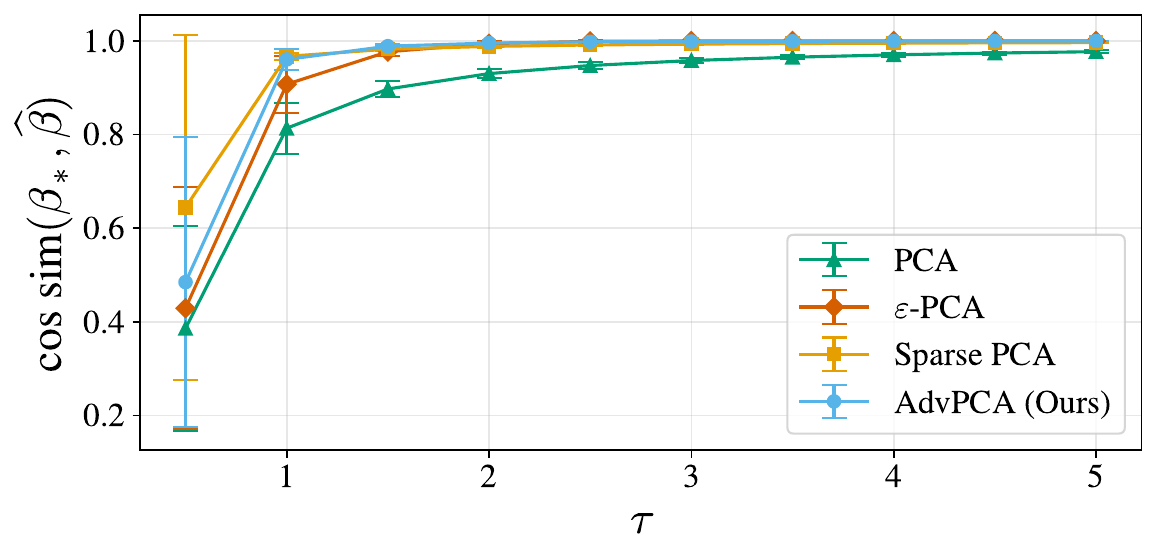}
    \caption{$d=100$.}
    \label{fig:placeholder}
\end{figure}
\begin{figure}[H]
    \centering
    \includegraphics[width=0.6\linewidth]{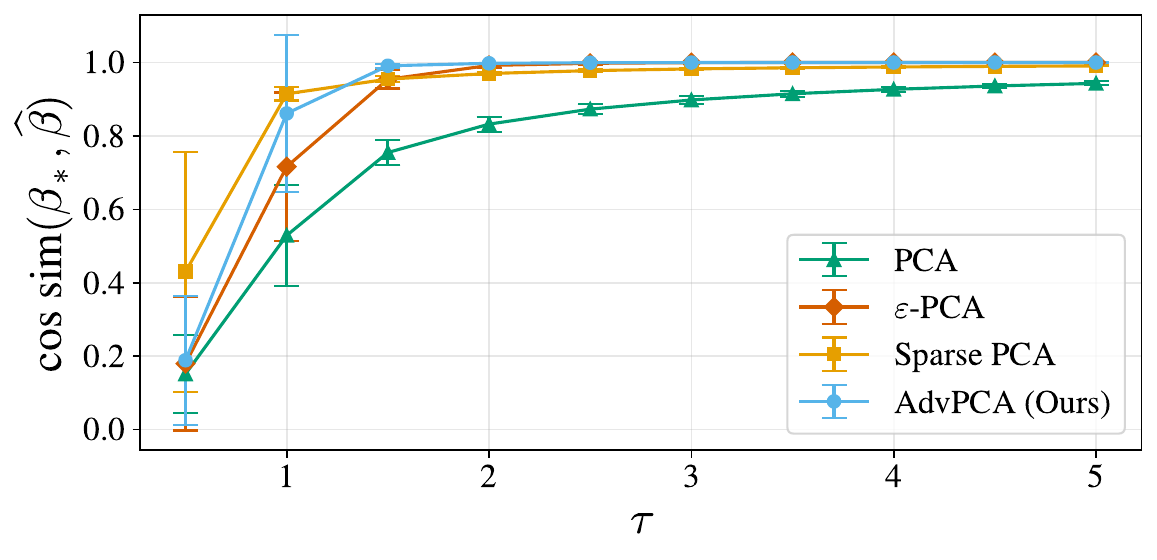}
    \caption{$d=250$.}
    \label{fig:placeholder}
\end{figure}
\begin{figure}[H]
    \centering
    \includegraphics[width=0.6\linewidth]{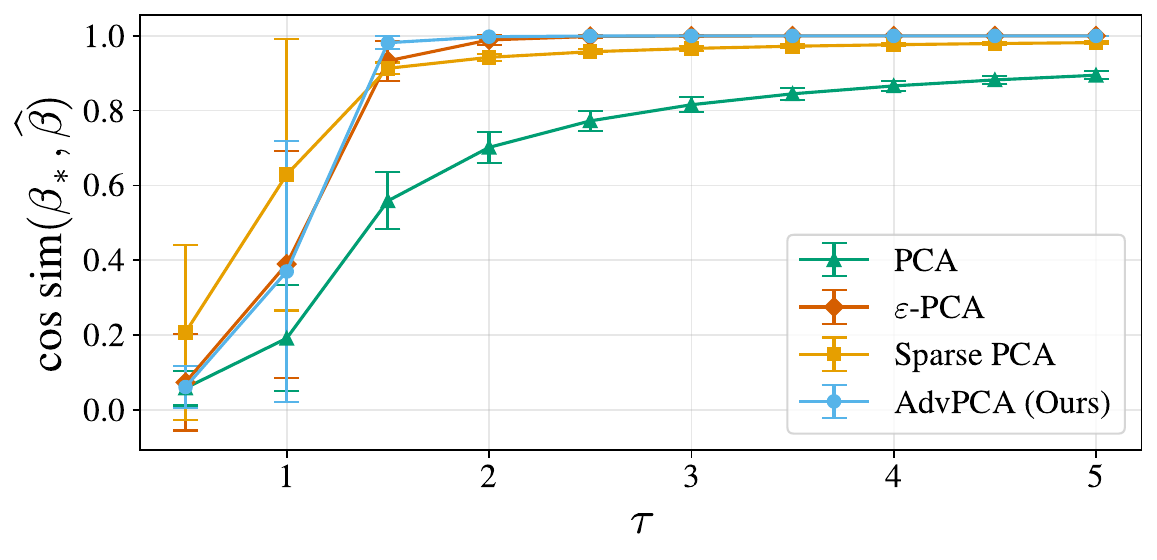}
    \caption{$d=500$.}
    \label{fig:placeholder}
\end{figure}
\begin{figure}[H]
    \centering
    \includegraphics[width=0.6\linewidth]{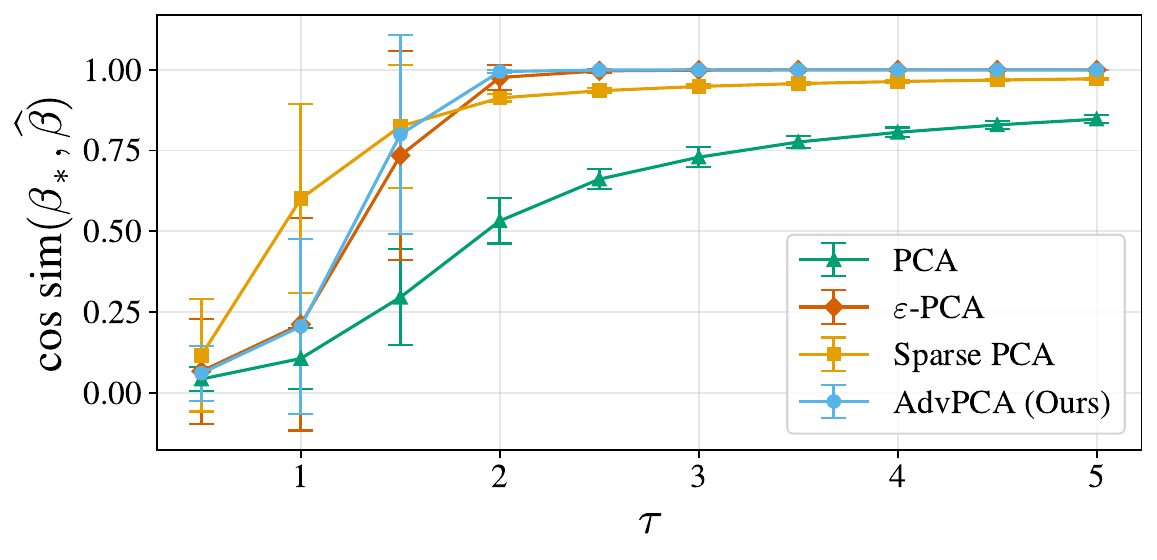}
    \caption{$d=750$.}
    \label{fig:placeholder}
\end{figure}
\begin{figure}[H]
    \centering
    \includegraphics[width=0.6\linewidth]{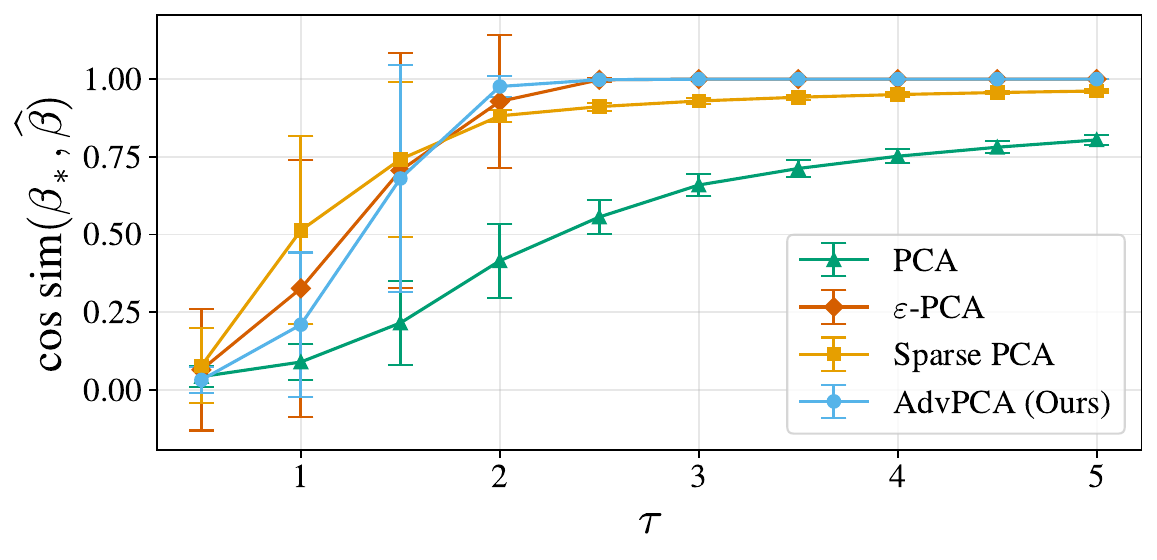}
    \caption{$d=1000$.}
    \label{fig:placeholder}
\end{figure}

\subsubsection{Multi-Component Recovery}\label{app:sec:multicomp}

In Figure~\ref{fig:multi-comp-recovery}, we spike in multiple directions, generalizing the rank-one model used in the top row to several directions. As described in Appendix~\ref{app:spiked_cov}, drawing data from the spiked covariance model requires us to set multiple eigengaps $\tau_1, \dots, \tau_k$, or equivalently, multiple eigenvalues $\lambda_1, \dots, \lambda_k$ where $\lambda_j = 1 + \tau_j \; \forall j \in [k]$. In Figure~\ref{fig:multi-comp-recovery} and all additional plots in this section, we set the eigenvalues $\{ \lambda_j \}_{j=1}^k$ according to a decaying spectrum which we present in Figure~\ref{app:fig:spectrum}.
\begin{figure}[H]
    \centering
    \includegraphics[width=0.4\linewidth]{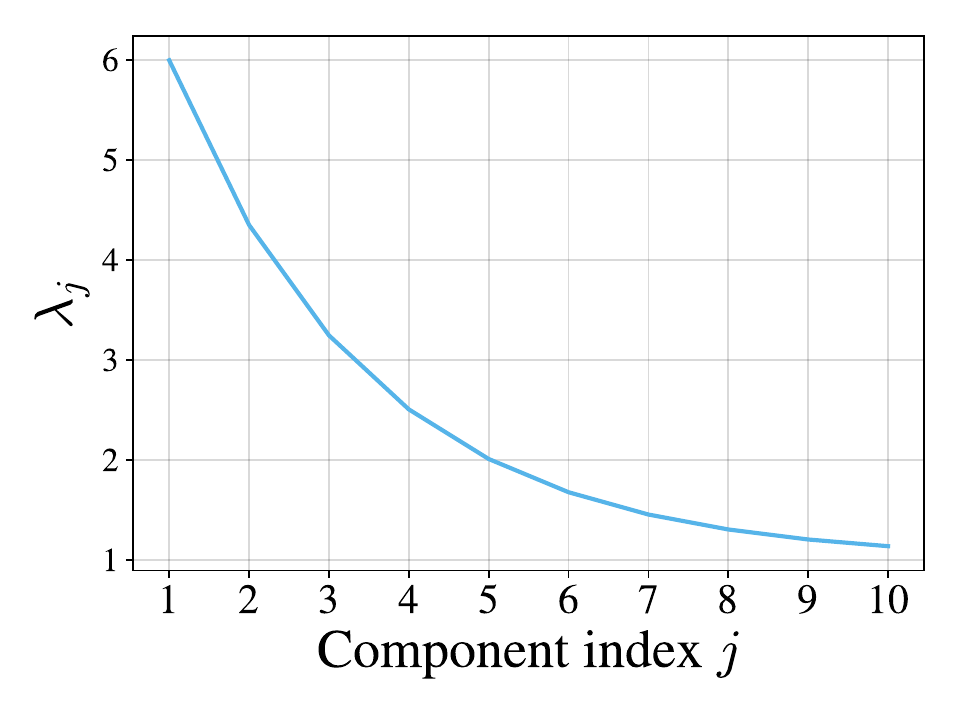}
    \caption{Decaying eigenvalue spectrum for the multi-component recovery experiment.}
    \label{app:fig:spectrum}
\end{figure}

In the subsequent plots, we show the performance for different values of $\delta$ when we let it be a fixed fraction of $\delta_{\max}$ across all components. That is, we let $\delta_j = C \delta_{\mathrm{max},j}$ for different $C\leq 1$. We highlight that for $C=0 \Rightarrow \delta_j = 0 \; \forall j \in [k]$, we recover PCA as described in Section~\ref{sec:closed-formula}; this is shown in Figure~\ref{app:fig:recoverpca} specifically.
\begin{figure}[H]
     \centering
     \begin{subfigure}[b]{0.49\textwidth}
         \centering
         \includegraphics[width=1\textwidth]{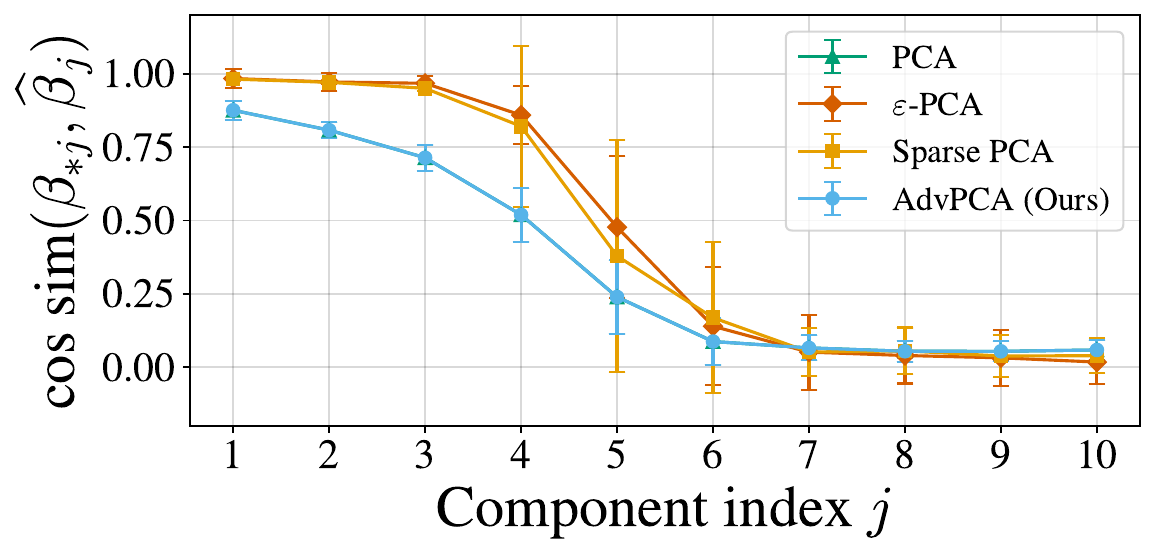}
     \end{subfigure}
     \hfill 
     \begin{subfigure}[b]{0.49\textwidth}
         \centering
         \includegraphics[width=1\textwidth]{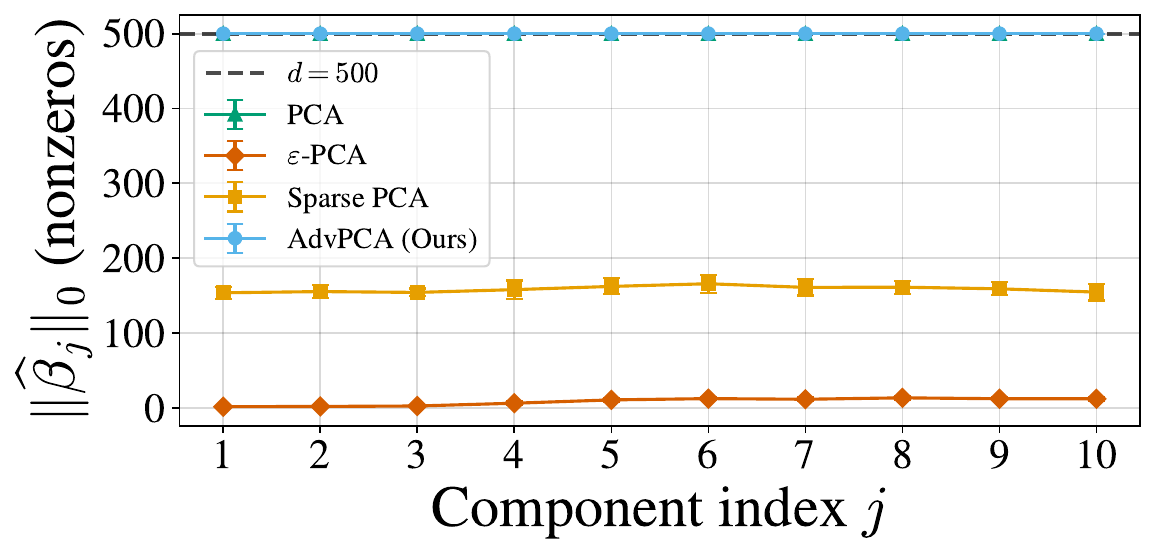}
     \end{subfigure}
     \caption{$C = 0$.}
     \label{app:fig:recoverpca}
\end{figure}
\begin{figure}[H]
     \centering
     \begin{subfigure}[b]{0.49\textwidth}
         \centering
         \includegraphics[width=1\textwidth]{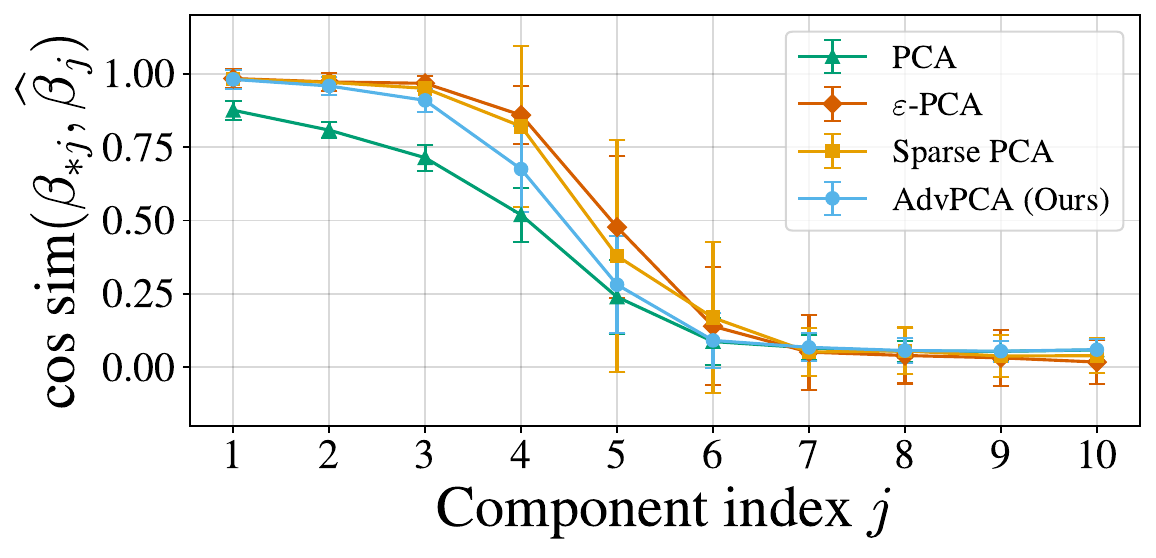}
     \end{subfigure}
     \hfill 
     \begin{subfigure}[b]{0.49\textwidth}
         \centering
         \includegraphics[width=1\textwidth]{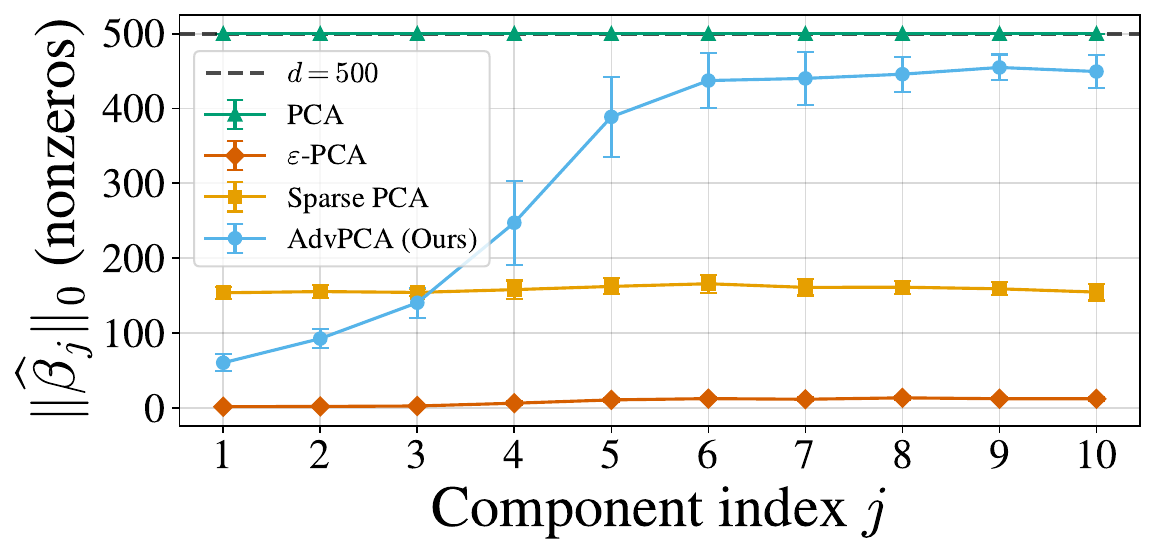}
     \end{subfigure}
     
     \caption{$C = 0.05$.}
     \label{fig:overall_comparison}
\end{figure}
\begin{figure}[H]
     \centering
     \begin{subfigure}[b]{0.49\textwidth}
         \centering
         \includegraphics[width=1\textwidth]{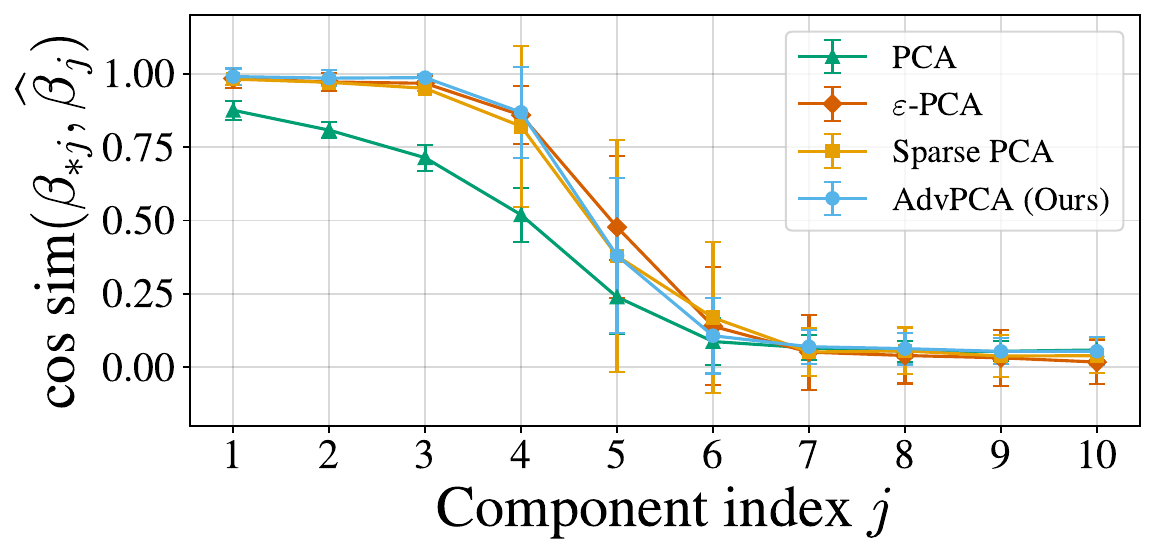}
     \end{subfigure}
     \hfill 
     \begin{subfigure}[b]{0.49\textwidth}
         \centering
         \includegraphics[width=1\textwidth]{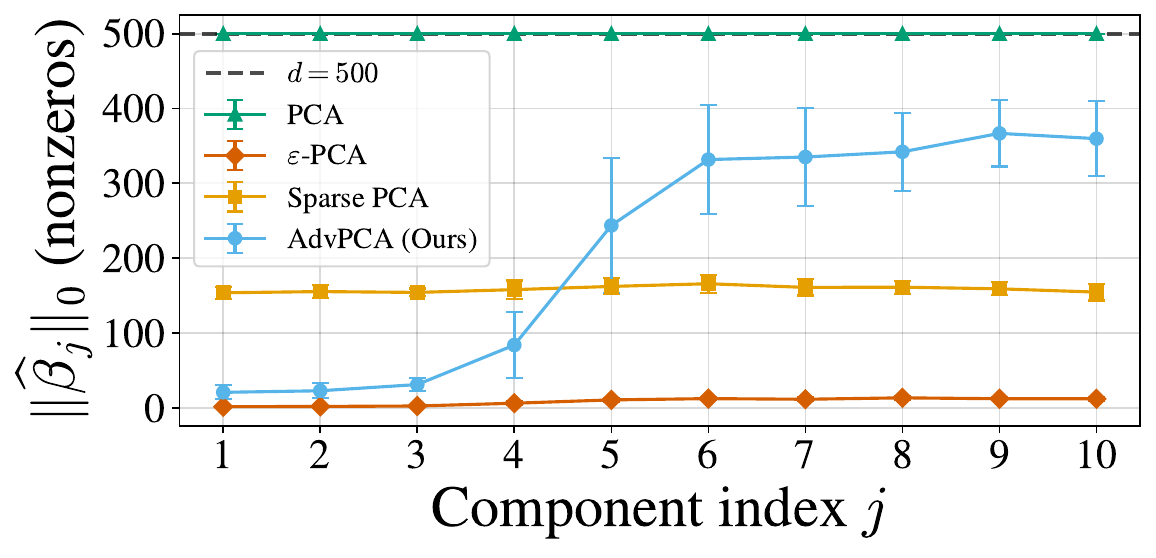}
     \end{subfigure}
     
     \caption{$C = 0.1$.}
     \label{fig:overall_comparison}
\end{figure}
\begin{figure}[H]
     \centering
     \begin{subfigure}[b]{0.49\textwidth}
         \centering
         \includegraphics[width=1\textwidth]{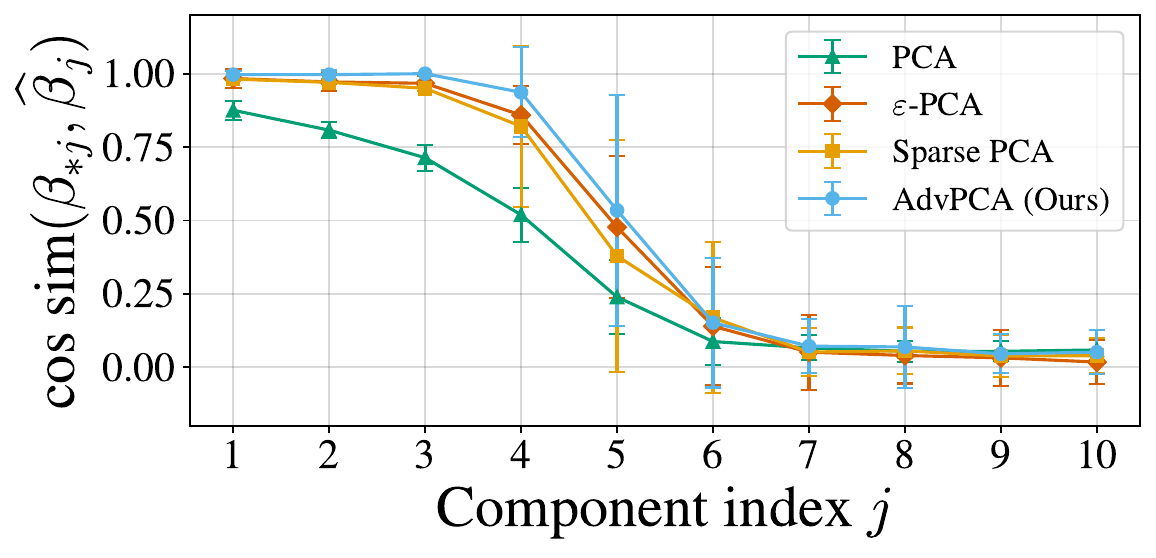}
     \end{subfigure}
     \hfill 
     \begin{subfigure}[b]{0.49\textwidth}
         \centering
         \includegraphics[width=1\textwidth]{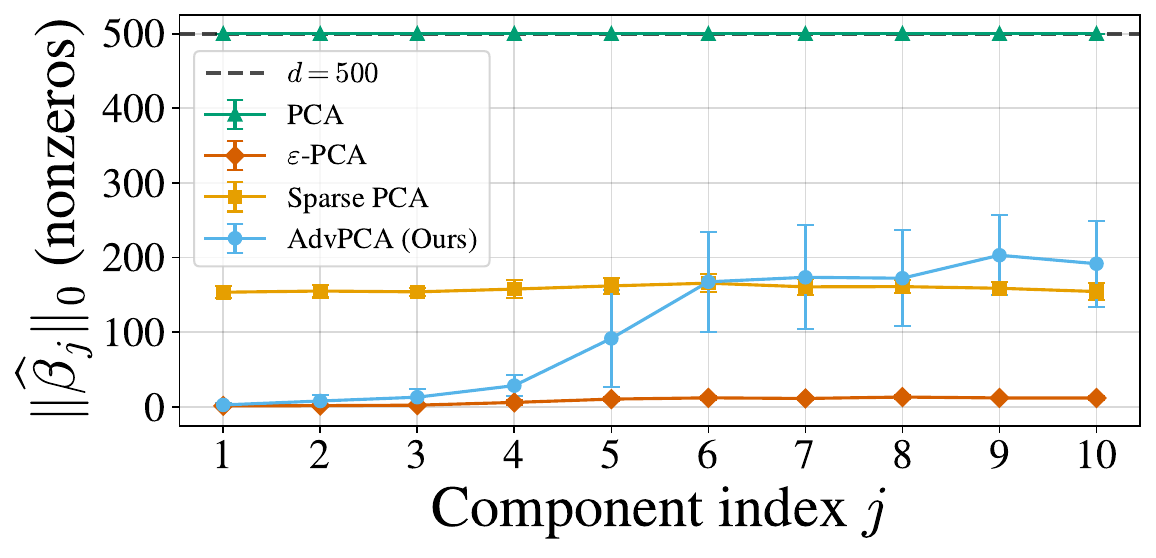}
     \end{subfigure}
     
     \caption{$C = 0.2$.}
     \label{fig:overall_comparison}
\end{figure}
\begin{figure}[H]
     \centering
     \begin{subfigure}[b]{0.49\textwidth}
         \centering
         \includegraphics[width=1\textwidth]{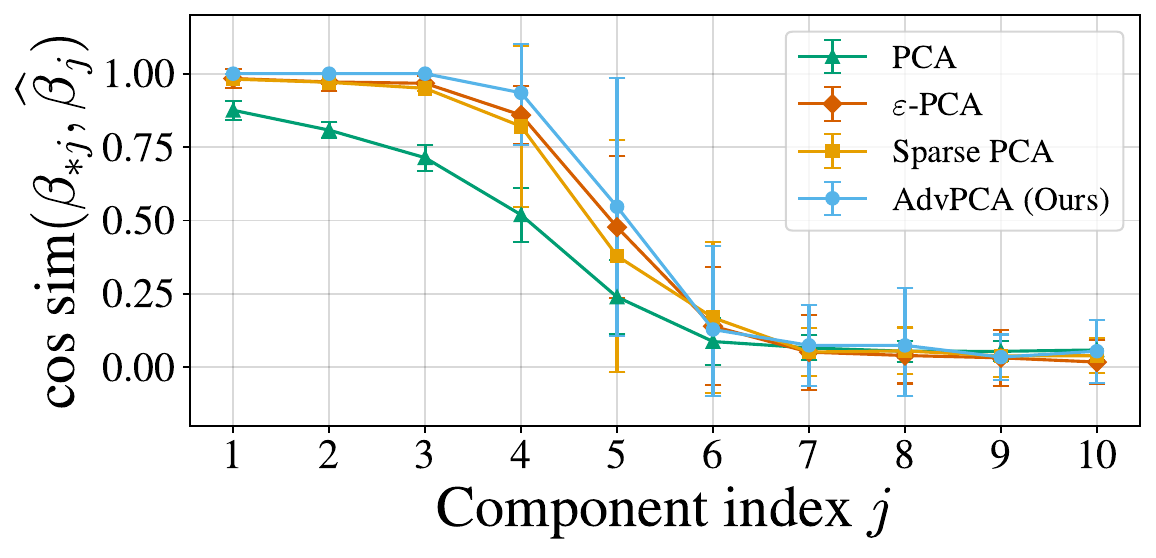}
     \end{subfigure}
     \hfill 
     \begin{subfigure}[b]{0.49\textwidth}
         \centering
         \includegraphics[width=1\textwidth]{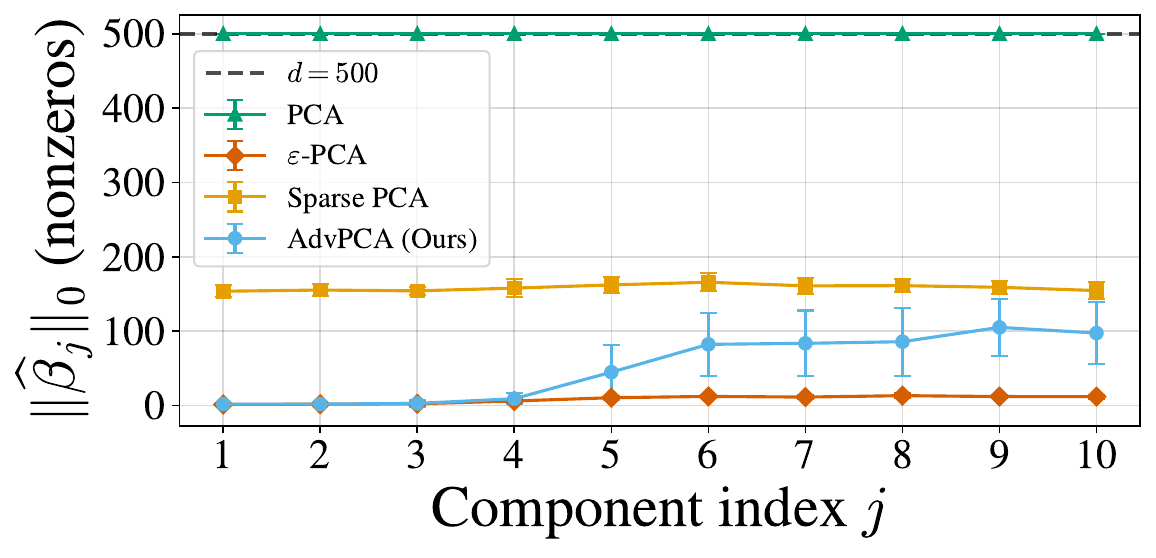}
     \end{subfigure}
     
     \caption{$C = 0.3$.}
     \label{fig:overall_comparison}
\end{figure}

\subsection{The MAGIC Diverse Wheat Dataset}\label{app:sec:magic}

In Figure~\ref{fig:magic}, we reconstruct the MAGIC diverse wheat dataset \cite{scott_limited_2021} and measure the reconstruction error $\snorm{X - \widetilde{X}}_F^2$ for the matrix $X \in \R^{n \times d}$ of wheat genome, along with the $\ell_0$-norm of the solution. The dataset contains $n=504$ samples of dimension $d=55\,067$. We do this for $k=25$ components motivated by the eigenvalue spectrum from PCA, which we present in Figure~\ref{app:fig:full_spectrum}.
\begin{figure}[H]
    \centering
    \includegraphics[width=0.9\linewidth]{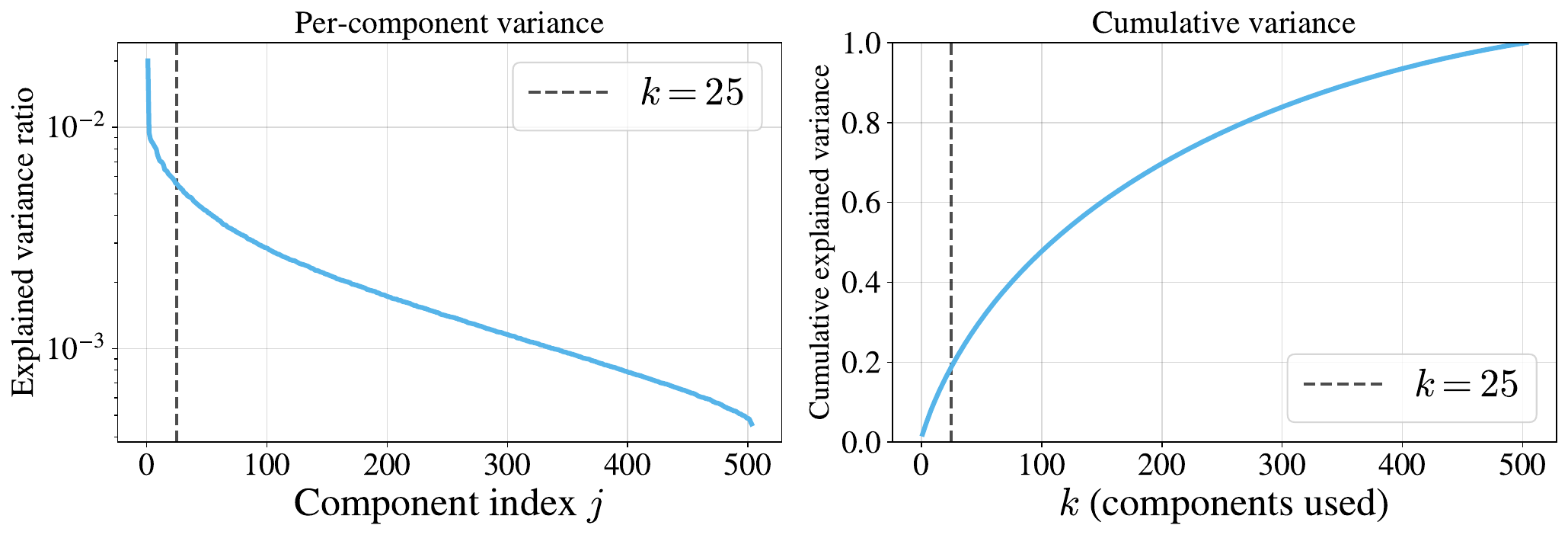}
    \caption{Eigenvalue spectrum for the MAGIC diverse wheat dataset for the full $k=n=504$ dimensions.}
    \label{app:fig:full_spectrum}
\end{figure}
We see a sharp drop in per-component variance in the first \raisebox{0.2ex}{\scriptsize $\sim$}10 components, and after that the spectrum flattens out indicating that even though the data is very high-dimensional, it is not low-rank. Intuitively, this could make sense seeing as we are working with genomics data which should be somewhat high-rank if evolution has been successful, but this is a very hand-wavy argument which we have no citation to back. Either way, based on the spectrum in Figure~\ref{app:fig:full_spectrum}, we let $k=25$ for the reconstruction experiment as this captures the most variation in the per-component variance. After $k=25$, the decay is rather linear. In Figure~\ref{app:fig:25_spectrum}, we show the exact spectrum for the $k=25$ components used in Figure~\ref{fig:magic}.
\begin{figure}[H]
    \centering
    \includegraphics[width=0.9\linewidth]{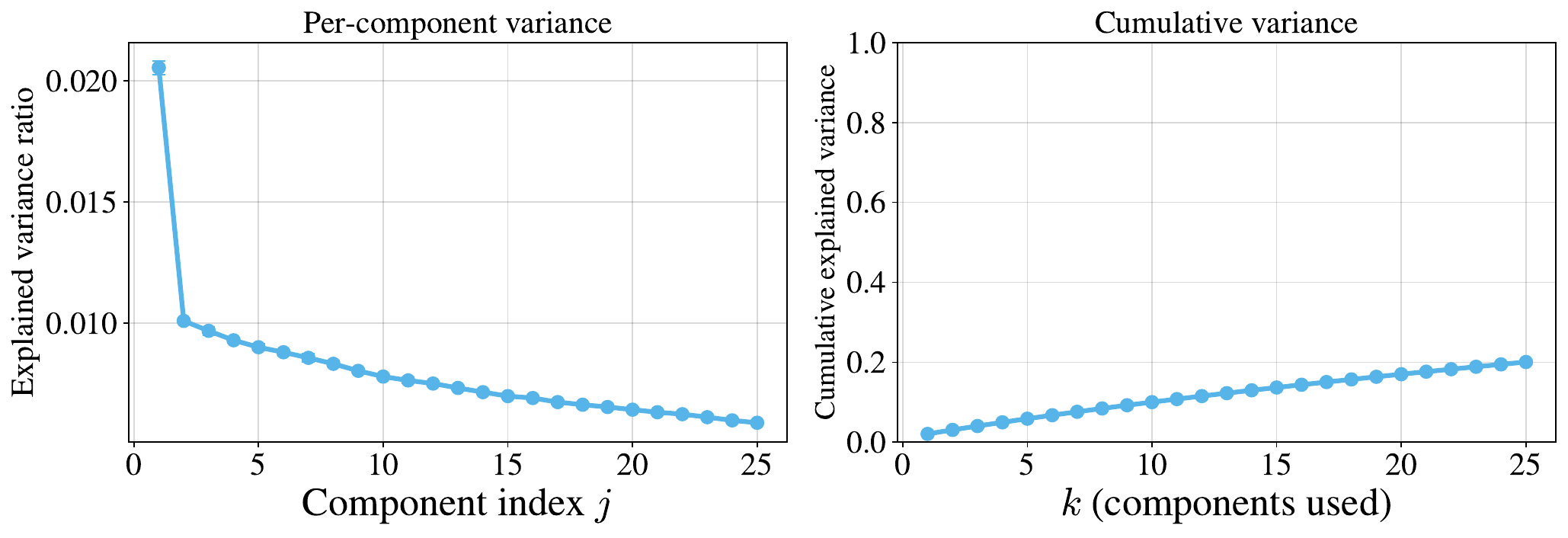}
    \caption{Eigenvalue spectrum for the $k=25$ components used in the MAGIC experiment presented in Figure~\ref{fig:magic}. This is essentially a zoomed-in version of Figure~\ref{app:fig:full_spectrum}.}
    \label{app:fig:25_spectrum}
\end{figure}
In addition to the $\ell_0$-norm, we also present the $\ell_1$-norm of the solution.
\begin{figure}[H]
    \centering
    \includegraphics[width=0.6\linewidth]{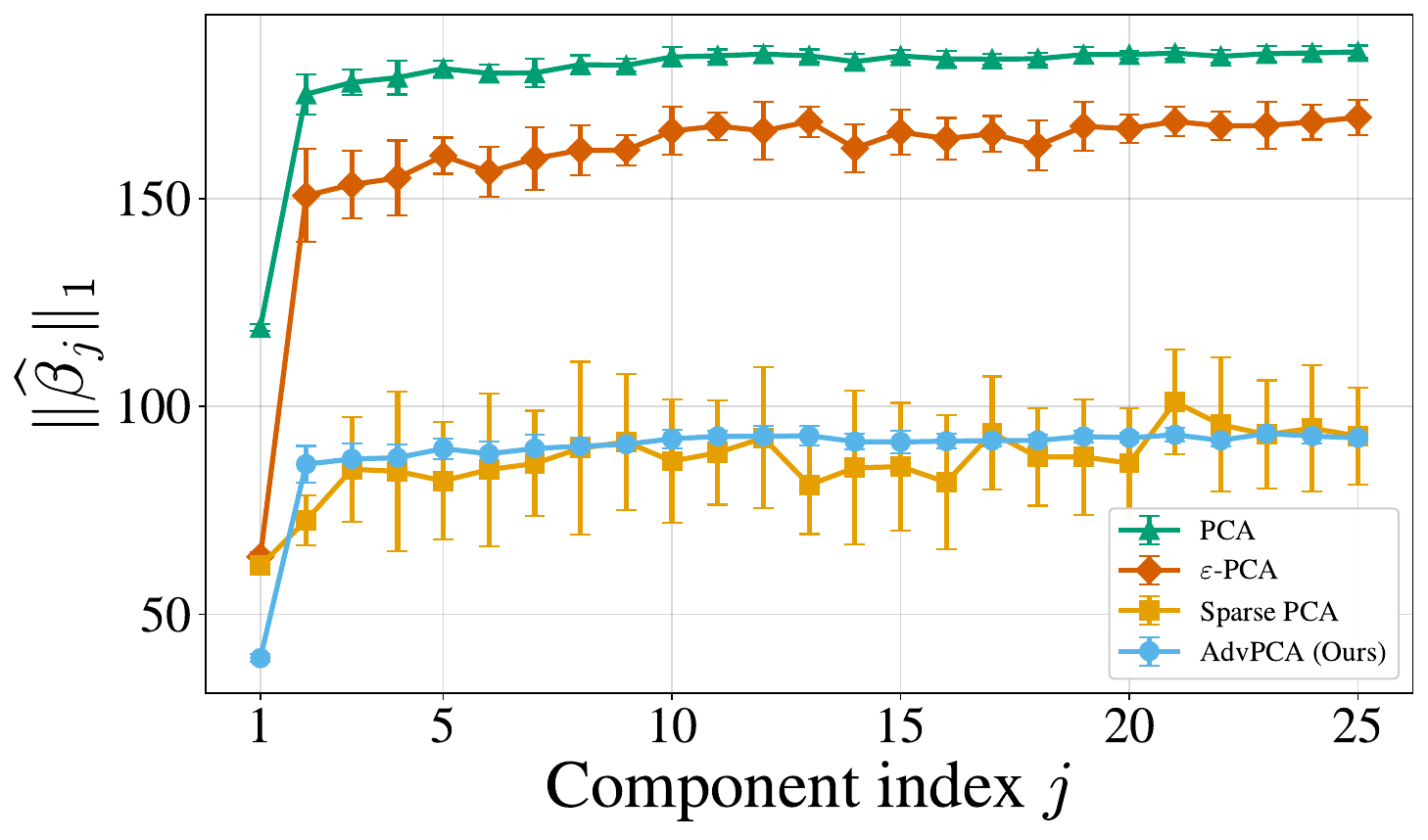}
    \caption{Corresponding $\ell_1$-norm of the solution in the MAGIC experiment presented in Figure~\ref{fig:magic}.}
    \label{fig:placeholder}
\end{figure}


\end{document}